\date{}
\title{Beyond Lipschitz: Sharp Generalization and\\ Excess Risk Bounds for Full-Batch GD}
\newtheorem{assumption}{Assumption}
\newcommand{\qedwhite}{\hfill \ensuremath{\Box}}
\DeclarePairedDelimiter\floor{\lfloor}{\rfloor}
\DeclarePairedDelimiter\ceil{\lceil}{\rceil}
\newcommand\numberthis{\addtocounter{equation}{1}\tag{\theequation}}
\newcommand{\lp}{\left(}
\newcommand{\rpp}{\right)}
\newcommand{\pisi}{\pi_{S^{(i)}}}
\newcommand{\pis}{\pi_{S}}
\newcommand{\mc}{\mathcal}
\newcommand{\mbb}{\mathbb}
\newcommand{\cn}{\textcolor{red}{[\raisebox{-0.2ex}{\tiny\shortstack{citation\\[-1ex]needed}}]}}
\newcommand{\kn}[1]{\textcolor{blue}{\textbf{KN:} #1}}
\newcommand\redout{\bgroup\markoverwith{\textcolor{red}{\rule[.5ex]{2pt}{0.4pt}}}\ULon}
\renewcommand{\P}{\mbb{P}}
\newcommand{\E}{\mbb{E}}
\newcommand{\eps}{\epsilon}
\newcommand{\Vast}{\bBigg@{5}}
\newcommand{\epscap}{\epsilon_\mathbf{c}}
\newcommand{\gloo}{\gamma_{\textit{loo}}}
\newcommand{\eterm}{\eps_\mathrm{opt}}
\newcommand{\epath}{\eps_{\mathrm{path}}}
\DeclarePairedDelimiterX{\inp}[2]{\langle}{\rangle}{#1, #2}
\newcolumntype{P}[1]{>{\centering\arraybackslash}p{#1}}
\newcolumntype{M}[1]{>{\centering\arraybackslash}m{#1}}
\newcommand{\BlackBox}{\rule{1.5ex}{1.5ex}}  
\newenvironment{proof}{\par\noindent{\bf Proof\ }}{\hfill\BlackBox\\[2mm]}
\newtheorem{theorem}{Theorem}
\newtheorem{lemma}[theorem]{Lemma} 
\newtheorem{remark}[theorem]{Remark}
\newtheorem{corollary}[theorem]{Corollary}
\newtheorem{definition}[theorem]{Definition}
\begin{document}
\maketitle
\begin{center}
\vspace{-1.5cm}
\begin{tabular}{>{\centering}p{0.4\textwidth}>{\centering}p{0.4\textwidth}}
{Konstantinos E. Nikolakakis\textsuperscript{$\dagger$}} \\
\texttt{\small konstantinos.nikolakakis@yale.edu} &
Farzin Haddadpour \\
\texttt{\small farzin.haddadpour@yale.edu}
\vspace{0.2cm}
\tabularnewline
Amin Karbasi \\
\texttt{\small amin.karbasi@yale.edu} &
Dionysios S. Kalogerias \\
\texttt{\small dionysis.kalogerias@yale.edu}
\tabularnewline
\end{tabular}
\vspace{2.5bp}
\end{center}


\begin{abstract}
We provide sharp path-dependent generalization and excess risk guarantees for the full-batch Gradient Descent (GD) algorithm on smooth losses (possibly non-Lipschitz, possibly nonconvex). At the heart of our analysis is an upper bound on the generalization error, which implies that \textit{average output stability} and a \textit{bounded expected optimization error at termination} lead to generalization. This result shows that a small generalization error occurs along the optimization path, and allows us to bypass Lipschitz or sub-Gaussian assumptions on the loss prevalent in previous works. For nonconvex, convex, and strongly convex losses, we show the explicit dependence of the generalization error in terms of the accumulated path-dependent optimization error, terminal optimization error, number of samples, and number of iterations. For nonconvex smooth losses, we prove that full-batch GD efficiently generalizes close to any stationary point at termination, and recovers the generalization error guarantees of stochastic algorithms with fewer assumptions. For smooth convex losses, we show that the generalization error is tighter than existing bounds for SGD (up to one order of error magnitude). Consequently the excess risk matches that of SGD for quadratically less iterations. Lastly, for strongly convex smooth losses, we show that full-batch GD achieves essentially the same excess risk rate as compared with the state of the art on SGD, but with an exponentially smaller number of iterations (logarithmic in the dataset size).
\end{abstract}
\hspace{+0.5cm}\vspace{-0.1cm}\textbf{Keywords:} Full-Batch GD, Generalization Error, Smooth Nonconvex/Convex Optimization
\section{Introduction} \footnotetext{\textsuperscript{$\dagger$}{Lead \& corresponding author
}}
 Gradient based learning~\cite{lecun1998gradient} is a well established topic with a large body of literature on algorithmic generalization and optimization errors. For general smooth convex losses, optimization error guarantees have long been well-known~\cite{nesterov1998introductory}. Similarly, Absil et al.~\cite{absil2005convergence} and Lee et al.~\cite{lee2016gradient} have showed convergence of Gradient Descent (GD) to minimizers and local minima for smooth nonconvex functions. More recently, Chatterjee~\cite{chatterjee2022convergence}, Liu et al.~\cite{LIU202285} and Allen-Zhu et al.~\cite{pmlr-v97-allen-zhu19a} established global convergence of GD for deep neural networks under appropriate conditions.
 
 Generalization error analysis of stochastic training algorithms has recently gained increased attention. Hardt et al.~\cite{hardt2016train} showed uniform stability final-iterate bounds for vanilla Stochastic Gradient Descent (SGD). More recent works have developed alternative generalization error bounds with probabilistic guarantees~\cite{NEURIPS2018_05a62416,feldman2019high,madden2020high,klochkov2021stability} and data-dependent variants~\cite{kuzborskij2018data}, or under weaker assumptions such as strongly quasi-convex~\cite{gower2019sgd}, non-smooth convex~\cite{NIPS2016_8c01a759,bassily2020stability,pmlr-v119-lei20c,lei2021generalizationMP}, and pairwise losses~\cite{lei2021generalization,NEURIPS2020_f3173935}. In the nonconvex case, Yi Zhou et al.~\cite{zhou2021understanding} provide bounds that involve the on-average variance of the stochastic gradients. Generalization performance of other algorithmic variants lately gain further attention, including SGD with early momentum~\cite{ramezani2021generalization}, randomized coordinate descent~\cite{wang2021stability}, look-ahead approaches~\cite{zhou2021towards}, noise injection methods~\cite{xing2021algorithmic}, and stochastic gradient Langevin dynamics ~\cite{pensia2018generalization,mou2018generalization,li2019generalization,NEURIPS2019_05ae14d7,zhang2021stability,farghly2021time,wang2021optimizing,wang2021analyzing}.

Even though many previous works consider stochastic training algorithms and some even suggest that stochasticity may be necessary~\cite{hardt2016train,charles2018stability} for good generalization, recent empirical studies have demonstrated that deterministic algorithms can indeed generalize well; see, e.g., ~\cite{NIPS2017_a5e0ff62,geiping2021stochastic}. In fact, Hoffer et al.~\cite{NIPS2017_a5e0ff62} showed empirically that for large enough number of iterations full-batch GD generalizes comparably to SGD. Similarly, Geiping et al.~\cite{geiping2021stochastic} experimentally showed that strong generalization behavior is still observed in the absence of stochastic sampling. Such interesting empirical evidence reasonably raise the following question: ''Are there problem classes for which deterministic training generalizes more efficiently than stochastic training?''

While prior works provide extensive analysis of the generalization error and excess risk of stochastic gradient methods, tight and path-dependent generalization error and excess risk guarantees in non-stochastic training (for general smooth losses) remain unexplored. Our main purpose in this work is to theoretically establish that full-batch GD indeed generalizes efficiently for general smooth losses. While SGD appears to generalize better than full-batch GD for non-smooth and Lipschitz convex losses~\cite{bassily2020stability,amir2021never}, non-smoothness seems to be problematic for efficient algorithmic generalization. In fact, tightness analysis on non-smooth losses~\cite{bassily2020stability} shows that the generalization error bounds become vacuous for standard step-size choices. Our work shows that for general smooth losses, full-batch GD achieves either tighter stability and excess error rates (convex case), or equivalent rates (compared to SGD in the strongly convex setting) but with significantly shorter training horizon (strongly-convex objective).
%
%
\begin{table}[!t]\label{table:results}
\centering
\begin{tabular}{|>{\centering}m{3.6cm}||>{\centering}m{2.3cm}||>{\centering}m{2.2cm}||>{\centering}m{2.3cm}||>{\centering}m{2.7cm}|}
\hline 
\multicolumn{5}{|c|}{Excess Risk Upper Bounds: GD vs SGD }\tabularnewline
\hline 
Algorithm & Iterations & Interpolation &Bound & $\beta$-Smooth Loss \tabularnewline
\hline 
GD \textbf{(this work)} \\$\eta_t = 1/2\beta$  & $T=\sqrt{n}$  & No & \vspace{-10bp}
$$\! \mc{O}\lp \frac{1}{\sqrt{n}} \rpp $$\\
 & Convex  \tabularnewline
\hline 
SGD \\ $\eta_t = 1/\sqrt{T}$,~\cite{pmlr-v119-lei20c} & $T=n$  & No & \vspace{-10bp}
$$\! \mc{O}\lp \frac{1}{\sqrt{n}} \rpp $$\\
 & Convex  \tabularnewline
\hline 
GD \textbf{(this work)} \\$\eta_t = 1/2\beta$ & $T=n$ & Yes  &  \vspace{-10bp}
$$\! \mc{O}\lp \frac{1}{n} \rpp $$\\
 & Convex  \tabularnewline
\hline 
SGD\\ $\eta_t = 1/2\beta$,~\cite{pmlr-v119-lei20c} & $T=n$  & Yes & \vspace{-10bp}
$$\! \mc{O}\lp \frac{1}{n} \rpp $$\\
 & Convex  \tabularnewline
\hline 
GD \textbf{(this work)} $\eta_t =  2/(\beta+\gamma)$
& \!\!\! $T = \Theta (\log n )$ & No & \vspace{-10bp}
$$\!\! \mc{O}\lp \frac{\sqrt{\log(n)}}{n} \rpp $$ \vspace{-10bp} & $\gamma$-Strongly Convex\\(Objective) \tabularnewline
\hline 
SGD \\ $\eta_t =  2/(t+t_0)\gamma$,~\cite{pmlr-v119-lei20c}
&  $T = \Theta ( n )  $ & No & \vspace{-10bp}
$$ \mc{O}\lp \frac{1}{n} \rpp $$ \vspace{-10bp} & $\gamma$-Strongly Convex\\(Objective) \tabularnewline
\hline 
\end{tabular}
\caption{Comparison of the excess risk bounds for the full-batch GD and SGD algorithms by Yunwen Lei and Yiming Ying.~\cite[Corollary 5 \& Theorem 11]{pmlr-v119-lei20c}. We denote by $n$ the number of samples, $T$ the total number of iterations, $\eta_t$ the step size at time $t$, and by $\epscap \triangleq \E [R_S(W^*_S)]$ the interpolation error.  
\label{table_results}}
\end{table}
\begin{table}[!t]\label{table:results_excess}
\centering
\begin{tabular}{|>{\centering}m{3cm}||>{\centering}m{7cm}||>{\centering}m{2cm}|}
\hline 
\multicolumn{3}{|c|}{Full-Batch Gradient Descent }\tabularnewline
\hline 
\hline 
Step Size & Excess Risk &  Loss \tabularnewline
\hline 
$\eta_t\leq c/\beta t $, $\forall c <  1$ & \vspace{-10bp}
$$\!\!\!C \frac{{  T }^{c }\sqrt{     \log( T) +1}    }{n}      + \eterm $$ \vspace{-10bp} & Nonconvex   \tabularnewline
\hline
$\eta_t = 1/2\beta$ & \vspace{-10bp}
$$\! \! C\lp \frac{T \epscap + 1}{n}+\frac{1}{T}\rpp $$\\
 & Convex  \tabularnewline
\hline 
$\eta_t = 1/2\beta$,\\ $T=\sqrt{n}$ & \vspace{-10bp}
$$ C\frac{\epscap + 1 }{\sqrt{n}} + \mc{O}\lp \frac{1}{n} \rpp  
$$\\
 & Convex  \tabularnewline
\hline 
\!\!$\eta_t =  2/(\beta+\gamma)$
& \vspace{-10bp}
$$\!\! C\lp \sqrt{\epscap}\frac{\sqrt{T}}{n}  + \exp \lp \frac{-4T\gamma}{\beta+\gamma} \rpp \rpp  +\mc{O}\lp \frac{1}{n^2} \rpp $$ \vspace{-10bp} &\!\!\! $\gamma$-Strongly Convex  \tabularnewline
\hline 
\vspace{+5bp} \!\!$\eta_t =  2/(\beta+\gamma)$\\ \vspace{+3bp}\!\!\!$T = \frac{(\beta +\gamma)\log n}{2\gamma}  $\\\vspace{-10bp}
& \vspace{-10bp}
$$  C \sqrt{\epscap}\frac{\sqrt{\log (n )}}{n} +  \mc{O}\lp \frac{1}{n^2}\rpp $$ \vspace{-10bp} &\!\!\! $\gamma$-Strongly Convex \tabularnewline
\hline 
\end{tabular}

\caption{A list of excess risk bounds for the full-batch GD up to some constant factor $C>0$. We denote the number of samples by $n$. \textquotedblleft $\eterm$\textquotedblright{} denotes the optimization error $\eterm\triangleq  \E [R_S (A(S))-R^*_S]$, $T$ is the total number of iterations and $\epsilon_\mathbf{c} \triangleq \E [R_S(W^*_S)]$ is the model capacity (interpolation) error. 
\label{table_results_excess}\vspace{-4bp}}
\end{table}
\section{Related Work and Contributions}
Let $n$ denote the number of available training samples (examples). Recent results~\cite{9384328,zhou2022understanding} on SGD provided bounds of order $\mc{O} (1/\sqrt{n})$ for Lipschitz and smooth nonconvex losses. Neu et al.~\cite{pmlr-v134-neu21a} also provided generalization bounds of order $\mc{O} (\eta T/\sqrt{n})$, with $T=\sqrt{n}$ and step-size $\eta=1/T$ to recover the rate $\mc{O}(1/\sqrt{n})$. In contrast, we show that full-batch GD generalizes efficiently for appropriate choices of decreasing learning rate that guarantees faster convergence and smaller generalization error, simultaneously. Additionally, the generalization error involves an intrinsic dependence on the set of the stationary points and the initial point. Specifically, we show that full-batch GD with the decreasing learning rate choice of $\eta_t = 1/2\beta t$ achieves tighter bounds of the order $\mc{O}(\sqrt{T \log(T)}/n)$ (since $\sqrt{T \log(T)}/n \leq 1/\sqrt{n}$) for any $T\leq n/\log(n)$. In fact, $\mc{O}(\sqrt{T \log(T)}/n)$ essentially matches the rates in prior works~\cite{hardt2016train} for smooth and Lipschitz (and often bounded) loss, however we assume only smoothness at the expense of the $\sqrt{\log (T)}$ term.   
Further, for convex losses we show that full-batch GD attains tighter generalization error and excess risk bounds than those of SGD in prior works~\cite{pmlr-v119-lei20c}, or similar rates in comparison with prior works that consider additional assumptions (Lipschitz or sub-Gaussian loss)~\cite{hardt2016train,neu2022generalization,kozachkov2022generalization}.  In fact, for convex losses and for a fixed step-size $\eta_t = 1/2\beta$, we show generalization error bounds of order $\mc{O}(T/n)$ for non-Lipschitz losses, while SGD bounds in prior work are of order $\mc{O}(\sqrt{T/n} )$~\cite{pmlr-v119-lei20c}. As a consequence, full-batch GD attains improved generalization error rates by one order of error magnitude and appears to be more stable in the non-Lipschitz case, however tightness guarantees for non-Lipschitz losses remains an open problem. 

Our results also establish that full-batch GD provably achieves efficient excess error rates through fewer number of iterations, as compared with the state-of-the-art excess error guarantees for SGD. Specifically, for convex losses with limited model capacity (non-interpolation), we show that with constant step size and $T=\sqrt{n}$, the excess risk is of the order $\mc{O}(1/\sqrt{n})$, while the SGD algorithm requires $T=n$ to achieve excess risk of the order $\mc{O}(1/\sqrt{n})$~\cite[Corollary 5.a]{pmlr-v119-lei20c}.

For $\gamma$-strongly convex objectives, our analysis for full-batch GD relies on a \textit{leave-one-out} $\gloo$-strong convexity of the objective instead of the full loss function being strongly-convex. This property relaxes strong convexity, while it provides stability and generalization error guarantees that recover the convex loss setting when $\gloo\rightarrow 0$. Prior work~\cite[Section 6,  Stability with Relaxed Strong Convexity]{pmlr-v119-lei20c} requires a Lipschitz loss, while the corresponding bound becomes infinity when $\gamma \rightarrow 0$, in contrast to the leave-one-out approach. Further, prior guarantees on SGD~\cite[Theorem 11 and Theorem 12]{pmlr-v119-lei20c} often achieve the same rate of $\mc{O}(1/n)$, however with $T=\Theta(n)$ iterations (and a Lipschitz loss), in contrast with our full-batch GD bound that requires only $T=\Theta(\log n)$ iterations (at the expense of a $\sqrt{\log (n)}$ term)\footnote{SGD naturally requires less computation than GD. However, the directional step of GD can be evaluated in parallel. As a consequence, for a strongly-convex objective GD would be more efficient than SGD (in terms of running time) if some parallel computation is available.}. Finally, our approach does not require a projection step (in contrast to~\cite{hardt2016train,pmlr-v119-lei20c}) in the update rule and consequently avoids dependencies on possibly large Lipschitz constants. 

In summary, we show that for smooth nonconvex, convex and strongly convex losses, full-batch GD generalizes, which provides an explanation of its good empirical performance in practice~\cite{NIPS2017_a5e0ff62,geiping2021stochastic}. We refer the reader to Table \ref{table_results} for an overview and comparison of our excess risk bounds and those of prior work (on SGD). A more detailed presentation of the bounds appears in Table \ref{table_results_excess} (see also Appendix \ref{appendix_tables}, Table \ref{table_results_app} and Table \ref{table_results_excess_app}).
\section{Problem Statement}
Let $f(w,z)$ be the loss at the point $w\in\mbb{R}^d$ for some example $z\in\mc{Z}$. Given a dataset $S\triangleq \{z_i\}^n_{i=1}$ of i.i.d samples $z_i$ from an unknown distribution $\mc{D}$, our goal is to find the parameters $w^*$ of a learning model such that $ w^* \in \arg \min_{w} R(w)$, where $R(w)\triangleq \E_{Z\sim \mc{D}} [ f(w,Z) ]$ and $R^*\triangleq  R(w^* ) $.
%
%
Since the distribution $\mc{D}$ is not known, we consider the empirical risk 
\begin{align}
    R_S (w) \triangleq \frac{1}{n} \sum^n_{i=1} f(w ; z_i).
\end{align} 
The corresponding empirical risk minimization (ERM) problem is to find $W^*_S \in \arg \min_{w} R_S (w)$ (assuming minimizers on data exist for simplicity) and we define $R^*_S \triangleq R_S (W^*_S)$. For a deterministic algorithm $A$ with input $S$ and output $A(S)$, the excess risk $\epsilon_{\text{excess}}$ is bounded by the sum of the generalization error $\epsilon_{\mathrm{gen}}$ and the optimization error $\epsilon_{\mathrm{opt}}$~\cite[Lemma 5.1]{hardt2016train},~\cite{doi:10.1137/20M1326428}  \begin{align}
   \epsilon_{\mathrm{excess}}\triangleq\E [R(A(S))]  - R^* &= \E [R(A(S)) -R_S (A(S)) ] + \E [R_S (A(S))] - R^* \nonumber\\
   &\leq \underbrace{\E [R(A(S)) -R_S (A(S)) ]}_{\epsilon_{\mathrm{gen}}} + \underbrace{\E [R_S (A(S))] - \E[R_S(W_S^*)]}_{\epsilon_{\mathrm{opt}}}. \label{eq:excess_decomposition}
\end{align} 
For the rest of the paper we assume that the loss is smooth and non-negative. These are the only globally required assumptions on the loss function.
\begin{assumption}[$\beta$-Smooth Loss]\label{ass:smooth}
The gradient of the loss function is $\beta$-Lipschitz \begin{align}
   \Vert \nabla_w f( w, z) - \nabla_u f( u, z) \Vert_2 \leq \beta \Vert w- u \Vert_2, \quad \forall z\in\mc{Z}.
\end{align}
\end{assumption}
%
%
%
Additionally, we define the interpolation error that will also appear in our results.
\begin{definition}[Model Capacity/Interpolation Error] \label{ass:interpolation}
Define $\epsilon_\mathbf{c} \triangleq \E [R_S(W^*_S)]$.  
\end{definition} In general $\epscap \geq 0$ (non-negative loss). If the model has sufficiently large capacity, then for almost every $S\in\mbb{Z}^n$, it is true that $R_S(W^*_S)=0$. Equivalently, it holds that $\epscap = 0$.
In the next section we provide a general theorem for the generalization error that holds for any symmetric deterministic algorithm (e.g. full-batch gradient descent) and any smooth loss under memorization of the data-set. 
\section{Symmetric Algorithm and Smooth Loss}
Consider the i.i.d random variables $z_1,z_2,\ldots,z_n , z'_1,z'_2,\ldots,z'_n$, with respect to an unknown distribution $\mc{D}$, the sets $S\triangleq (z_1,z_2,\ldots,z_n)$ and $S^{(i)}\triangleq (z_1,z_2,\ldots,z'_i,\ldots,z_n)$ that differ at the $i^{\mathrm{th}}$ random element. Recall that an algorithm is symmetric if the output remains unchanged under permutations of the input vector. Then~\cite[Lemma 7]{bousquet2002stability} shows that for any $i\in \{1,\ldots,n\}$ and any symmetric deterministic algorithm $A$ the generalization error is $ \eps_{\mathrm{gen}} = \E_{S^{(i)},z_i} [ f(A(S^{(i)}); z_i) - f(A(S); z_i) ]$. Identically, we write $ \eps_{\mathrm{gen}} = \E [ f(A(S^{(i)}); z_i) - f(A(S); z_i) ]$, where the expectation is over the random variables $z_1,\ldots,z_n , z'_1,\ldots ,z'_n$ for the rest of the paper. We define the model parameters $W_t,W^{(i)}_t$ evaluated at time $t$ with corresponding inputs $S$ and $S^{(i)}$. 
For brevity, we also provide the next definition.  \begin{definition}\label{def:exp_def}
We define the \textup{expected output stability} as $\eps_{\mathrm{stab}(A)}\triangleq \E[ \Vert A(S) - A(S^{(i)}) \Vert^2_2 ]$ and the \textup{expected optimization error} as $ \eterm \triangleq \E [ R_S(A(S) ) - R_S(W^*_S) ] $.
\end{definition}
We continue by providing an upper bound that connects the generalization error with the expected output stability  
and the expected optimization error 
at the final iterate of the algorithm.
\begin{theorem}[Generalization Error]\label{lemma:generic_algo_extended}
Let $f(\cdot\, ; z) $ be non-negative $\beta$-smooth loss for any $z\in\mc{Z}$. For any symmetric deterministic algorithm $A(\cdot)$ 
the generalization error is bounded as 
\begin{align}
    |\eps_{\mathrm{gen}}| &\leq  2\sqrt{ 2\beta (  \eterm + \epscap ) \eps_{\mathrm{stab}(A)}    }  + 2\beta \eps_{\mathrm{stab}(A)}, \label{eq:firts_bound_gen_extended}
\end{align} where $\eps_{\mathrm{stab}(A)}\triangleq \E[ \Vert A(S) - A(S^{(i)}) \Vert^2_2 ] $. In the limited model capacity case it is true that $\epscap $ is positive (and independent of $n$ and $T$) and $|\eps_{\mathrm{gen}}| = \mc{O}( \sqrt{ \eps_{\mathrm{stab}(A)}    }  )$.
\end{theorem}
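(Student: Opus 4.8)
The plan is to combine two elementary consequences of Assumption~\ref{ass:smooth} with the stability representation of the generalization error recalled in the excerpt. The first ingredient is the \emph{self-bounding} property of non-negative smooth functions: applying the descent inequality to $f(\cdot\,;z)$ at the point $w-\tfrac1\beta\nabla f(w;z)$ and using $f\ge 0$ gives $\Vert\nabla f(w;z)\Vert_2^2\le 2\beta\, f(w;z)$ --- this is the only place the non-negativity of the loss is used. The second ingredient is the two-sided quadratic bound from $\beta$-smoothness, $|f(y;z)-f(x;z)-\langle\nabla f(x;z),y-x\rangle|\le\tfrac\beta2\Vert y-x\Vert_2^2$.

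Starting from $\eps_{\mathrm{gen}}=\E[f(A(S^{(i)});z_i)-f(A(S);z_i)]$, I would apply the quadratic bound with $y=A(S^{(i)})$, $x=A(S)$, which gives
\[
|\eps_{\mathrm{gen}}|\le \E\big|\langle \nabla f(A(S);z_i),\,A(S^{(i)})-A(S)\rangle\big|+\tfrac{\beta}{2}\,\eps_{\mathrm{stab}(A)} .
\]
Then Cauchy--Schwarz inside the expectation, followed by Cauchy--Schwarz for the product of expectations ($\E[XY]\le\sqrt{\E X^2}\sqrt{\E Y^2}$), bounds the first term by $\sqrt{\E\Vert\nabla f(A(S);z_i)\Vert_2^2}\,\sqrt{\eps_{\mathrm{stab}(A)}}$.

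It remains to control $\E\Vert\nabla f(A(S);z_i)\Vert_2^2$. By self-bounding this is at most $2\beta\,\E[f(A(S);z_i)]$, and since $A$ is symmetric and $z_1,\dots,z_n$ are exchangeable, $\E[f(A(S);z_i)]$ is the same for every $i$, hence equals its average $\E[R_S(A(S))]=\E[R_S(A(S))-R_S(W^*_S)]+\E[R_S(W^*_S)]=\eterm+\epscap$. Substituting yields $|\eps_{\mathrm{gen}}|\le\sqrt{2\beta(\eterm+\epscap)\,\eps_{\mathrm{stab}(A)}}+\tfrac\beta2\,\eps_{\mathrm{stab}(A)}$, which already implies \eqref{eq:firts_bound_gen_extended}; the looser constants $2$ and $2\beta$ in the statement accommodate the more symmetric variant in which one expands the difference around $A(S)$ \emph{and} around $A(S^{(i)})$ and averages, which additionally uses $\E\Vert\nabla f(A(S^{(i)});z'_i)\Vert_2^2\le 2\beta\,\E[f(A(S^{(i)});z'_i)]=2\beta(\eterm+\epscap)$ --- valid because $z'_i\in S^{(i)}$, so the same exchangeability argument applies.

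The step I expect to be the main obstacle is making the bound two-sided cleanly. If one expands $f(A(S);z_i)-f(A(S^{(i)});z_i)$ around $A(S^{(i)})$ one encounters $\E[f(A(S^{(i)});z_i)]$ with $z_i\notin S^{(i)}$, which is the \emph{population} risk $\E[R(A(S^{(i)}))]=\eps_{\mathrm{gen}}+\eterm+\epscap$ rather than an empirical quantity; closing this loop requires either a short case split on the sign of $\eps_{\mathrm{gen}}$ together with $\eterm+\epscap\ge 0$, or the exchangeability identity $\E[f(A(S^{(i)});z_i)]=\E[f(A(S);z'_i)]$. The quadratic-bound-around-$A(S)$ route above sidesteps the issue entirely, which is why I would adopt it. Finally, the closing assertion of the theorem is immediate from \eqref{eq:firts_bound_gen_extended}: in the limited-capacity regime $\epscap>0$ is a fixed constant independent of $n$ and $T$, and with $\eterm$ bounded along the optimization path one has $\eterm+\epscap=\mc{O}(1)$; since $\eps_{\mathrm{stab}(A)}\to 0$, the $\sqrt{\eps_{\mathrm{stab}(A)}}$ term dominates the linear one, giving $|\eps_{\mathrm{gen}}|=\mc{O}(\sqrt{\eps_{\mathrm{stab}(A)}})$.
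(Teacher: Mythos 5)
Your proof is correct, and it reaches the target inequality by a route that differs from the paper's in how the two-sidedness is handled. The paper bounds $\eps_{\mathrm{gen}}$ and $-\eps_{\mathrm{gen}}$ separately: the forward direction is exactly your argument, but for the reverse direction it expands the smoothness inequality around $A(S^{(i)})$, which produces the term $\E[\Vert\nabla f(A(S^{(i)});z_i)\Vert_2^2]$ with $z_i\notin S^{(i)}$; it then controls this by inserting $\pm\nabla f(A(S);z_i)$ and invoking $\beta$-Lipschitzness of the gradient, at the cost of the extra factors that become the constants $2\sqrt{2\beta(\cdot)}$ and $2\beta$ in \eqref{eq:firts_bound_gen_extended}. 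Your single two-sided quadratic bound
\[
|f(y;z)-f(x;z)-\langle\nabla f(x;z),y-x\rangle|\le\tfrac{\beta}{2}\Vert y-x\Vert_2^2 ,
\]
expanded only at $A(S)$, sidesteps that detour entirely and yields the strictly tighter estimate $|\eps_{\mathrm{gen}}|\le\sqrt{2\beta(\eterm+\epscap)\,\eps_{\mathrm{stab}(A)}}+\tfrac{\beta}{2}\eps_{\mathrm{stab}(A)}$, which of course implies the stated bound. The remaining ingredients — the exchangeability identity $\E[f(A(S);z_i)]=\E[R_S(A(S))]=\eterm+\epscap$ and the self-bounding property — coincide with the paper's (the paper uses the weaker constant $\Vert\nabla f\Vert_2^2\le 4\beta f$ from its cited lemma, whereas your direct derivation gives $2\beta$; either suffices). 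Your diagnosis of why the naive expansion around $A(S^{(i)})$ is awkward, and your justification of the closing $\mc{O}(\sqrt{\eps_{\mathrm{stab}(A)}})$ claim, are both accurate.
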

 We provide the proof of Theorem \ref{lemma:generic_algo_extended} in Appendix \ref{APP:proof_of_Theorem_4}. The generalization error bound in \eqref{eq:firts_bound_gen_extended} holds for any symmetric algorithm and smooth loss. Theorem \ref{lemma:generic_algo_extended} consist the tightest variant of~\cite[Theorem 2, b)]{pmlr-v119-lei20c} and shows that the expected output stability and a small expected optimization error at termination sufficiently provide an upper bound on the generalization error for smooth (possibly non-Lipschitz) losses. Further, the optimization error term $\eterm$ is always bounded and goes to zero (with specific known rates) in the cases of (strongly) convex losses. Under the interpolation condition the generalization error bound satisfies tighter bounds.
\begin{corollary}[Generalization under Memorization]
 If memorization of the training set is feasible under sufficiently large model capacity, then $\epscap=0$ and consequently
$ |\eps_{\mathrm{gen}}| \leq  2\sqrt{ 2\beta \eterm \eps_{\mathrm{stab}(A)}    }  + 2\beta \eps_{\mathrm{stab}(A)}$ and $|\eps_{\mathrm{gen}}| = \mc{O} (  \max \{ \sqrt{ \eterm \eps_{\mathrm{stab}(A)}    }   , \eps_{\mathrm{stab}(A)} \}   )$.
\end{corollary}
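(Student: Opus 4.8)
The plan is to prove the generalization bound \eqref{eq:firts_bound_gen_extended} by exploiting the \emph{self-bounding} property of non-negative smooth functions, which is the standard device that lets one trade a Lipschitz assumption for a smoothness assumption. Recall that if $g\colon \mbb{R}^d \to \mbb{R}_{\ge 0}$ is $\beta$-smooth, then $\Vert \nabla g(w) \Vert_2^2 \le 2\beta\, g(w)$ for every $w$ (this follows by applying the descent lemma at the point $w - \tfrac{1}{\beta}\nabla g(w)$ and using non-negativity of $g$ at that point). Starting from the symmetric-algorithm identity stated in the excerpt, $\eps_{\mathrm{gen}} = \E[\, f(A(S^{(i)}); z_i) - f(A(S); z_i)\,]$, I would first use $\beta$-smoothness of $f(\cdot\,;z_i)$ to bound the difference of losses at the two output points by a first-order term plus a quadratic term:
\begin{align}
\big| f(A(S^{(i)}); z_i) - f(A(S); z_i) \big| \le \big\Vert \nabla_w f(A(S); z_i) \big\Vert_2 \, \big\Vert A(S^{(i)}) - A(S) \big\Vert_2 + \tfrac{\beta}{2} \big\Vert A(S^{(i)}) - A(S) \big\Vert_2^2. \nonumber
\end{align}

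Next I would take expectations and handle the two terms separately. The quadratic term is immediately $\tfrac{\beta}{2}\,\eps_{\mathrm{stab}(A)}$, which after the final bookkeeping contributes the $2\beta\,\eps_{\mathrm{stab}(A)}$ summand (the constant is loose but harmless). For the cross term I would apply Cauchy--Schwarz in $\E[\cdot]$ to split it as $\sqrt{\E\big[\Vert \nabla_w f(A(S);z_i)\Vert_2^2\big]}\cdot\sqrt{\E\big[\Vert A(S^{(i)})-A(S)\Vert_2^2\big]} = \sqrt{\E\big[\Vert \nabla_w f(A(S);z_i)\Vert_2^2\big]}\cdot\sqrt{\eps_{\mathrm{stab}(A)}}$. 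Now invoke the self-bounding inequality pointwise: $\Vert \nabla_w f(A(S);z_i)\Vert_2^2 \le 2\beta\, f(A(S);z_i)$, so that $\E\big[\Vert \nabla_w f(A(S);z_i)\Vert_2^2\big] \le 2\beta\, \E[f(A(S);z_i)]$. Since $z_i$ is one of the training points for $S$ and $A$ is symmetric, $\E[f(A(S);z_i)] = \E[R_S(A(S))]$, and by Definition \ref{def:exp_def} together with Definition \ref{ass:interpolation} this equals $\eterm + \E[R_S(W^*_S)] = \eterm + \epscap$. Assembling the pieces gives $|\eps_{\mathrm{gen}}| \le \sqrt{2\beta(\eterm+\epscap)}\cdot\sqrt{\eps_{\mathrm{stab}(A)}} + \tfrac{\beta}{2}\eps_{\mathrm{stab}(A)}$, which is exactly of the claimed form (with the stated constants $2$ rather than $1$ and $1/2$, reflecting a slightly more conservative application of smoothness, e.g.\ bounding $|f(A(S^{(i)});z_i)-f(A(S);z_i)|$ symmetrically using gradients at \emph{both} endpoints). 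The final asymptotic claim $|\eps_{\mathrm{gen}}| = \mc{O}(\sqrt{\eps_{\mathrm{stab}(A)}})$ in the positive-capacity regime is then immediate since $\eterm$ is bounded and $\epscap$ is a positive constant independent of $n,T$, so the square-root term dominates as $\eps_{\mathrm{stab}(A)}\to 0$.

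The main obstacle — really the only delicate point — is justifying the identification $\E[f(A(S);z_i)] = \E[R_S(A(S))]$ and, relatedly, making sure the self-bounding inequality is applied at a point ($A(S)$ or $A(S^{(i)})$) whose loss we can actually control in expectation. Using the gradient at $A(S)$ is the right choice because $z_i \in S$, so $f(A(S);z_i)$ averages (over the symmetric randomness and the exchangeable samples) to the empirical risk $R_S(A(S))$; had we instead expanded around $A(S^{(i)})$, the relevant quantity would be $f(A(S^{(i)}); z_i)$ where $z_i\notin S^{(i)}$, which is a population-type quantity and not directly $\eterm+\epscap$. One must also be careful that the pointwise self-bounding bound holds deterministically for every realization (it does, since it is a consequence of $\beta$-smoothness and $f\ge 0$ alone, with no convexity needed), so that taking expectations afterwards is legitimate. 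Everything else is Cauchy--Schwarz and the triangle inequality; the corollary then follows by setting $\epscap = 0$ and reading off the two regimes $\sqrt{\eterm\eps_{\mathrm{stab}(A)}}$ versus $\eps_{\mathrm{stab}(A)}$ depending on which dominates.
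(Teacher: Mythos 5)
Your proposal is correct, and it is essentially the paper's own route: the corollary is just Theorem \ref{lemma:generic_algo_extended} with $\epscap=0$, and you re-derive that theorem using the same three ingredients the paper uses — the smoothness expansion around $A(S)$, Cauchy--Schwarz on the cross term, the self-bounding property $\Vert\nabla f\Vert_2^2\lesssim\beta f$ for non-negative smooth losses, and the exchangeability identity $\E[f(A(S);z_i)]=\E[R_S(A(S))]=\eterm+\epscap$. The one genuine (minor) difference is that you bound $|f(A(S^{(i)});z_i)-f(A(S);z_i)|$ directly via the two-sided smoothness inequality at $A(S)$, whereas the paper bounds $\eps_{\mathrm{gen}}$ and $-\eps_{\mathrm{gen}}$ separately and must convert $\Vert\nabla f(A(S^{(i)});z_i)\Vert_2^2$ back to $\Vert\nabla f(A(S);z_i)\Vert_2^2$ through an extra smoothness step — your version avoids that detour and in fact yields slightly sharper constants than the stated bound, which it therefore implies.
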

For a small number of iterations $T$ the above error rate is equivalent with Theorem \ref{lemma:generic_algo_extended}. For sufficiently large $T$ the optimization error rate matches the expected output stability and provides a tighter rate (with respect to that of Theorem \ref{lemma:generic_algo_extended}) of $|\epsilon_{\mathrm{gen}}| = \mc{O}(  \epsilon_{\mathrm{stab}(A)}  )$.
%
   %
   %
   %
  \paragraph{Remark.}\!\!\!\! As a byproduct of Theorem \ref{lemma:generic_algo_extended}, one can show generalization and excess risk bounds for a uniformly $\mu$-PL objective~\cite{karimi2016linear} 
  defined as $\E [\Vert \nabla R_S (w) \Vert^2_2] \geq 2\mu \E [R_S (w) - R^*_S ]$ for all $w\in\mbb{R}^d$. Let $\pi_{S}\triangleq \pi(A(S))$ be the projection of the point $A(S)$ to the set of the minimizers of $R_{S}(\cdot)$. Further, define the constant $\tilde{c} \triangleq \E [ R_{S} (\pis)  + R(\pis)]$. Then a bound on the excess risk is (the proof appears in Appendix \ref{Proof_PL})
       \begin{align}
  \epsilon_{\mathrm{excess}} &\leq  \frac{8\beta\sqrt{ \tilde{c} }}{n\mu}   \sqrt{ \eterm + \epscap } +\frac{8\sqrt{2\beta \eterm \epscap }}{\sqrt{\mu}} + \frac{16\tilde{c}\beta^2 }{n^2 \mu^2}   +  \frac{45 \beta }{\mu} \eterm. \label{remark_excess_gen_algo}
\end{align}
 We note that a closely related bound has been shown in~\cite{lei2020sharper}. In fact,~\cite[Therem 7]{lei2020sharper} requires simultaneously the interpolation error to be zero ($\epscap =0$) and an additional assumption, namely the inequality $\beta\leq n \mu /4$, to hold. However, if $\beta\leq n \mu /4$ and $\epscap =0$ (interpolation assumption), then~\cite[inequality (B.13), Proof of Theorem 1]{lei2020sharper} implies that ($\E [R (\pi_S)]\leq 3 \E [ R_S (\pi_S)]$ and) the expected \textit{population} risk at $\pi_S$ is zero, i.e., $\E [R (\pi_S)]=0$. Such a situation is apparently trivial since the population risk is zero at the empirical minimizer $\pi_S \in \arg \min R_S (\cdot)$.\footnote{In general, this occurs when $n\rightarrow \infty$, and the generalization error is zero. As a consequence, the excess risk becomes equals to the optimization error (see also ~\cite[Therem 7]{lei2020sharper}), and the analysis becomes not interesting from generalization error prospective.} On the other hand, if $\beta > n\mu /4$, these bounds become vacuous. A PL condition is interesting under the interpolation regime and since the PL is not uniform (with respect to the data-set) in practice~\cite{LIU202285}, it is reasonable to consider similar bounds to that of \ref{remark_excess_gen_algo} as trivial.  
\section{Full-Batch GD}
In this section, we derive generalization error and excess risk bounds for the full-batch GD algorithm. We start by providing the definition of the expected path error $\epath$, in addition to the optimization error $\epsilon_{\mathrm{opt}}$. These quantities will prominently appear in our analysis and results.
\begin{definition}[Path Error]\label{def:epath_eterm}
For any $\beta$-smooth (possibly) nonconvex loss, learning rate $\eta_t$, and for any $i\in\{1,\ldots,n\}$, we define the \textup{expected path error} as \begin{align}
    \epath \triangleq \sum^T_{t=1} \eta_t \E [ \Vert \nabla f(W_t ,z_{i}) \Vert^2_2 ].
\end{align}
\end{definition}
 The $\epath$ term expresses the path-dependent quantity that appears in the generalization bounds in our results\footnote{Recall that the initial point $W_1$ may be chosen arbitrarily and uniformly over the dataset.}. Additionally, as we show, the generalization error also depends on the average optimization error $\epsilon_{\mathrm{opt}}$ (Theorem \ref{lemma:generic_algo_extended}). A consequence of the dependence on $\eterm$, is that full-batch GD generalizes when it reaches the neighborhoods of the loss minima. 
Essentially, the expected path error and optimization error replace bounds in prior works~\cite{hardt2016train,kozachkov2022generalization} that require a Lipschitz loss assumption to upper bound the gradients and substitute the Lipschitz constant with tighter quantities. 
Later we show the dependence of the expected output stability term in Theorem \ref{lemma:generic_algo_extended} with respect to the expected path error. Then through explicit rates for both $\epath$ and $\eterm$ we characterize the generalization error and excess risk.
\subsection{Nonconvex Loss}\label{nonconvex section}
We proceed with the average output stability and generalization error bounds for nonconvex smooth losses. Through a stability error bound, the next result connects Theorem \ref{lemma:generic_algo_extended} with the expected path error and the corresponding learning rate. Then we use that expression to derive generalization error bounds for the full-batch GD in the case of nonconvex losses.
\begin{theorem}[Stability Error --- Nonconvex Loss]\label{thm:stab_nonconvex}
Assume that the (possibly) nonconvex loss $f(\cdot, z)$ is $\beta$-smooth for all $z\in \mc{Z}$. Consider the full-batch GD where $T$ denotes the total number of iterates and $\eta_t$ denotes the learning rate, for all $t\leq T+1$. Then for the outputs of the algorithm $ W_{T+1} \equiv A(S)$, $ W^{(i)}_{T+1} \equiv A(S^{(i)})$ it is true that
\begin{align}
\epsilon_{\mathrm{stab}(A)}     \leq  \frac{4\epath }{n^2} \sum^T_{t=1} \eta_t \prod^T_{j=t+1} \lp 1 +  \beta \eta_j \rpp^2.
\end{align}
\end{theorem}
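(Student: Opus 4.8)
The plan is to track the squared distance $\Delta_t \triangleq \E[\Vert W_t - W^{(i)}_t\Vert^2_2]$ across iterations and establish a one-step recursion. Since $S$ and $S^{(i)}$ differ only at index $i$, the GD update rules are
\begin{align}
W_{t+1} &= W_t - \frac{\eta_t}{n}\sum_{j=1}^n \nabla f(W_t, z_j), \qquad
W^{(i)}_{t+1} = W^{(i)}_t - \frac{\eta_t}{n}\Big(\sum_{j\neq i}\nabla f(W^{(i)}_t, z_j) + \nabla f(W^{(i)}_t, z'_i)\Big).\nonumber
\end{align}
Subtracting, the difference of the two iterates splits into (a) the $n-1$ shared-sample terms, each of the form $\nabla f(W_t,z_j) - \nabla f(W^{(i)}_t, z_j)$, which I bound in norm by $\beta\Vert W_t - W^{(i)}_t\Vert_2$ via Assumption \ref{ass:smooth} (smoothness), and (b) the single discrepant term $\frac{\eta_t}{n}(\nabla f(W_t,z_i) - \nabla f(W^{(i)}_t, z'_i))$, which I do \emph{not} try to contract — instead I keep it as a perturbation whose size I control by $\frac{\eta_t}{n}(\Vert\nabla f(W_t,z_i)\Vert_2 + \Vert\nabla f(W^{(i)}_t,z'_i)\Vert_2)$.

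First I would write $\Vert W_{t+1} - W^{(i)}_{t+1}\Vert_2 \le (1 + \beta\eta_t)\Vert W_t - W^{(i)}_t\Vert_2 + \frac{\eta_t}{n}\big(\Vert\nabla f(W_t,z_i)\Vert_2 + \Vert\nabla f(W^{(i)}_t,z'_i)\Vert_2\big)$, using the triangle inequality and the smoothness bound on the $n-1$ matched gradients (note the $(n-1)/n \le 1$ absorbs harmlessly). Unrolling this linear recursion from $t=1$ (with $\Delta_1 = 0$ since $W_1$ is data-independent) gives
\begin{align}
\Vert W_{T+1} - W^{(i)}_{T+1}\Vert_2 \le \sum_{t=1}^T \frac{\eta_t}{n}\Big(\Vert\nabla f(W_t,z_i)\Vert_2 + \Vert\nabla f(W^{(i)}_t,z'_i)\Vert_2\Big)\prod_{j=t+1}^T(1+\beta\eta_j).\nonumber
\end{align}
Then I would square both sides and apply Cauchy--Schwarz in the form $\big(\sum_t a_t b_t\big)^2 \le \big(\sum_t a_t\big)\big(\sum_t a_t b_t^2\big)$ with the weights $a_t = \eta_t\prod_{j>t}(1+\beta\eta_j)^2$ chosen so that, after taking expectations and using the symmetry between the $(S,z_i)$ and $(S^{(i)},z'_i)$ roles (which makes $\E[\Vert\nabla f(W^{(i)}_t,z'_i)\Vert^2_2] = \E[\Vert\nabla f(W_t,z_i)\Vert^2_2]$), the inner sum collapses to a constant multiple of $\epath/n^2$ and the outer sum produces exactly $\sum_{t=1}^T \eta_t\prod_{j=t+1}^T(1+\beta\eta_j)^2$. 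The factor $4$ in the statement comes from the two gradient terms squared together ($2$ from $(\Vert\cdot\Vert+\Vert\cdot\Vert)^2 \le 2\Vert\cdot\Vert^2 + 2\Vert\cdot\Vert^2$ bookkeeping, combined with the symmetry reduction).

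The main obstacle is choosing the right weighting in the Cauchy--Schwarz step so that the products $\prod(1+\beta\eta_j)$ appear to the correct power ($2$) in the final bound rather than to some awkward power, and so that the path error $\epath = \sum_t\eta_t\E[\Vert\nabla f(W_t,z_i)\Vert^2_2]$ (which weights each gradient-norm-squared by a \emph{single} $\eta_t$, not by $\eta_t$ times a product) emerges cleanly; one has to pair one copy of $\eta_t$ and \emph{no} product factors with the gradient term, and divert the full product $\prod(1+\beta\eta_j)^2$ plus the remaining $\eta_t$ into the outer weight. A secondary subtlety is making sure the triangle-inequality split genuinely contracts the $n-1$ matched terms by $(1+\beta\eta_t)$ and not something larger — this works because $\frac{\eta_t}{n}\cdot(n-1)\cdot\beta\Vert W_t - W^{(i)}_t\Vert_2 \le \beta\eta_t\Vert W_t - W^{(i)}_t\Vert_2$, so the coefficient is in fact $(1+\frac{n-1}{n}\beta\eta_t)\le(1+\beta\eta_t)$, which is all that is needed. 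Everything else — interchanging expectation with the finite sums, nonnegativity of all terms — is routine.
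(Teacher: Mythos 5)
Your proposal is correct and follows essentially the same route as the paper's proof: the same one-step recursion with coefficient $\lp 1+\tfrac{n-1}{n}\beta\eta_t\rpp\le\lp 1+\beta\eta_t\rpp$, the same unrolling, and a Cauchy--Schwarz weighting equivalent to the paper's split $\sum_t\lp\sqrt{\eta_t}\,g_t\rpp\lp\sqrt{\eta_t}\prod_{j>t}(1+\beta\eta_j)\rpp$. The remaining bookkeeping --- $(a+b)^2\le 2a^2+2b^2$ and the exchangeability of $(S,z_i)$ with $(S^{(i)},z'_i)$ to collapse the two gradient terms into $4\epath$ --- matches the paper exactly.
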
 
The expected output stability in Theorem \ref{thm:stab_nonconvex} is bounded by the product of the expected path error (Definition \ref{def:epath_eterm}), a sum-product term ($\sum^T_{t=1} \eta_t \prod^T_{j=t+1} \lp 1 +  \beta \eta_j \rpp^2$) that only depends on the step-size and the term $4/n^2$ that provides the dependence on the sample complexity. In light of Theorem \ref{lemma:generic_algo_extended}, and Theorem \ref{thm:stab_nonconvex}, we derive the generalization error of full-batch GD for smooth nonconvex losses. 
\begin{theorem}[Generalization Error --- Nonconvex Loss]\label{thm:gen_noncocnvex}
Assume that the loss $f(\cdot, z)$ is $\beta$-smooth for all $z\in \mc{Z}$. Consider the full-batch GD where $T$ denotes the total number of iterates, and the learning rate is chosen as $\eta_t\leq C/t \leq 1/\beta$, for all $t\leq T+1$. Let $\epsilon\triangleq \beta C < 1$ and $\bar{C}(\epsilon , T)\triangleq \min\left\{  \epsilon+ 1/2 ,\epsilon \log(eT) \right\}$. Then the generalization error of full-batch GD is bounded by
\begin{align*}
        |\epsilon_{\mathrm{gen}} |  &\leq   \frac{4\sqrt{2}}{n}\sqrt{  (\eterm +\epscap ) \epath}   {( e T )}^{\epsilon }\bar{C}^{\frac{1}{2}} (\epsilon, T)     + 8 \frac{\epath }{n^2}  {( e T)}^{2 \epsilon }\bar{C}(\epsilon , T) \\
        &\leq \frac{4\sqrt{3}{( e T )}^\epsilon}{n}\sqrt{  (\eterm  +\epscap ) \epath}          + 12 \frac{{( e T)}^{2 \epsilon } }{n^2} \epath  .\numberthis\label{eq:nonconvexpathopt}
    \end{align*}
    \end{theorem}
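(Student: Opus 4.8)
The plan is to combine the general generalization bound of Theorem~\ref{lemma:generic_algo_extended} with the stability estimate of Theorem~\ref{thm:stab_nonconvex}, and then control the step-size-dependent sum-product factor $\sum_{t=1}^{T}\eta_t\prod_{j=t+1}^{T}(1+\beta\eta_j)^2$ under the prescribed learning rate $\eta_t\le C/t$. First I would plug the stability bound
$\epsilon_{\mathrm{stab}(A)}\le \frac{4\epath}{n^2}\sum_{t=1}^{T}\eta_t\prod_{j=t+1}^{T}(1+\beta\eta_j)^2$
into $|\eps_{\mathrm{gen}}|\le 2\sqrt{2\beta(\eterm+\epscap)\eps_{\mathrm{stab}(A)}}+2\beta\eps_{\mathrm{stab}(A)}$, so that everything reduces to bounding the scalar quantity $Q\triangleq \beta\sum_{t=1}^{T}\eta_t\prod_{j=t+1}^{T}(1+\beta\eta_j)^2$ from above. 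This gives $|\eps_{\mathrm{gen}}|\le \frac{4\sqrt{2}}{n}\sqrt{(\eterm+\epscap)\epath}\,\sqrt{Q}+\frac{8\epath}{n^2}Q$, matching the claimed shape once $Q\le (eT)^{2\epsilon}\bar C(\epsilon,T)$ is established.

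The core estimate is therefore $Q\le (eT)^{2\epsilon}\bar C(\epsilon,T)$ with $\epsilon=\beta C<1$ and $\bar C(\epsilon,T)=\min\{\epsilon+1/2,\ \epsilon\log(eT)\}$. For the product factor I would use $1+\beta\eta_j\le 1+\epsilon/j\le \exp(\epsilon/j)$, so $\prod_{j=t+1}^{T}(1+\beta\eta_j)^2\le \exp\!\big(2\epsilon\sum_{j=t+1}^{T}1/j\big)\le \exp(2\epsilon\log(T/t))=(T/t)^{2\epsilon}$, using $\sum_{j=t+1}^{T}1/j\le \log(T/t)$. Then $Q\le \beta\sum_{t=1}^{T}\frac{C}{t}(T/t)^{2\epsilon}=\epsilon T^{2\epsilon}\sum_{t=1}^{T}t^{-1-2\epsilon}$. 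For the first branch of $\bar C$, bound $\sum_{t=1}^{T}t^{-1-2\epsilon}\le 1+\int_1^{T}x^{-1-2\epsilon}dx\le 1+\frac{1}{2\epsilon}$, giving $Q\le (2\epsilon+1)\,\tfrac12\,T^{2\epsilon}\le (eT)^{2\epsilon}(\epsilon+\tfrac12)$ (absorbing $T^{2\epsilon}\le (eT)^{2\epsilon}$). For the second branch, instead bound $\sum_{j=t+1}^{T}1/j\le \log(eT/t)$ more loosely or use $\prod_{j=t+1}^{T}(1+\beta\eta_j)\le eT$ directly (since the whole product telescopes to at most $\exp(\epsilon\log(eT))=(eT)^\epsilon$ when $\epsilon<1$... more carefully, $\prod_{j=1}^{T}(1+\epsilon/j)\le \exp(\epsilon(1+\log T))=(eT)^\epsilon$ wait need $\epsilon\le 1$), so $\prod_{j=t+1}^{T}(1+\beta\eta_j)^2\le (eT)^{2\epsilon}$ uniformly in $t$, whence $Q\le (eT)^{2\epsilon}\beta\sum_{t=1}^{T}\eta_t\le (eT)^{2\epsilon}\epsilon\sum_{t=1}^{T}1/t\le (eT)^{2\epsilon}\epsilon\log(eT)$. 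Taking the minimum of the two branches yields $Q\le (eT)^{2\epsilon}\bar C(\epsilon,T)$, which is exactly what the first displayed inequality needs (with $\sqrt Q\le (eT)^\epsilon\bar C^{1/2}$).

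Finally, the second, cleaner inequality in the statement follows by crudely bounding $\bar C(\epsilon,T)\le \epsilon+1/2\le 3/2$ (since $\epsilon<1$) in the stability-dominated term, giving $8\bar C\le 12$, and bounding $\sqrt{\bar C}\le \sqrt{3/2}\le\sqrt 3/\sqrt 2$ in the cross term so that $4\sqrt2\cdot\sqrt{\bar C}\le 4\sqrt2\cdot\sqrt{3}/\sqrt2=4\sqrt3$; substituting these into the first line produces $\frac{4\sqrt3(eT)^\epsilon}{n}\sqrt{(\eterm+\epscap)\epath}+\frac{12(eT)^{2\epsilon}}{n^2}\epath$. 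The only place requiring genuine care is the chain of elementary inequalities for $Q$: one must keep $\epsilon<1$ explicit (it is what makes $\exp(\epsilon(1+\log T))$ comparable to $(eT)^\epsilon$ with no extra constant, and what lets the two branches of $\bar C$ each be valid), and one must be careful that $\sum_{j=t+1}^T 1/j$ is bounded by $\log(T/t)$ and not merely $\log T$, since the sharper bound is what produces the factor $(T/t)^{2\epsilon}$ that makes the $t$-sum converge with the correct $\epsilon$-dependence. I expect this step-size bookkeeping, rather than any conceptual difficulty, to be the main obstacle; everything else is a direct substitution of the two cited theorems.
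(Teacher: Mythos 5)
Your proposal is correct and follows essentially the same route as the paper: plug the stability bound of Theorem~\ref{thm:stab_nonconvex} into Theorem~\ref{lemma:generic_algo_extended} and then bound the sum-product factor $\sum_{t=1}^{T}\eta_t\prod_{j=t+1}^{T}(1+\beta\eta_j)^2$ exactly as in the paper's Lemma~\ref{lemma:sum_prod}, with the two branches of $\bar C(\epsilon,T)$ obtained by the integral-comparison bound and the uniform product bound respectively, and the final constants $4\sqrt3$ and $12$ from $\bar C\le 3/2$. The only cosmetic difference is that you use $\sum_{j=t+1}^{T}1/j\le\log(T/t)$ where the paper uses $\log(T)+1-\log(t+1)$, which changes nothing in the result.
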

Additionally, by the definition of the expected path and optimization error, and from the descent direction of algorithm, we evaluate upper bounds on the terms $\epath$ and $\eterm$ and derive the next bound as a byproduct of Theorem \ref{thm:gen_noncocnvex}.
\begin{corollary} \label{cor_nonconvex_looser}The generalization error of full-batch GD in Theorem \ref{thm:gen_noncocnvex} can be further bounded as\begin{align*}
   |\epsilon_{\mathrm{gen}}|
   \leq\lp \frac{8\sqrt{3} }{n}\sqrt{     \log(e T)}   {( e T )}^{\epsilon }     +  \frac{  48}{n^2}\log(e T)   {( e T)}^{2 \epsilon } \rpp \E[ R_S (W_1) ] .\numberthis\label{eq:concrete_nonconvex}
\end{align*} 
\end{corollary}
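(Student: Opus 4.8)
The plan is to feed back into the bound of Theorem~\ref{thm:gen_noncocnvex} two crude but \emph{uniform} estimates for the data-dependent quantities $\epath$ and $\eterm+\epscap$, each expressed solely through $\E[R_S(W_1)]$. The starting point is the (looser) inequality of Theorem~\ref{thm:gen_noncocnvex},
\begin{align*}
|\epsilon_{\mathrm{gen}}| \;\le\; \frac{4\sqrt{3}\,(eT)^{\epsilon}}{n}\sqrt{(\eterm+\epscap)\,\epath}\;+\;\frac{12\,(eT)^{2\epsilon}}{n^{2}}\,\epath ,
\end{align*}
so it suffices to control $\epath$ and $\eterm+\epscap$. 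First I would handle $\eterm+\epscap$: since $W^{*}_{S}\in\arg\min R_{S}$, the two terms telescope, $\eterm+\epscap=\E[R_{S}(W_{T+1})]$, and the descent lemma for the $\beta$-smooth empirical risk $R_{S}$ with $\eta_{t}\le 1/\beta$ gives $R_{S}(W_{t+1})\le R_{S}(W_{t})-\tfrac{\eta_{t}}{2}\Vert\nabla R_{S}(W_{t})\Vert_{2}^{2}\le R_{S}(W_{t})$; hence $R_{S}$ is nonincreasing along the GD path and $\eterm+\epscap\le\E[R_{S}(W_{1})]$.

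The second, slightly less mechanical step is to dominate the path error $\epath=\sum_{t}\eta_{t}\,\E[\Vert\nabla f(W_{t},z_{i})\Vert_{2}^{2}]$. Here I would invoke the self-bounding property of non-negative $\beta$-smooth functions, $\Vert\nabla f(w,z)\Vert_{2}^{2}\le 2\beta f(w,z)$ (a one-line consequence of smoothness evaluated at $w-\tfrac1\beta\nabla f(w,z)$ together with $f\ge 0$), to pass from gradient norms to function values, and then use that full-batch GD is a symmetric algorithm applied to the exchangeable sample $(z_{1},\dots,z_{n})$, so that $\E[f(W_{t},z_{i})]=\tfrac1n\sum_{j=1}^{n}\E[f(W_{t},z_{j})]=\E[R_{S}(W_{t})]\le\E[R_{S}(W_{1})]$ for every $t$. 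Combining these with $\sum_{t=1}^{T}\eta_{t}\le C\sum_{t=1}^{T}\tfrac1t\le C\log(eT)$ and $\epsilon=\beta C<1$ yields $\epath\le 2\beta\,\E[R_{S}(W_{1})]\sum_{t}\eta_{t}\le 2\epsilon\log(eT)\,\E[R_{S}(W_{1})]\le 2\log(eT)\,\E[R_{S}(W_{1})]$.

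Plugging both estimates back in finishes the proof: $\sqrt{(\eterm+\epscap)\epath}\le\sqrt{2\log(eT)}\,\E[R_{S}(W_{1})]$, so the first term is at most $\tfrac{4\sqrt{6}\,(eT)^{\epsilon}\sqrt{\log(eT)}}{n}\E[R_{S}(W_{1})]\le\tfrac{8\sqrt{3}\,(eT)^{\epsilon}\sqrt{\log(eT)}}{n}\E[R_{S}(W_{1})]$ (using $\sqrt{6}\le 2\sqrt{3}$), while the second term is at most $\tfrac{24\,(eT)^{2\epsilon}\log(eT)}{n^{2}}\E[R_{S}(W_{1})]\le\tfrac{48\,(eT)^{2\epsilon}\log(eT)}{n^{2}}\E[R_{S}(W_{1})]$; adding the two recovers \eqref{eq:concrete_nonconvex}.

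The main obstacle is really the middle step: the path error is built from the norms of \emph{single-sample} gradients $\nabla f(W_{t},z_{i})$, and a naive bound such as $f(W_{t},z_{i})\le n\,R_{S}(W_{t})$ would cost a factor of $n$ and make the corollary vacuous. The self-bounding smoothness inequality together with the exchangeability/symmetry argument (which replaces the single-sample loss by the empirical risk at no cost) are exactly what avoid this loss; the remainder --- the descent lemma and the harmonic-sum bookkeeping --- is routine, and the final constants ($8\sqrt{3}$, $48$) are deliberately loose.
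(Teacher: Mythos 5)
Your proposal is correct and follows essentially the same route as the paper's proof: bound $\eterm+\epscap\le\E[R_S(W_1)]$ via monotone descent, bound $\epath$ by the self-bounding property of non-negative smooth losses combined with exchangeability ($\E[f(W_t,z_i)]=\E[R_S(W_t)]$), descent, and the harmonic sum $\sum_t\eta_t\le C\log(eT)$, then substitute into the bound of Theorem~\ref{thm:gen_noncocnvex}. The only cosmetic difference is that you use the tighter self-bounding constant $2\beta$ where the paper uses $4\beta$, which still lands within the stated (deliberately loose) constants $8\sqrt{3}$ and $48$.
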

 The inequality (\ref{eq:nonconvexpathopt}) in Theorem \ref{thm:gen_noncocnvex} shows the explicit dependence of the generalization error bound on the path-dependent error $\epath$ and the optimization error $\eterm$. Note that during the training process the path-dependent error increases, and the optimization error decreases. Both terms $\epath$ and $\eterm$ may be upper bounded, to find the simplified (but potentially looser) bound appeared in Corollary \ref{cor_nonconvex_looser}. We prove Theorem \ref{thm:stab_nonconvex}, Theorem \ref{thm:gen_noncocnvex} and Corollary \ref{cor_nonconvex_looser} in Appendix \ref{Proof_nonconvex}. Finally, the generalization error in Corollary \ref{cor_nonconvex_looser} matches bounds in prior work, including information theoretic bounds for the SGLD algorithm~\cite[Corollary 1]{wang2021analyzing} (with fixed step-size), while our results do not require the sub-Gaussian loss assumption and show that similar generalization is achievable through deterministic training.
 \paragraph{Remark.}\!\!\!\!\!\textbf{(Dependence on Stationary Points)}
 Let $W_1$ be an arbitrary initial point (independent of $S$). Under mild assumptions (provided in~\cite{lee2016gradient}) GD convergences to (local) minimizers. Let $W^*_{S, W_1}$ be the stationary point such $\lim_{T\uparrow \infty} A (S) \rightarrow W^*_{S, W_1} $. Then through the smoothness of the loss, we derive an alternative form of the generalization error bound in Theorem \ref{lemma:generic_algo_extended} that expresses the dependence of the generalization error with respect to the quality of the set of stationary points, i.e.,
\begin{align}
    |\epsilon_{\mathrm{gen}}| 
    \leq  4\sqrt{ \beta \lp \beta  \E[\Vert A(S) - W^*_{S, W_1}  \Vert^2_2 ]  + \E [ R_S (  W^*_{S, W_1} )  ] \rpp \epsilon_{\mathrm{stab}(A)}    }  + 2\beta \epsilon_{\mathrm{stab}(A)}\label{eq:remark2}
\end{align} Inequality (\ref{eq:remark2}) provides a detailed bound that depends on the expected loss at the stationary point and the expected distance of the output from the stationary point, namely $\E[\Vert A(S) - W^*_{S, W_1}  \Vert^2_2 ]$.
\subsection{Convex Loss}
In this section, we provide generalization error guarantees for GD on smooth convex losses. Starting from the stability of the output of the algorithm, we show that the dependence on the learning rate is weaker than that of the nonconvex case. That dependence and the fast convergence to the minimum guarantee tighter generalization error bounds than the general case of nonconvex losses in Section \ref{nonconvex section}. 
The generalization error and the corresponding optimization error bounds provide an excess risk bound through the error decomposition (\ref{eq:excess_decomposition}). We refer the reader to Table \ref{table_results_excess} for a summary of the excess risk guarantees. We continue by providing the stability bound for convex losses.
\begin{theorem}[Stability Error --- Convex Loss]\label{thm:stability_convex}
Assume that the convex loss $f(\cdot, z)$ is $\beta$-smooth for all $z\in \mc{Z}$. Consider the full-batch GD where $T$ denotes the total number of iterates and $\eta_t \leq 1/2\beta$ learning rate, for all $t\leq T+1$. Then for outputs of the algorithm $ W_{T+1} \equiv A(S)$, $\! W^{(i)}_{T+1} \equiv A(S^{(i)})$ it is true that
\begin{align}
    \epsilon_{\mathrm{stab}(A)}\leq  \frac{ 4\epath }{n^2}     \sum^{T}_{t=1} \eta_t \leq   \frac{32\beta\sum^{T}_{t=1} \eta_t}{n^2}\lp  \E [\Vert W_1 - W^*_S \Vert^2_2 ] +  \epscap \sum^{T}_{t=1} \eta_t \rpp  
     \label{eq:out_stab_convex}
\end{align}
\end{theorem}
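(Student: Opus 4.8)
The plan is to track the evolution of the squared distance $\delta_t \triangleq \Vert W_t - W_t^{(i)} \Vert_2^2$ between the two GD trajectories run on $S$ and $S^{(i)}$, exploiting co-coercivity of the gradient for $\beta$-smooth convex losses. Recall that the two updates differ only in the $i$-th summand of the empirical gradient: for $j \neq i$ the gradient map $w \mapsto w - \eta_t \nabla f(w; z_j)$ is a contraction (nonexpansive) since $f(\cdot;z_j)$ is convex and $\beta$-smooth and $\eta_t \leq 1/2\beta \leq 2/\beta$; the only discrepancy comes from the single index $i$, where $W_t$ sees $\nabla f(\cdot; z_i)$ and $W_t^{(i)}$ sees $\nabla f(\cdot; z_i')$. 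Writing the update as an average over the $n$ coordinates, I would peel off the $i$-th term and bound
\[
\Vert W_{t+1} - W_{t+1}^{(i)} \Vert_2 \;\leq\; \Vert W_t - W_t^{(i)} \Vert_2 \;+\; \frac{\eta_t}{n}\Big( \Vert \nabla f(W_t; z_i) \Vert_2 + \Vert \nabla f(W_t^{(i)}; z_i') \Vert_2 \Big),
\]
using the triangle inequality together with nonexpansiveness of the $n-1$ shared maps and convexity-averaging. Unrolling from $t=1$ (where $\delta_1 = 0$ since $W_1$ is independent of the differing sample) and squaring, with Cauchy--Schwarz over the $T$ terms, produces a bound of the form $\delta_{T+1} \lesssim \frac{1}{n^2}\big(\sum_t \eta_t\big)\big(\sum_t \eta_t \Vert \nabla f(W_t;z_i)\Vert_2^2 + \text{cross terms}\big)$; taking expectations and symmetrizing the roles of $z_i, z_i'$ (they are i.i.d.) collapses the cross terms and yields exactly $\eps_{\mathrm{stab}(A)} \leq \frac{4\epath}{n^2}\sum_{t=1}^T \eta_t$, recovering the first inequality in \eqref{eq:out_stab_convex}.

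For the second inequality I would bound the path error $\epath = \sum_t \eta_t \E[\Vert \nabla f(W_t; z_i)\Vert_2^2]$ using the self-bounding property of nonnegative $\beta$-smooth functions, $\Vert \nabla f(w;z)\Vert_2^2 \leq 2\beta f(w;z)$, so that $\epath \leq 2\beta \sum_t \eta_t \E[f(W_t;z_i)] = 2\beta \sum_t \eta_t \E[R_S(W_t)] \cdot(\text{after averaging }i)$ — more precisely, averaging over $i$ and using that $W_t$ is the GD iterate on the full $S$, this becomes $2\beta \sum_t \eta_t \E[R_S(W_t)]$. Then I would invoke the standard descent/regret estimate for GD on the convex empirical objective: for $\eta_t \leq 1/2\beta$ the iterates satisfy $R_S(W_t) \leq R_S(W_t) $ and, summing the one-step progress inequality $\eta_t(R_S(W_t) - R_S^*) \leq \frac12(\Vert W_t - W_S^*\Vert^2 - \Vert W_{t+1} - W_S^*\Vert^2)$, telescoping gives $\sum_t \eta_t \E[R_S(W_t) - R_S^*] \leq \frac12 \E[\Vert W_1 - W_S^*\Vert_2^2]$. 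Adding back $\sum_t \eta_t \E[R_S^*] = \epscap \sum_t \eta_t$ and plugging into the self-bounding bound produces $\epath \leq 2\beta(\E[\Vert W_1 - W_S^*\Vert_2^2] + \epscap\sum_t\eta_t)$; substituting into $\frac{4\epath}{n^2}\sum_t\eta_t$ gives the claimed $\frac{32\beta\sum_t\eta_t}{n^2}(\E[\Vert W_1 - W_S^*\Vert_2^2] + \epscap\sum_t\eta_t)$.

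The main obstacle I anticipate is the bookkeeping in the first inequality: making the ``peel off one coordinate'' argument rigorous requires carefully handling that the $n-1$ shared gradient-descent maps are evaluated at the \emph{same} point in neither trajectory (both $W_t$ and $W_t^{(i)}$ feed into their own averaged update), so one must argue that the averaged map $w \mapsto \frac1n\sum_{j\neq i}(w - \eta_t\nabla f(w;z_j))$ plus the odd term is still nonexpansive in the shared part — this is where convexity plus $\eta_t \leq 1/2\beta$ is essential, and where a naive triangle-inequality split could lose the crucial $1/n$ factor on the perturbation. A secondary subtlety is the Cauchy--Schwarz step that converts $\big(\sum_t \eta_t a_t\big)^2$-type quantities (with $a_t = \Vert\nabla f(W_t;z_i)\Vert_2$) into $\big(\sum_t\eta_t\big)\big(\sum_t\eta_t a_t^2\big)$, which is what manufactures the factored form; getting the constant $4$ rather than something larger requires using the symmetry between $z_i$ and $z_i'$ at the right moment rather than bounding both gradient norms separately.
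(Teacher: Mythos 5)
Your proposal follows essentially the same route as the paper: the same nonexpansiveness (co-coercivity) argument for the $n-1$ shared gradient maps, the same unrolled recursion with the $\frac{\eta_t}{n}(\Vert\nabla f(W_t;z_i)\Vert_2+\Vert\nabla f(W_t^{(i)};z_i')\Vert_2)$ perturbation, the same Cauchy--Schwarz factoring and exchangeability step to reach $\frac{4\epath}{n^2}\sum_t\eta_t$, and the same self-bounding-plus-telescoping bound on $\epath$ for the second inequality. The only differences are in constants (you use the sharper self-bounding constant $2\beta$ where the paper uses $4\beta$, so your intermediate bound is tighter but still implies the stated $32\beta$ factor), and these do not affect correctness.
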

%
%
In the convex case, the expected output stability (inequality \ref{eq:out_stab_convex}) is bounded by the product of the expected path error, the number of samples term $2/n^2$ and the accumulated learning rate. 
The inequality (\ref{eq:out_stab_convex}) gives $\epsilon_{\mathrm{stab}(A)} =  \mc{O} (  ( \sum^{T}_{t=1} \eta_t /n )^2 )$ and through Theorem \ref{lemma:generic_algo_extended} we find $|\eps_{\text{gen}}|= \mc{O} ( \sum^{T}_{t=1} \eta_t /n)$. In contrast, stability guarantees for the SGD and non-Lipschitz losses in prior work~\cite[Theorem 3, (4.4)]{pmlr-v119-lei20c} give $\epsilon_{\mathrm{stab}(A)} =  \mc{O} \lp  \sum^{T}_{t=1} \eta^2_t /n  \rpp$ and $|\eps_{\text{gen}}|= \mc{O} ( \sqrt{\sum^{T}_{t=1} \eta^2_t /n})$. As a consequence, GD guarantees are tighter than existing bounds of the SGD for non-Lipschitz losses, a variety of learning rates and $T\leq n$. For instance, for fixed $\eta_t = 1/\sqrt{T}$, the generalization error bound of GD is $|\eps_{\text{gen}}|= \mc{O} (\sqrt{T}/n )$ which is tighter than the corresponding bound of SGD, namely $|\eps_{\text{gen}}|= \mc{O} (1/\sqrt{n} )$. Further, GD applies for much larger learning rates ($\eta_t = 1/\beta$), which provide not only tighter generalization error bound but also tighter excess risk guarantees than SGD as we later show. By combining Theorem \ref{lemma:generic_algo_extended} and Theorem \ref{thm:stability_convex}, we show the next generalization error bound. 
\begin{theorem}[Generalization Error — Convex Loss]\label{thm:gen_convex}
Let the loss function $f(\cdot, z)$ be convex and $\beta$-smooth for all $z\in \mc{Z}$. Consider the full-batch GD where $T$ denotes the total number of iterates. We chose the learning rate such that $\eta_t \leq 1/2\beta$, for all $t\leq T +1$. Then the generalization error of full-batch GD is bounded by
\begin{align}
    |\eps_{\mathrm{gen}}|\leq \frac{4\sqrt{2\beta\lp \eterm + \epscap \rpp   \epath}}{n}\sqrt{        \sum^{T}_{t=1} \eta_t   }  + 8\beta \frac{ \epath }{n^2}     \sum^{T}_{t=1} \eta_t . \label{eq:gen_convex_main}
\end{align}
\end{theorem}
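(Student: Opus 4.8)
The plan is simply to combine the two results already in hand: the general-purpose bound of Theorem~\ref{lemma:generic_algo_extended}, which reduces controlling $|\eps_{\mathrm{gen}}|$ to controlling the expected output stability $\eps_{\mathrm{stab}(A)}$ together with the terminal optimization error $\eterm$, and the convex stability estimate of Theorem~\ref{thm:stability_convex}, which bounds $\eps_{\mathrm{stab}(A)}$ by the path error $\epath$ and the accumulated step sizes $\sum_{t=1}^{T}\eta_t$. The heavy lifting has therefore already been carried out upstream; what remains is a substitution and some elementary algebra.

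First I would note that full-batch GD is a symmetric deterministic algorithm, since $R_S$ is permutation-invariant in the entries of $S$ and the update rule depends on $S$ only through $R_S$ (the initialization $W_1$ being independent of $S$). Hence Theorem~\ref{lemma:generic_algo_extended} applies with $A$ equal to full-batch GD and gives
\begin{align*}
|\eps_{\mathrm{gen}}| \leq 2\sqrt{2\beta\lp\eterm+\epscap\rpp\,\eps_{\mathrm{stab}(A)}} + 2\beta\,\eps_{\mathrm{stab}(A)}.
\end{align*}
Next I would invoke Theorem~\ref{thm:stability_convex}, valid in the regime $\eta_t\leq 1/2\beta$, which yields $\eps_{\mathrm{stab}(A)}\leq \tfrac{4\epath}{n^2}\sum_{t=1}^{T}\eta_t$. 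Since both $x\mapsto\sqrt{x}$ and $x\mapsto x$ are nondecreasing, this upper bound can be inserted directly into both terms above. In the first term one pulls the factor $4/n^2$ outside the square root, turning it into $2\cdot\tfrac{2}{n}\sqrt{2\beta(\eterm+\epscap)\epath}\cdot\sqrt{\sum_{t=1}^{T}\eta_t} = \tfrac{4\sqrt{2\beta(\eterm+\epscap)\epath}}{n}\sqrt{\sum_{t=1}^{T}\eta_t}$; in the second term one gets $2\beta\cdot\tfrac{4\epath}{n^2}\sum_{t=1}^{T}\eta_t = \tfrac{8\beta\epath}{n^2}\sum_{t=1}^{T}\eta_t$. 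Adding the two pieces reproduces exactly the claimed inequality \eqref{eq:gen_convex_main}.

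I do not expect any genuine obstacle in this particular proof: it is a corollary-style assembly. The one subtlety that matters is a prerequisite rather than part of this argument, namely why the convex case escapes the multiplicative expansion factor $\prod_{j>t}(1+\beta\eta_j)^2$ appearing in the nonconvex stability bound (Theorem~\ref{thm:stab_nonconvex}) and collapses it to the far milder $\sum_{t=1}^{T}\eta_t$ — this is where co-coercivity of $\nabla f(\cdot,z)$ and the step-size restriction $\eta_t\leq 1/2\beta$ enter, and it is handled entirely within Theorem~\ref{thm:stability_convex}. Given that result, the theorem above follows by the substitution just described.
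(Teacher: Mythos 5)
Your proposal is correct and matches the paper's own proof: the paper likewise applies Theorem~\ref{lemma:generic_algo_extended} to full-batch GD and substitutes the convex stability bound $\eps_{\mathrm{stab}(A)}\leq 4\epath\sum_{t=1}^{T}\eta_t/n^2$ from Theorem~\ref{thm:stability_convex} into both terms, with the same algebra. Nothing is missing.
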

We provide the proof of Theorem \ref{thm:stability_convex} and Theorem \ref{thm:gen_convex} in Appendix \ref{proof_convex}. Similar to the nonconvex case (Theorem \ref{thm:gen_noncocnvex}), the bound in Theorem \ref{thm:gen_convex} shows the explicit dependence of the generalization error on the number of samples  $n$, the path-dependent term $\epath$, and the optimization error $\eterm$, as well as the effect of the accumulated learning rate. From the inequality (\ref{eq:gen_convex_main}), we can proceed by deriving exact bounds on the optimization error and the accumulated learning rate, to find explicit expressions of the generalization error bound. Through Theorem \ref{thm:stability_convex}, Theorem \ref{thm:gen_convex} (and Lemma \ref{lemma:opt_convex} in Appendix \ref{proof_convex}), we derive explicit generalization error bounds for certain choices of the learning rate. In fact, we consider the standard choice $\eta_t = 1/2\beta$ in the next result.
%
\begin{theorem}[Generalization/Excess Error — Convex Loss] Let the loss function $f(\cdot, z)$ be convex and $\beta$-smooth for all $z\in \mc{Z}$.  If $\eta_t = 1/2\beta$ for all $t\in\{1,\ldots,T\}$, then
\begin{align}
   |\eps_{\mathrm{gen}}| \leq  8\lp \frac{1}{n} + \frac{2T}{n^2} \rpp \lp  3\beta\E [\Vert W_1 - W^*_S \Vert^2_2 ]   + T\epscap \rpp, \label{eq:gen_con_case1}
\end{align} and \begin{align}
     \eps_{\mathrm{excess}}\leq 8\lp \frac{1}{n} + \frac{2T}{n^2} \rpp \lp  3\beta\E [\Vert W_1 - W^*_S \Vert^2_2 ]   + T\epscap \rpp+\frac{3\beta \E [\Vert W_1 - W^*_S \Vert^2_2 ]}{T}.
\end{align} 
\end{theorem}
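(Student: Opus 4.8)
The plan is to specialize the machinery already developed for convex losses --- the generic generalization bound of Theorem \ref{lemma:generic_algo_extended}, the convex stability bound of Theorem \ref{thm:stability_convex}, the convex generalization bound of Theorem \ref{thm:gen_convex}, and the standard optimization-error lemma (Lemma \ref{lemma:opt_convex} in Appendix \ref{proof_convex}) --- to the constant step size $\eta_t = 1/2\beta$, and then to carry out a short bookkeeping of constants so that every term collapses into the common factor $3\beta\,\E [\Vert W_1 - W^*_S \Vert^2_2 ] + T\epscap$. Write $D \triangleq \E [\Vert W_1 - W^*_S \Vert^2_2 ]$ for brevity.

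First I would record the two elementary consequences of $\eta_t = 1/2\beta$: the accumulated step size is $\sum_{t=1}^T \eta_t = T/2\beta$, and Lemma \ref{lemma:opt_convex}, specialized to this step size, gives the optimization-error bound $\eterm \le 3\beta D / T$ (this is exactly the residual term appearing in the excess-risk statement). Plugging $\sum_{t=1}^T \eta_t = T/2\beta$ into the second inequality of Theorem \ref{thm:stability_convex} yields the closed form $\epsilon_{\mathrm{stab}(A)} \le \tfrac{16T}{n^2} D + \tfrac{8T^2}{\beta n^2}\epscap$, and rearranging the first inequality of that theorem (since $\tfrac{4\epath}{n^2}\sum_t\eta_t \le \tfrac{32\beta\sum_t\eta_t}{n^2}(D + \epscap\sum_t\eta_t)$) gives the path bound $\epath \le 8\beta D + 4T\epscap$.

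Next I substitute $\sum_{t=1}^T\eta_t = T/2\beta$ together with the bounds on $\epath$ and $\eterm$ into Theorem \ref{thm:gen_convex}. The second term, $8\beta\epath\sum_t\eta_t/n^2 = 4T\epath/n^2$, is at most $\tfrac{16T}{n^2}(2\beta D + T\epscap) \le \tfrac{16T}{n^2}(3\beta D + T\epscap)$. For the first (square-root) term, $\tfrac{4}{n}\sqrt{T(\eterm+\epscap)\epath}$, I would multiply out $T(\eterm+\epscap)\epath \le (3\beta D + T\epscap)(8\beta D + 4T\epscap) = 4\,(6\beta^2 D^2 + 5T\beta D\epscap + T^2\epscap^2)$ and use the crude majorization $6\beta^2 D^2 + 5T\beta D\epscap + T^2\epscap^2 \le (3\beta D + T\epscap)^2$, so that the first term is at most $\tfrac{8}{n}(3\beta D + T\epscap)$. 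Summing the two pieces gives $|\eps_{\mathrm{gen}}| \le 8\lp\tfrac1n + \tfrac{2T}{n^2}\rpp(3\beta D + T\epscap)$, which is \eqref{eq:gen_con_case1}. The excess-risk bound then follows immediately from the decomposition \eqref{eq:excess_decomposition}, $\eps_{\mathrm{excess}} \le |\eps_{\mathrm{gen}}| + \eterm$, by adding $\eterm \le 3\beta D/T$. (One could equivalently route this through Theorem \ref{lemma:generic_algo_extended} using $\epsilon_{\mathrm{stab}(A)} \le \tfrac{8T}{\beta n^2}(2\beta D + T\epscap)$ directly; the arithmetic is the same.)

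The main obstacle is not conceptual but the constant-chasing: the $\epscap$ contribution enters both the stability term and the inside of the square root, and one must verify that the resulting cross terms stay dominated so that the bound factors exactly through $3\beta\,\E [\Vert W_1 - W^*_S \Vert^2_2 ] + T\epscap$ rather than leaving a stray $\epscap^2$ or $\sqrt{\epscap}$ contribution. The perfect-square majorization $6\beta^2 D^2 + 5T\beta D\epscap + T^2\epscap^2 \le (3\beta D + T\epscap)^2$ (together with $32 \le 48$ in the non-square term) is the small observation that makes the collapse clean.
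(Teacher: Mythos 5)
Your proposal is correct and follows essentially the same route as the paper: specialize the convex stability/path bounds and Lemma \ref{lemma:opt_convex} to $\eta_t=1/2\beta$, feed them into the generic generalization bound (the paper uses Theorem \ref{lemma:generic_algo_extended} directly, which is arithmetically identical to your use of Theorem \ref{thm:gen_convex}), and absorb the square-root cross terms via a $\sqrt{ab}\le a$-type majorization into the common factor $3\beta\E[\Vert W_1-W^*_S\Vert^2_2]+T\epscap$, then add $\eterm\le 3\beta\E[\Vert W_1-W^*_S\Vert^2_2]/T$ for the excess risk. Your constants are slightly looser at intermediate steps (e.g.\ $\epath\le 8\beta D+4T\epscap$ versus the paper's $4\beta D+4T\epscap$) but still collapse to the stated bound.
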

As a consequence, for $T=\sqrt{n}$ iterations the GD algorithm achieves $\eps_{\mathrm{excess}}=\mc{O} ( 1/\sqrt{n})$. In contrast, SGD requires $T=n$ number of iterations to achieve $\eps_{\mathrm{excess}}=\mc{O} ( 1/\sqrt{n})$~\cite[Corollary 5, a)]{pmlr-v119-lei20c}. However, if $\epscap = 0$, then both algorithms have the same excess risk rate of $\mc{O}(1/n)$ through longer training with $T=n$ iterations. Finally, observe that the term $\mathbb{E} [\Vert W_1 - W^*_S] \Vert^2_2 $ should be $\mc{O}(1)$ and independent of the parameters of interest (for instance $n$) to derive the aforementioned rates.
%
%
\subsection{Strongly-Convex Objective}
One common approach to enforce strong-convexity is through explicit regularization. In such a case both the objective $R_S (\cdot)$ the individual losses $f(\cdot;z)$ are strongly-convex. In other practical scenarios, the objective is often strongly-convex but the individual losses are not~\cite[Section 3]{pmlr-v80-ma18a}. In this section, we show stability and generalization error guarantees that include the above cases by assuming a $\gamma$-strongly convex objective. We also show a property of full-batch GD that requires only a \textit{leave-one-out} variant of the objective to be strongly-convex. If the objective $R_S(\cdot)$ is $\gamma$-strongly convex and the loss $f(\cdot ; z)$ is $\beta$-smooth, then the leave-one-out function $R_{S^{-i}}(w)\triangleq \sum^n_{j=1 , j\neq i} f(w;z_j)/n$ is $\gloo$-strongly convex for all $i\in \{1,\ldots ,n \}$ for some $\gloo\leq \gamma$. Although $\gloo$ is slightly smaller than $\gamma$ ($\gloo = \max\{\gamma -\beta/n , 0 \}$), our results reduce to the convex loss generalization and stability bounds when $\gloo\rightarrow 0$. Further, the faster convergence also provides tighter bounds for the excess risk (see Table \ref{table_results_excess}). 
\begin{theorem}[Stability Error --- Strongly Convex Loss]\label{thm:stability_strongly_convex}
Assume that the loss $f(\cdot, z)$ is $\beta$-smooth for all $z\in \mc{Z}$ and that $R_{S}(\cdot)$ is $\gamma$-strongly convex. Consider the full-batch GD where $T$ denotes the total number of iterates and $\eta_t \leq 2/(\beta+\gamma)$ denotes the learning rate, for all $t\leq T$. Then for outputs of the algorithm $ W_{T+1}\equiv A(S)$, $ W^{(i)}_{T+1}\equiv A(S^{(i)})$ it is true that
\begin{align}
     \epsilon_{\mathrm{stab}(A)} &\leq \frac{4 \epath }{n^2} \sum^{T}_{t=1} \eta_t \prod^T_{j=t+1}\lp 1- \eta_j \gloo \rpp     .
\end{align} Specifically, if $\eta_t = 2/(\beta+\gamma)$, then  \begin{align}
     \epsilon_{\mathrm{stab}(A)}  \leq \frac{4 \epath }{ n^2} \min\left\{ \frac{1}{\gloo} , \frac{2T}{\beta} \right\}.
\end{align}
\end{theorem}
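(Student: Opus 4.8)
The plan is to track the iterate gap $\delta_t \triangleq W_t - W^{(i)}_t$ (with $\delta_1 = 0$, since the initialization is independent of the data) through a scalar contraction recursion. Writing the full-batch GD updates for $S$ and $S^{(i)}$ via the leave-one-out split $R_S = R_{S^{-i}} + \tfrac{1}{n} f(\cdot\,;z_i)$ and $R_{S^{(i)}} = R_{S^{-i}} + \tfrac{1}{n} f(\cdot\,;z'_i)$ and subtracting, one obtains, with $g_t \triangleq \nabla f(W_t;z_i) - \nabla f(W^{(i)}_t;z'_i)$,
\begin{align*}
\delta_{t+1} = \lp W_t - \eta_t \nabla R_{S^{-i}}(W_t) \rpp - \lp W^{(i)}_t - \eta_t \nabla R_{S^{-i}}(W^{(i)}_t) \rpp - \frac{\eta_t}{n}\, g_t .
\end{align*}
Since $R_S$ is $\gamma$-strongly convex and each $f(\cdot\,;z)$ is $\beta$-smooth, the leave-one-out objective $R_{S^{-i}}$ is $\gloo$-strongly convex with $\gloo = \max\{\gamma - \beta/n, 0\} \le \gamma$ and is $\beta$-smooth. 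I would then invoke the expansiveness lemma for smooth strongly convex functions: writing $R_{S^{-i}} = h + \tfrac{\gloo}{2}\Vert\cdot\Vert^2_2$ with $h$ convex and $(\beta-\gloo)$-smooth, so that $w \mapsto w - \eta_t \nabla R_{S^{-i}}(w) = (1-\eta_t\gloo) w - \eta_t \nabla h(w)$, and using co-coercivity of $\nabla h$, the choice $\eta_t \le 2/(\beta+\gamma) \le 2/(\beta+\gloo)$ makes this map $(1-\eta_t\gloo)$-Lipschitz. Combined with the triangle inequality this yields the recursion $\Vert\delta_{t+1}\Vert_2 \le (1-\eta_t\gloo)\Vert\delta_t\Vert_2 + \tfrac{\eta_t}{n}\Vert g_t\Vert_2$, which unrolls (from $\delta_1 = 0$) to
\begin{align*}
\Vert\delta_{T+1}\Vert_2 \le \frac{1}{n}\sum^T_{t=1} \lp \prod^T_{j=t+1}(1-\eta_j\gloo) \rpp \eta_t\, \Vert g_t\Vert_2 .
\end{align*}

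Next I would square, abbreviate $c_t \triangleq \prod^T_{j=t+1}(1-\eta_j\gloo) \in (0,1]$, and apply Cauchy--Schwarz with the symmetric split $c_t\eta_t\Vert g_t\Vert_2 = \sqrt{c_t\eta_t}\,\bigl(\sqrt{c_t\eta_t}\,\Vert g_t\Vert_2\bigr)$, obtaining
\begin{align*}
\Vert\delta_{T+1}\Vert^2_2 \le \frac{1}{n^2}\lp\sum^T_{t=1} c_t\eta_t\rpp\lp\sum^T_{t=1} c_t\eta_t\Vert g_t\Vert^2_2\rpp \le \frac{1}{n^2}\lp\sum^T_{t=1} c_t\eta_t\rpp\lp\sum^T_{t=1} \eta_t\Vert g_t\Vert^2_2\rpp,
\end{align*}
where the last step uses $c_t \le 1$ in the path-error factor only. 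Then $\Vert g_t\Vert^2_2 \le 2\Vert\nabla f(W_t;z_i)\Vert^2_2 + 2\Vert\nabla f(W^{(i)}_t;z'_i)\Vert^2_2$; taking expectations, pulling out the deterministic factor $\sum_t c_t\eta_t$, and using the i.i.d. symmetry between $(S,z_i)$ and $(S^{(i)},z'_i)$ to identify $\sum_t \eta_t\,\E\Vert\nabla f(W^{(i)}_t;z'_i)\Vert^2_2 = \epath = \sum_t \eta_t\,\E\Vert\nabla f(W_t;z_i)\Vert^2_2$, yields $\epsilon_{\mathrm{stab}(A)} \le \tfrac{4\epath}{n^2}\sum^T_{t=1}\eta_t\prod^T_{j=t+1}(1-\eta_j\gloo)$, the first claimed bound. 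For the fixed step size $\eta_t = \eta \triangleq 2/(\beta+\gamma)$ I would then evaluate the geometric sum $\sum^T_{t=1}\eta(1-\eta\gloo)^{T-t} = \eta\sum^{T-1}_{k=0}(1-\eta\gloo)^k = \tfrac{1-(1-\eta\gloo)^T}{\gloo} \le \tfrac{1}{\gloo}$ (read as $\eta T$ when $\gloo = 0$), and simultaneously bound it termwise by $\eta T = \tfrac{2T}{\beta+\gamma} \le \tfrac{2T}{\beta}$; taking the smaller of the two gives $\epsilon_{\mathrm{stab}(A)} \le \tfrac{4\epath}{n^2}\min\{1/\gloo,\, 2T/\beta\}$.

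The only genuinely non-routine step is the $(1-\eta_t\gloo)$-expansiveness of the leave-one-out gradient map over the enlarged step-size range $\eta_t \le 2/(\beta+\gamma)$; the point that needs care is that the relevant strong-convexity modulus there is $\gloo$ rather than $\gamma$, which is consistent with the step-size cap precisely because $\gloo \le \gamma$ forces $2/(\beta+\gamma) \le 2/(\beta+\gloo)$ (and $1-\eta_t\gloo \ge 0$ since $\eta_t\gloo \le \eta_t\gamma \le 1$). The remaining ingredients — the linear recursion, the weighted Cauchy--Schwarz, the i.i.d. symmetry identity for $\epath$, and the geometric-sum estimate — are mechanical and run in direct parallel to the nonconvex and convex stability proofs (Theorem \ref{thm:stab_nonconvex} and Theorem \ref{thm:stability_convex}).
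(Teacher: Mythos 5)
Your proposal is correct and follows essentially the same route as the paper: the leave-one-out split of the update, a $(1-\eta_t\gloo)$-type contraction of the gradient map of $R_{S^{-i}}$ under $\eta_t\leq 2/(\beta+\gamma)$, unrolling the recursion from $\delta_1=0$, a weighted Cauchy--Schwarz step, the exchangeability identity collapsing both gradient-norm sums to $\epath$, and the geometric-sum bound $\min\{1/\gloo,\,2T/\beta\}$ for the constant step size. The only cosmetic difference is that the paper establishes the contraction via co-coercivity of the $\gloo$-strongly convex, $\beta$-smooth $R_{S^{-i}}$ (Lemma \ref{lemma:contraction_strongly}), obtaining the factor $(1-\eta_t\gloo)$ on the squared norm, whereas you use the decomposition $R_{S^{-i}}=h+\tfrac{\gloo}{2}\Vert\cdot\Vert^2_2$ to get $(1-\eta_t\gloo)$-Lipschitzness of the map itself; both are valid over the same step-size range and yield the identical sum-product term after your relaxation $c_t\leq 1$.
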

By comparing the stability guarantee of Theorem \ref{thm:stability_convex} with Theorem \ref{thm:stability_strongly_convex}, we observe that the learning rate dependent term (sum-product) is smaller than that of the convex case. While the dependence on expected path error ($\epath$) is identical, we show (Appendix \ref{proof_strongly}) that the $\epath $ term is smaller in the strongly convex case. Additionally, Theorem \ref{thm:stability_strongly_convex} recovers the stability bounds of the convex loss case, when $\gloo\rightarrow 0$ (and possibly $\gamma\rightarrow 0$). Similarly to the nonconvex and convex loss cases, Theorem \ref{lemma:generic_algo_extended} and the stability error bound in Theorem \ref{thm:stability_strongly_convex} provide the generalization error bound for strongly convex losses.
\begin{theorem}[Generalization Error --- Strongly Convex Loss]\label{thm:stongly_convex_generalization}
Let the loss function $f(\cdot, z)$ $\beta$-smooth for all $z\in \mc{Z}$ and the objective $R_{S}(\cdot)$ be $\gamma$-strongly convex. Consider the full-batch GD where $T$ denotes the total number of iterates. Let us set the learning rate to $\eta_t \leq 2/(\beta+\gamma)$, for all $t\leq T$. Then the generalization error of full-batch GD is bounded by
\begin{align*}
    |\eps_{\mathrm{gen}}|\leq  \frac{4\sqrt{(\eterm + \epscap )   \epath}}{n}\sqrt{ 2\beta        \sum^{T}_{t=1} \eta_t \prod^T_{j=t+1}\lp 1-  \eta_j \gloo \rpp   }  + 8\beta \frac{ \epath }{n^2}      \sum^{T}_{t=1} \eta_t \prod^T_{j=t+1}\lp 1-  \eta_j \gloo \rpp. 
\end{align*} \end{theorem}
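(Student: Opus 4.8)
The plan is to obtain the bound directly by chaining two results that are already established: the generic generalization-error bound of Theorem~\ref{lemma:generic_algo_extended} and the strongly-convex stability bound of Theorem~\ref{thm:stability_strongly_convex}. Full-batch GD is a symmetric deterministic algorithm (its output is invariant under permutations of $S$) and the loss is non-negative and $\beta$-smooth by assumption, so Theorem~\ref{lemma:generic_algo_extended} applies verbatim and yields
\[
|\eps_{\mathrm{gen}}| \leq 2\sqrt{2\beta(\eterm+\epscap)\,\eps_{\mathrm{stab}(A)}} + 2\beta\,\eps_{\mathrm{stab}(A)} .
\]
The right-hand side is a non-decreasing function of $\eps_{\mathrm{stab}(A)}$ on $[0,\infty)$ (both summands are non-decreasing in that variable), so it remains a valid upper bound after replacing $\eps_{\mathrm{stab}(A)}$ by any larger quantity.

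First I would invoke Theorem~\ref{thm:stability_strongly_convex}, which under the standing hypotheses ($\beta$-smooth loss, $\gamma$-strongly convex $R_{S}$, and $\eta_t\leq 2/(\beta+\gamma)$) gives
\[
\eps_{\mathrm{stab}(A)} \leq B \triangleq \frac{4\,\epath}{n^{2}}\sum_{t=1}^{T}\eta_t\prod_{j=t+1}^{T}\bigl(1-\eta_j\gloo\bigr).
\]
Then I would substitute $B$ into the two occurrences of $\eps_{\mathrm{stab}(A)}$ above and simplify. Pulling the factor $4/n^{2}$ out of the square root gives $\sqrt{B}=\tfrac{2}{n}\bigl(\epath\sum_{t=1}^{T}\eta_t\prod_{j=t+1}^{T}(1-\eta_j\gloo)\bigr)^{1/2}$, so the cross term becomes $2\sqrt{2\beta(\eterm+\epscap)}\cdot\sqrt{B}=\tfrac{4\sqrt{(\eterm+\epscap)\epath}}{n}\bigl(2\beta\sum_{t=1}^{T}\eta_t\prod_{j=t+1}^{T}(1-\eta_j\gloo)\bigr)^{1/2}$, while the linear term becomes $2\beta B=\tfrac{8\beta\,\epath}{n^{2}}\sum_{t=1}^{T}\eta_t\prod_{j=t+1}^{T}(1-\eta_j\gloo)$. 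Adding the two pieces reproduces exactly the claimed inequality.

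There is no genuine obstacle in this argument: the substance has already been front-loaded into Theorem~\ref{thm:stability_strongly_convex}, whose proof (deferred to the appendix) is where the contraction factor $\prod_{j=t+1}^{T}(1-\eta_j\gloo)$ is produced, from the leave-one-out $\gloo$-strong convexity of $R_{S^{-i}}$ together with $\beta$-smoothness of the step map under $\eta_t\le 2/(\beta+\gamma)$. The only care needed here is (i) confirming the monotonicity that licenses substituting the stability upper bound into the bound of Theorem~\ref{lemma:generic_algo_extended}, and (ii) bookkeeping the $4/n^{2}$ factor through the square root so that the constants $4$, $2\beta$, and $8\beta$ land correctly; both are routine.
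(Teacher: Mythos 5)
Your proposal is correct and matches the paper's own argument: the appendix proof of this theorem derives the stability bound $\eps_{\mathrm{stab}(A)}\leq \frac{4\epath}{n^2}\sum_{t=1}^{T}\eta_t\prod_{j=t+1}^{T}(1-\eta_j\gloo)$ via the leave-one-out contraction lemma and then substitutes it into Theorem~\ref{lemma:generic_algo_extended}, exactly as you do, with the same monotonicity observation and the same factor-of-$4/n^2$ bookkeeping yielding the constants $4$ and $8\beta$.
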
  We prove Theorem \ref{thm:stability_strongly_convex} and Theorem \ref{thm:stongly_convex_generalization} in Appendix \ref{proof_strongly}. Recall that the sum-product term in the inequality of Theorem \ref{thm:stongly_convex_generalization} is smaller than the summation of the learning rates in Theorem \ref{thm:gen_convex}. This fact together with the tighter optimization error bound provide a smaller excess risk than those of the convex losses. Similar to the convex loss setting, we use known optimization error guarantees of full-batch GD for strongly convex losses to derive explicit expressions of the generalization and excess risk bounds. By combining Theorem \ref{thm:stongly_convex_generalization} and optimization and path error bounds (Lemma \ref{lemma:path_st_convex}, Lemma \ref{lemma:terminal_error_strongly} in Appendix \ref{proof_strongly}, and Lemma \ref{lemma:sum_prod} in Appendix \ref{SUMProduct_Appp}), we derive our generalization error bound for fixed step size as follows in the next result.
\begin{theorem}[Generalization/Excess --- Strongly Convex Loss]\label{thm:gen_strongly}
Let the objective function $R_S (\cdot)$ be $\gamma$-strongly convex and $\beta$-smooth by choosing some $\beta$-smooth loss $f(\cdot,z))$, not necessarily (strongly) convex for all $z\in \mc{Z}$. Define $\text{m}(\gloo , T ) \triangleq \beta T \min\left\{ \beta /\gloo , 2T \right\} /(\beta +\gamma) $ and $\text{M}(W_1 )\triangleq  \max\left\{\beta \E [\Vert W_1 - W^*_S \Vert^2_2 ] , \E [ R_S (W^*_S)] \right\}$, and set the learning rate to $\eta_t =  2/(\beta+\gamma)$.  Then the generalization error of the full-batch GD at the last iteration satisfies the inequality
\begin{align*}
    |\eps_{\mathrm{gen}}|\leq \frac{8\sqrt{6}}{n} \Bigg(   \sqrt{  M(W_1)    } +\Bigg( \exp \lp \frac{-2T\gamma}{\beta+\gamma}   \rpp  + \frac{4\sqrt{3}}{n} \sqrt{    \text{m}(\gloo , T ) } \Bigg)\text{M}(W_1) \Bigg) \sqrt{    \text{m}(\gloo , T ) }.
\end{align*}Additionally the optimization error (Lemma \ref{lemma:terminal_error_strongly} in Appendix \ref{proof_strongly}) and the inequality (\ref{eq:excess_decomposition}) give the following excess risk \begin{align}
  \eps_{\mathrm{excess}} \leq  \frac{8\sqrt{6}}{n} \Bigg[   \sqrt{  \Delta_T    } +\Bigg( \exp \lp \frac{-2T\gamma}{\beta+\gamma}   \rpp  + \frac{4\sqrt{3}}{n} \Delta_T \Bigg) \Bigg] + \Lambda\exp \lp \frac{-4T\gamma}{\beta+\gamma} \rpp,  
\end{align} where $\Delta_T \triangleq \beta T M(W_1)  \min\left\{ \beta /\gloo , 2T \right\} /(\beta +\gamma)$ and $\Lambda\triangleq \beta \E [\Vert W_1 - W^*_S \Vert^2_2 ]/2$.\end{theorem}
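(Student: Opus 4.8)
\quad This statement is a specialization of the generic strongly-convex generalization bound of Theorem~\ref{thm:stongly_convex_generalization}, which already expresses $|\eps_{\mathrm{gen}}|$ through the expected path error $\epath$, the terminal optimization error $\eterm$, the interpolation error $\epscap$, and the step-size sum-product $\Sigma_T \triangleq \sum_{t=1}^{T}\eta_t\prod_{j=t+1}^{T}(1-\eta_j\gloo)$. The plan is to (i) bound $\Sigma_T$ in closed form for the constant step size $\eta_t = 2/(\beta+\gamma)$; (ii) bound $\epath$ using the optimization behaviour of full-batch GD on the $\gamma$-strongly convex, $\beta$-smooth objective $R_S$; (iii) bound $\eterm$ using the linear convergence of the same iteration; and (iv) substitute (i)--(iii) into Theorem~\ref{thm:stongly_convex_generalization}, collapse the result into $\text{m}(\gloo,T)$, $\text{M}(W_1)$, $\Delta_T$ and $\Lambda$, and read off the excess risk from the decomposition~\eqref{eq:excess_decomposition}, $\eps_{\mathrm{excess}}\le|\eps_{\mathrm{gen}}|+\eterm$.

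For step (i) (Lemma~\ref{lemma:sum_prod}), with $\eta_t = 2/(\beta+\gamma)$ one has $\Sigma_T = \tfrac{2}{\beta+\gamma}\sum_{k=0}^{T-1}(1-\tfrac{2\gloo}{\beta+\gamma})^{k}$, a geometric sum, so $\Sigma_T\le\min\{1/\gloo,\,2T/(\beta+\gamma)\}$; this step size is admissible, since Theorems~\ref{thm:stability_strongly_convex}--\ref{thm:stongly_convex_generalization} allow $\eta_t\le 2/(\beta+\gamma)$. For step (iii) (Lemma~\ref{lemma:terminal_error_strongly}), I would invoke the classical contraction of gradient descent on a $\gamma$-strongly convex $\beta$-smooth objective with this step size, $\Vert W_{t+1}-W^*_S\Vert_2\le\tfrac{\beta-\gamma}{\beta+\gamma}\Vert W_t-W^*_S\Vert_2$, iterate it $T$ times, combine with smoothness $R_S(W_{T+1})-R^*_S\le\tfrac{\beta}{2}\Vert W_{T+1}-W^*_S\Vert_2^2$ (using $\nabla R_S(W^*_S)=0$), and use $1-x\le e^{-x}$ to obtain $\eterm\le\tfrac{\beta}{2}\E[\Vert W_1-W^*_S\Vert_2^2]\,e^{-4T\gamma/(\beta+\gamma)}=\Lambda e^{-4T\gamma/(\beta+\gamma)}$. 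For step (ii) (Lemma~\ref{lemma:path_st_convex}), $\beta$-smoothness and non-negativity of the loss give $\Vert\nabla f(W_t,z_i)\Vert_2^2\le 2\beta f(W_t,z_i)$; since $A$ is symmetric and the samples are i.i.d., $\E[f(W_t,z_i)]=\E[R_S(W_t)]$, hence $\epath = \sum_t\eta_t\E[\Vert\nabla f(W_t,z_i)\Vert_2^2] \le 2\beta\sum_t\eta_t(\E[R_S(W_t)-R^*_S]+\epscap)$; the transient sum $\sum_t\E[R_S(W_t)-R^*_S]$ is controlled by the convergence rate of GD for strongly convex objectives (bounded in terms of $\beta\E[\Vert W_1-W^*_S\Vert_2^2]$ and $\beta,\gamma,\gloo$), while the interpolation part contributes the linear-in-$T$ term $\tfrac{4\beta T}{\beta+\gamma}\epscap$. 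Multiplying by $2\beta$ and by the bound on $\Sigma_T$, and using $\beta\E[\Vert W_1-W^*_S\Vert_2^2]\le\text{M}(W_1)$ and $\epscap\le\text{M}(W_1)$, all of this collapses into $2\beta\,\epath\,\Sigma_T = \mc{O}(\text{m}(\gloo,T)\,\text{M}(W_1)) = \mc{O}(\Delta_T)$.

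Substituting into Theorem~\ref{thm:stongly_convex_generalization}, the first term $\tfrac{4}{n}\sqrt{(\eterm+\epscap)\epath}\,\sqrt{2\beta\Sigma_T} = \tfrac{4}{n}\sqrt{(\eterm+\epscap)\cdot 2\beta\epath\Sigma_T}$ is split with $\sqrt{\eterm+\epscap}\le\sqrt{\eterm}+\sqrt{\epscap}$, using $\sqrt{\eterm}\le\sqrt{\text{M}(W_1)}\,e^{-2T\gamma/(\beta+\gamma)}$ from step (iii) and $\sqrt{\epscap}\le\sqrt{\text{M}(W_1)}$, while the second term $\tfrac{8\beta}{n^2}\epath\Sigma_T = \mc{O}(\Delta_T/n^2)$; after absorbing absolute constants this yields the claimed generalization bound in terms of $\text{m}(\gloo,T)$ and $\text{M}(W_1)$ (with $\Delta_T = \text{m}(\gloo,T)\text{M}(W_1)$). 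The excess-risk bound then follows by adding $\eterm\le\Lambda e^{-4T\gamma/(\beta+\gamma)}$ to $|\eps_{\mathrm{gen}}|$ via~\eqref{eq:excess_decomposition}.

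The step I expect to be the main obstacle is the path-error estimate (step (ii), Lemma~\ref{lemma:path_st_convex}). Because the individual losses $f(\cdot,z)$ are assumed only $\beta$-smooth and non-negative, not convex, one cannot control $\Vert\nabla f(W_t,z_i)\Vert_2$ directly along the GD trajectory; the argument must route through $\E[R_S(W_t)]$ via exchangeability of the samples under the symmetric algorithm and then through the strongly convex convergence of $R_S$. The delicate points are that the interpolation contribution does not decay (producing the $\tfrac{\beta T}{\beta+\gamma}\epscap$ growth of $\epath$) and that the transient must be bounded so that, once multiplied by $\Sigma_T$, everything collapses cleanly into $\text{m}(\gloo,T)$, $\text{M}(W_1)$, $\Delta_T$ and $\Lambda$ with the stated numerical constants; the rest is routine bookkeeping via $\sqrt{a+b}\le\sqrt{a}+\sqrt{b}$ together with the two inequalities $\beta\E[\Vert W_1-W^*_S\Vert_2^2]\le\text{M}(W_1)$ and $\epscap\le\text{M}(W_1)$.
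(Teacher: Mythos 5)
Your proposal follows essentially the same route as the paper: geometric-sum bound on $\sum_t\eta_t\prod_{j>t}(1-\eta_j\gloo)$, a path-error bound obtained from the self-bounding property plus the linear convergence of $R_S(W_t)-R^*_S$ (with the non-decaying $\frac{\beta T}{\beta+\gamma}\epscap$ contribution), Nesterov's contraction for $\eterm$, substitution into Theorem~\ref{thm:stongly_convex_generalization}, and the decomposition \eqref{eq:excess_decomposition} for the excess risk. The only cosmetic difference is the self-bounding constant (the paper uses $\Vert\nabla f\Vert_2^2\le 4\beta f$ from its cited lemma rather than your $2\beta f$), which only affects the absolute constants, not the argument.
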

Theorem \ref{thm:stongly_convex_generalization} and Theorem \ref{thm:gen_strongly} also recover the convex setting when $\gamma\rightarrow 0$ or $\gloo\rightarrow 0$. Additionally, for $\gamma>0$ and by setting the number of iterations as $T=(\beta/\gamma+1)\log (n)/2$ and by defining the sequence $\text{m}_{n , \gloo}\triangleq \beta \min\left\{ \beta /\gloo , (\beta/\gamma+1 ) \log n \right\}/2 \gamma $, the last inequality gives\begin{align}
    |\eps_{\mathrm{gen}}|\leq  \frac{8\sqrt{6\log n}}{n}\Bigg(   \sqrt{   \text{M}(W_1)  } + \frac{ 1 + 4\sqrt{3\text{m}_{n , \gloo}   }}{n} \text{M}(W_1 ) \Bigg)\sqrt{  \text{m}_{n , \gloo}   } .
\end{align} Finally, for $T=(\beta/\gamma+1)\log (n)/2$ iterations it is true that \begin{align}
     \eps_{\mathrm{excess}} \leq \frac{8\sqrt{6\log n}}{n}\Bigg(   \sqrt{   \Gamma_n } + \frac{ 1 + 4\sqrt{3}}{n} \Gamma_n\Bigg) +  \mc{O}\lp \frac{1}{n^2}\rpp ,
\end{align} where $\Gamma_n\triangleq \beta \text{M}(W_1 )  \min\left\{ \beta /\gloo , (\beta/\gamma+1 ) \log n \right\}/2 \gamma $ and as a consequence the excess risk is of the order $\mc{O}\lp \sqrt{\log(n)}/n \rpp.$ 
As a comparison, the SGD algorithm~\cite[Theorem 12]{pmlr-v119-lei20c} requires $T=n$ number of iterations to achieve an excess risk of the order $\mc{O}(1/n)$, while full-batch GD achieves essentially the same rate with $T=(\beta/\gamma+1)\log (n)/2$ iterations.
\section{Conclusion}
In this paper we developed generalization error and excess risk guarantees for deterministic training on smooth losses via the the full-batch GD algorithm. 
At the heart of our analysis is a sufficient condition for generalization, implying that, for every symmetric algorithm, average algorithmic output stability and a small expected optimization error at termination ensure generalization. By exploiting this sufficient condition, we 
explicitly characterized the generalization error in terms of the number of samples, the learning rate, the number of iterations, a path-dependent quantity and the optimization error at termination, further exploring the generalization ability of full-batch GD for different types of loss functions. More specifically, we derived explicit rates on the generalization error and excess risk for nonconvex, convex and strongly convex smooth (possibly non-Lipschitz) losses/objectives.
%
Our theoretical results shed light on recent empirical observations indicating that full-batch gradient descent generalizes efficiently and that stochastic training procedures might not be necessary and in certain cases may even lead to higher generalization errors and excess risks.
%
\bibliographystyle{plainurl}
\bibliography{bibliography}
\newpage
\appendix
\section{Summary of The Results}\label{appendix_tables}
Herein, we present a summary of the generalization and excess risk bounds. The detailed expressions of the generalization and excess risk bounds appear in Table \ref{table_results_app} and \ref{table_results_excess_app}.
\begin{table}[!t]\label{table:results_app}
\centering
\begin{tabular}{|>{\centering}m{3cm}||>{\centering}m{8cm}||>{\centering}m{1cm}|}
\hline 
\multicolumn{3}{|c|}{Full-Batch Gradient Descent }\tabularnewline
\hline 
Step Size & Generalization Error &  Loss \tabularnewline
\hline 
 \hline
$\eta_t\leq C/\beta t $, $\forall C <  1$ & \vspace{-10bp}
$$\frac{4e\sqrt{3}}{n}{  T }^C\sqrt{  (\eterm + \epscap ) \epath}          +  \frac{12e^2 }{n^2}T^{2 C } \epath$$ \vspace{-10bp} & NC   \tabularnewline 
\hline 
$\eta_t\leq C/\beta t $, $\forall C <  1$ & \vspace{-10bp}
$$\!\!48\!\lp \! \frac{\sqrt{     \log(e T)}   {( e T )}^{C } }{n}     +  \frac{  \log(e T)   {( e T)}^{2 C }}{n^2}\! \rpp \!\E[ R_S (W_1) ] $$ \vspace{-10bp} & NC   \tabularnewline
\hline
$\eta_t = 1/2\beta$ & \vspace{-10bp}
$$\! 8\lp \frac{1}{n} + \frac{2T}{n^2} \rpp \lp  3\beta\E [\Vert W_1 - W^*_S \Vert^2_2 ]   + T\epscap \rpp $$\\
 & C  \tabularnewline
\hline 
$\eta_t =  2/(\beta+\gamma)$
& \vspace{-10bp}
$$ \frac{8\sqrt{6}}{n} \Bigg[   \sqrt{  \Delta_T   } +\Bigg( \exp \lp \frac{-2T\gamma}{\beta+\gamma}   \rpp  + \frac{4\sqrt{3}}{n} \Delta_T \Bigg) \Bigg] $$ \vspace{-10bp} & $\gamma$-SC \tabularnewline
\hline 
\end{tabular}
\caption{A list of the generalization error bounds for the full-batch GD. We denote the number of samples by $n$. $W_1$ is the initial point of the algorithm, and $W^*_S$ is a point in the set of minimizers of the objective. Also, \textquotedblleft $\epath$\textquotedblright{} denotes the expected path error $\epsilon_{\mathrm{path}}\triangleq \sum^T_{t=1} \eta_t \E [ \Vert \nabla f(W_t ,z_{i}) \Vert^2_2 ]$, \textquotedblleft $\eterm$\textquotedblright{} denotes the optimization error $\eterm\triangleq  \E [R_S (A(S))-R^*_S]$, $T$ is the total number of iterations and we define the model capacity (interpolation) error risk as $\epsilon_\mathbf{c} \triangleq \E [R_S(W^*_S)]$. Lastly, we define the constant $\text{M}(W_1 )\triangleq  \max\left\{\beta \E [\Vert W_1 - W^*_S \Vert^2_2 ] , \E [ R_S (W^*_S)] \right\}$ and the terms $\Gamma_n\triangleq \beta \text{M}(W_1 )  \min\left\{ \beta /\gloo , (\beta/\gamma+1 ) \log n \right\}/2 \gamma $, $\Delta_T \triangleq \beta T M(W_1)  \min\left\{ \beta /\gloo , 2T \right\} /(\beta +\gamma)$. Lastly, ''NC'', ''C'' and ''$\gamma$-SC'' correspond to nonconvex, convex and $\gamma$-strongly convex objective, respectively.  
\label{table_results_app}}
\end{table}
\begin{table}[!t]\label{table:results_excess_app}
\centering
\begin{tabular}{|>{\centering}m{2.2cm}||>{\centering}m{10cm}||>{\centering}m{0.8cm}|}
\hline 
\multicolumn{3}{|c|}{Full-Batch Gradient Descent }\tabularnewline
\hline 
\hline 
Step Size & Excess Risk &  Loss \tabularnewline
\hline 
$\eta_t\leq C/\beta t $, $\forall C <  1$ & \vspace{-10bp}
$$\!\!48\!\lp \! \frac{\sqrt{     \log(e T)}   {( e T )}^{C } }{n}     +  \frac{  \log(e T)   {( e T)}^{2 C }}{n^2}\! \rpp \!\E[ R_S (W_1) ] + \eterm $$ \vspace{-10bp} & NC   \tabularnewline
\hline
$\eta_t = 1/2\beta$ & \vspace{-10bp}
$$\! \! \lp \frac{8}{n} + \frac{16T}{n^2} \rpp \lp  3\beta\E [\Vert W_1 - W^*_S \Vert^2_2 ]   + T\epscap \rpp+\frac{3\beta \E [\Vert W_1 - W^*_S \Vert^2_2 ]}{T} $$\\
 & C  \tabularnewline
\hline 
$\eta_t = 1/2\beta$,\\ $T=\sqrt{n}$ & \vspace{-10bp}
$$ 8\frac{\epscap + 3\beta\E [\Vert W_1 - W^*_S \Vert^2_2 ] }{\sqrt{n}} + \mc{O}\lp \frac{1}{n} \rpp  
$$\\
 & C  \tabularnewline
\hline 
\!\!$\eta_t =  2/(\beta+\gamma)$
& \vspace{-10bp}
$$\!\! \frac{8\sqrt{3}}{n} \Bigg[   \sqrt{  \Delta_T   } +\Bigg( \exp \lp \frac{-2T\gamma}{\beta+\gamma}   \rpp  + \frac{4\sqrt{3}}{n} \Delta_T \Bigg) \Bigg] + \Lambda\exp \lp \frac{-4T\gamma}{\beta+\gamma} \rpp   $$ \vspace{-10bp} &\!\!\! $\gamma$-SC  \tabularnewline
\hline 
\vspace{+5bp} \!\!$\eta_t =  2/(\beta+\gamma)$\\ \vspace{+3bp}\!\!\!$T = \frac{(\beta +\gamma)\log n}{2\gamma}  $\\\vspace{-10bp}
& \vspace{-10bp}
$$  \frac{8\sqrt{3\log n}}{n}\Bigg(   \sqrt{   \Gamma_n } + \frac{ 1 + 4\sqrt{3}}{n} \Gamma_n\Bigg) +  \mc{O}\lp \frac{1}{n^2}\rpp $$ \vspace{-10bp} &\!\!\! $\gamma$-SC \tabularnewline
\hline 
\end{tabular}

\caption{A list of excess risk bounds for the full-batch GD. We denote the number of samples by $n$. $W_1$ is the initial point of the algorithm, and $W^*_S$ is a point in the set of minimizers of the objective. Also, \textquotedblleft $\epath$\textquotedblright{} denotes the expected path error $\epsilon_{\mathrm{path}}\triangleq \sum^T_{t=1} \eta_t \E [ \Vert \nabla f(W_t ,z_{i}) \Vert^2_2 ]$, \textquotedblleft $\eterm$\textquotedblright{} denotes the optimization error $\eterm\triangleq  \E [R_S (A(S))-R^*_S]$, $T$ is the total number of iterations and we define the model capacity (interpolation) error risk as $\epsilon_\mathbf{c} \triangleq \E [R_S(W^*_S)]$. Lastly, we define the constants $\Lambda\triangleq \beta \E [\Vert W_1 - W^*_S \Vert^2_2 ]/2$, $\text{M}(W_1 )\triangleq  \max\left\{\beta \E [\Vert W_1 - W^*_S \Vert^2_2 ] , \E [ R_S (W^*_S)] \right\}$ and the terms $\Gamma_n\triangleq \beta \text{M}(W_1 )  \min\left\{ \beta /\gloo , (\beta/\gamma+1 ) \log n \right\}/2 \gamma $, $\Delta_T \triangleq \beta T M(W_1)  \min\left\{ \beta /\gloo , 2T \right\} /(\beta +\gamma)$. Lastly, ''NC'', ''C'' and ''$\gamma$-SC'' correspond to nonconvex, convex and $\gamma$-strongly convex objective, respectively. 
\label{table_results_excess_app}}
\end{table}

\section{Proofs}\label{Appendix_A}
We provide the proofs of the results in these sections. We start by proving  Theorem \ref{lemma:generic_algo_extended} and the bounds on the sum-product terms that appear in the stability error bounds, and then we continue with stability and generalization error guarantees, that we prove in parallel. We derive the excess risk bounds by applying the decomposition of the inequality (\ref{eq:excess_decomposition}).  
\subsection{Proof of Theorem \ref{lemma:generic_algo_extended}}\label{APP:proof_of_Theorem_4}
It is true that for any $i,j\in \{1,\ldots,n\}$
\begin{align}
     \E [ f(A(S);z_i )]   =   \E [ f(A(S);z_j ) ]= \frac{1}{n}\sum^n_{k=1}\E [ f(A(S);z_k ) ]= \E [ R_S(A(S) ) ]. \label{eq:conditional_exp_relabelling}
\end{align} We show \eqref{eq:conditional_exp_relabelling} through the symmetry of the algorithm (at each iteration) and the fact that $\{z_i\}^n_{i=1}$ are identically distributed as follows. The random variables $\{z_i\}^n_{i=1}$ remain exchangeable.\footnote{$\P (z_1 = c_1 , z_2 =c_2,\ldots, z_i = c_i , \ldots, z_j = c_j,\ldots, z_n =c_n , A(S) = \mathbf{w} ) = \P (z_1 = c_1 , z_2 =c_2,\ldots, z_i = c_j , \ldots, z_j = c_i,\ldots, z_n =c_n , A(S) = \mathbf{w} ) $ for any choice of the values $c_1,c_2,\ldots,c_n ,  \mathbf{w}$ and for any $i,j\in\{1,\ldots,n\}$.}
The $\beta$-smooth property of $f(\cdot\, ;z)$ for all $z\in\mc{Z}$ gives \begin{align}\label{eq:smooth_inequality_GD}
    f(A(S^{(i)});z)- f(A(S);z)\leq \inp{A(S^{(i)})-A(S)}{\nabla f(A(S);z)} + \frac{\beta\Vert A(S^{(i)})-A(S) \Vert^2_2 }{2}.
\end{align} 
The expression $ \eps_{\mathrm{gen}} = \E[ f(A(S^{(i)}); z_i) - f(A(S); z_i) ]$ and the inequality (\ref{eq:smooth_inequality_GD}) give
\begin{align*}
    \eps_{\mathrm{gen}}&\leq  \E  \left[  \inp{A(S^{(i)})-A(S)}{\nabla f(A(S);z_i )} + \frac{\beta\Vert A(S^{(i)})-A(S) \Vert^2_2 }{2} \right]\numberthis\label{eq:cond_GD1}
\end{align*} We find an upper bound for the expectation of the inner product of the inequality (\ref{eq:cond_GD1}) by applying Cauchy-Schwartz inequality as 
\begin{align}
    & \E  \left[  \inp{A(S^{(i)})-A(S)}{ \nabla f(A(S);z_i )   } \right]\nonumber \\
    &\leq  \E  \left[  \Vert A(S^{(i)})-A(S) \Vert_2 \Vert \nabla f(A(S);z_i ) \Vert_2 \right] \label{eq:inner} \\
    & \leq\sqrt{ \eps_{\mathrm{stab}(A)}   \E \left[\Vert \nabla f(A(S);z_i ) ] \Vert^2_2 \right]  },\label{eq:inner_part}
\end{align} here we use the inequalities $\inp{a}{b} \leq \Vert a \Vert_2 \Vert b \Vert_2 $ and $\E^2 [X Y] \leq \E [X^2] \E [Y^2]$ to derive the bounds in \ref{eq:inner} and \ref{eq:inner_part} respectively. By combining the inequalities \ref{eq:cond_GD1} and \ref{eq:inner_part} we find that for any $i\in \{ 1,\ldots, n \}$ it is true that\begin{align}
    \eps_{\mathrm{gen}} &\leq  \sqrt{ \eps_{\mathrm{stab}(A)}  \E \left[\Vert \nabla f(A(S);z_i ) \Vert^2_2 \right]  }  + \frac{\beta}{2 } \eps_{\mathrm{stab}(A)}.\label{eq:egen_forward}
\end{align} To find an upper bound for the $|\eps_{\mathrm{gen}}|$, we also need an upper bound for negative of $\eps_{\mathrm{gen}}$, namely $\E [ f(A(S);z_i) - f(A(S^{(i)});z_i) ]=-\eps_{\mathrm{gen}} $. Note that by the same argument \begin{align}
   -\eps_{\mathrm{gen}}\leq  \sqrt{ \E \left[ \Vert A(S)-A(S^{(i)})\Vert^2_2 \right]  \E \left[\Vert \nabla f(A(S^{(i)});z_i ) ] \Vert^2_2 \right]  }  + \frac{\beta}{2 } \E [ \Vert A(S)-A(S^{(i)})\Vert^2_2 ].\label{eq:reverse_gen_inequality}
   \end{align} Then we find an upper bound on $\E [\Vert \nabla f(A(S^{(i)});z_i ) ] \Vert^2_2 ]$ as follows
   \begin{align*}
       &\E \left[\Vert \nabla f(A(S^{(i)});z_i ) ] \Vert^2_2 \right] \\
       & = \E \left[\Vert \nabla f(A(S^{(i)});z_i ) - \nabla f(A(S);z_i ) + \nabla f(A(S);z_i ) \Vert^2_2 \right]\\
       &\leq 2 \E \left[\Vert \nabla f(A(S^{(i)});z_i ) - \nabla f(A(S);z_i ) ]\Vert^2_2 + \Vert \nabla f(A(S);z_i ) \Vert^2_2 \right]\\
       & \leq 2\beta^2 \E [ \Vert A(S)-A(S^{(i)})\Vert^2_2 ] + 2\E [ \Vert  \nabla f(A(S);z_i ) ] \Vert^2_2 ]. \label{eq:expected_gradient_R_2} \numberthis
   \end{align*} The inequality \ref{eq:expected_gradient_R_2} holds because of the $\beta$-smoothness of the loss. Additionally, \begin{align*}
        &\sqrt{2 \beta^2 \E^2 \left[ \Vert A(S)-A(S^{(i)})\Vert^2_2 \right]  + 2\E \left[ \Vert A(S)-A(S^{(i)})\Vert^2_2 \right] \E [\Vert  \nabla f(A(S);z_i )\Vert^2_2 ]   }  
   \\&\leq \sqrt{ 2\E \left[ \Vert A(S)-A(S^{(i)})\Vert^2_2 \right] \E [\Vert  \nabla f(A(S);z_i )  \Vert^2_2 ]  } +\sqrt{2} \beta \E [ \Vert A(S)-A(S^{(i)})\Vert^2_2 ]. \label{eq:sqrt_r_sum}\numberthis 
   \end{align*} We combine the inequalities \ref{eq:reverse_gen_inequality}, \ref{eq:expected_gradient_R_2} and \ref{eq:sqrt_r_sum} to find
   \begin{align}
   -\eps_{\mathrm{gen}}\leq \sqrt{ 2\E \left[ \Vert A(S)-A(S^{(i)})\Vert^2_2 \right] \E [ \Vert  \nabla f(A(S);z_i ) \Vert^2_2 ]   } + 2\beta \E [ \Vert A(S)-A(S^{(i)})\Vert^2_2 ]. \label{eq:-gen_ineq}
   \end{align} 
Finally, through the inequalities \ref{eq:egen_forward} and \ref{eq:-gen_ineq} 
we find \begin{align}
   |\eps_{\mathrm{gen}}| &\leq  \sqrt{ 2 \eps_{\mathrm{stab}(A)}  \E [\Vert \nabla f (A(S) ; z_i )  \Vert^2_2 ]  }  + 2\beta \eps_{\mathrm{stab}(A)}\label{eq:epsgenabs_1} 
\end{align} We use the self-bounding property of the non-negative $\beta$-smooth loss function $f(\cdot;z)$~\cite[Lemma 3.1]{NIPS2010_76cf99d3}, to show \begin{align}
    \Vert \nabla f (A(S) ; z_i )  \Vert^2_2     \leq  4 \beta  f(A(S);z_i).
\end{align} The last display, Assumption \ref{ass:interpolation} and \eqref{eq:conditional_exp_relabelling} give\begin{align*}
    \E [ \Vert \nabla f (A(S) ; z_i )  \Vert^2_2 ] \leq 4 \beta \E [  f(A(S);z_i) ] &= 4 \beta \frac{1}{n} \sum^n_{i=1}\E [ f(A(S);z_i) ] \\&= 4 \beta \E [ R_S (A(S)) ] \numberthis \label{eq:intermediate_no_opt} \\
    & = 4 \beta \lp  \E [ R_S (A(S))  ]  - \E [ R_S ( W^*_S)  ]   + \E [ R_S ( W^*_S)  ] \rpp \\
    & = 4\beta \lp \eterm  + \E [ R_S ( W^*_S)  ] \rpp .\numberthis \label{eq:norm_f}
\end{align*} We combine the inequalities \ref{eq:epsgenabs_1}, \ref{eq:norm_f} and the Definition \ref{ass:interpolation} to find \begin{align}
    |\eps_{\mathrm{gen}}| &\leq  2\sqrt{ 2\beta \lp \eterm + \epscap   \rpp \eps_{\mathrm{stab}(A)}    }  + 2\beta \eps_{\mathrm{stab}(A)}.\label{eq:non_interpol_bound_in_proof_}
\end{align} 
The last inequality gives the bound on the generalization error and completes the proof. \qedwhite
\subsection{Sum Product Terms in the Stability Bounds}\label{SUMProduct_Appp}
Herein we show a lemma for the sum product terms associated with learning rate in Theorem \ref{thm:stab_nonconvex} and Theorem \ref{thm:stability_strongly_convex}. Then we will apply that lemma to derive the corresponding stability error bounds. 
\begin{lemma}\label{lemma:sum_prod} The following are true:
\begin{itemize}[leftmargin=14pt]
    \item If $\eta_t = C \leq 2/(\beta+\gamma)$, then \begin{align}
        \sum^{T}_{t=1} \eta_t  \prod^T_{j=t+1}\lp 1-  \eta_j \gamma \rpp  =  \frac{1-\lp 1-    C \gamma \rpp^{T}}{\gamma},
    \end{align} 
    \item If $\eta_t = C/t \leq 2/(\beta+\gamma)$, for some $C\geq 2/\gamma $ for $t\geq 1+\ceil{\frac{\beta}{\gamma}} $ and $\eta_t = C'/t \leq 2/(\beta+\gamma)$ for some $C'<2/(\gamma +\beta)$ for $t\leq \ceil{\frac{\beta}{\gamma}} $, then \begin{align}
        \sum^{T}_{t=1} \eta_t  \prod^T_{j=t+1}\lp 1-  \frac{\eta_j \gamma}{2} \rpp  \leq C \log \lp e^2 \ceil{\beta/\gamma}  \rpp. 
    \end{align}
    \item If  $\eta_t\leq C/t <  2/\beta$, then \begin{align}
    \sum^T_{t=1} \eta_t \prod^T_{j=t+1} \lp 1 +  \beta \eta_j \rpp^2 \leq   C e^{2C\beta} T^{2C\beta }\min\left\{  1+ \frac{1}{2C \beta} , \log(eT) \right\}.
    \end{align}
\end{itemize}
\end{lemma}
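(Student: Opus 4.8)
The plan is to treat the three parts separately; in each, the product over $j$ is first bounded by (or, in the first part, computed exactly as) a geometric/exponential expression, after which the statement reduces to an elementary summation estimate.

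For the first part, with $\eta_t\equiv C$ the inner product collapses to $\prod_{j=t+1}^{T}(1-C\gamma)=(1-C\gamma)^{T-t}$, so the left-hand side equals $C\sum_{t=1}^{T}(1-C\gamma)^{T-t}=C\sum_{k=0}^{T-1}(1-C\gamma)^{k}$. Summing this finite geometric series gives $C\cdot\frac{1-(1-C\gamma)^{T}}{C\gamma}=\frac{1-(1-C\gamma)^{T}}{\gamma}$, which is the asserted identity; this step is purely algebraic.

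For the second part I would not estimate the product directly but instead telescope. Set $p_{t}\triangleq\prod_{j=t}^{T}\big(1-\tfrac{\eta_{j}\gamma}{2}\big)$ with $p_{T+1}\triangleq 1$; every factor lies in $(0,1)$ since $0<\eta_{j}\le 2/(\beta+\gamma)<2/\gamma$. From $p_{t}=\big(1-\tfrac{\eta_{t}\gamma}{2}\big)p_{t+1}$ we obtain $p_{t+1}-p_{t}=\tfrac{\gamma}{2}\eta_{t}p_{t+1}$, and summing over $t=1,\dots,T$ the left side telescopes to $p_{T+1}-p_{1}=1-p_{1}$, giving $\sum_{t=1}^{T}\eta_{t}\prod_{j=t+1}^{T}\big(1-\tfrac{\eta_{j}\gamma}{2}\big)=\tfrac{2}{\gamma}(1-p_{1})\le\tfrac{2}{\gamma}$. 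Because the hypothesis supplies $C\ge 2/\gamma$ and $\log(e^{2}\lceil\beta/\gamma\rceil)\ge 2$, this yields the stated bound $\le C\log(e^{2}\lceil\beta/\gamma\rceil)$. If one instead wants the logarithm to appear intrinsically, split the sum at $t=\lceil\beta/\gamma\rceil$: on the tail $\eta_{j}\gamma/2=C\gamma/(2j)\ge 1/j$, so $1-x\le e^{-x}$ together with $\sum_{j=t+1}^{T}1/j\ge\log\tfrac{T+1}{t+1}$ gives $\prod_{j=t+1}^{T}(1-\eta_{j}\gamma/2)\le\tfrac{t+1}{T+1}$, while the head contributes a harmonic sum bounded by $\log(e\lceil\beta/\gamma\rceil)$; combining the pieces reproduces the bound with more constant-tracking.

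For the third part the product grows, so I would use $1+\beta\eta_{j}\le 1+C\beta/j\le e^{C\beta/j}$, hence $\prod_{j=t+1}^{T}(1+\beta\eta_{j})^{2}\le\exp\!\big(2C\beta\sum_{j=t+1}^{T}\tfrac1j\big)$. Bounding the harmonic tail by $\sum_{j=t+1}^{T}1/j=H_{T}-H_{t}\le(1+\log T)-\log t=1+\log(T/t)$ converts this to $e^{2C\beta}(T/t)^{2C\beta}$, so the left-hand side is at most $Ce^{2C\beta}T^{2C\beta}\sum_{t=1}^{T}t^{-(1+2C\beta)}$, and I would close by bounding the last sum two ways: $\sum_{t=1}^{T}t^{-(1+2C\beta)}\le 1+\int_{1}^{\infty}x^{-(1+2C\beta)}dx=1+\tfrac{1}{2C\beta}$, and also $\le\sum_{t=1}^{T}t^{-1}=H_{T}\le 1+\log T=\log(eT)$, whose minimum gives $Ce^{2C\beta}T^{2C\beta}\min\{1+\tfrac{1}{2C\beta},\log(eT)\}$. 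None of this is deep; the only real care needed is getting the harmonic-sum comparisons to land on exactly the constants $\log(e^{2}\lceil\beta/\gamma\rceil)$, $e^{2C\beta}$, and $\log(eT)$ claimed, and --- the one mildly delicate point --- noticing that in the second part the clean telescoping bound $2/\gamma$ is actually sharper than the stated form, so one must either invoke $C\ge 2/\gamma$ to relax it into the claimed shape or run the two-regime split to produce the logarithm.
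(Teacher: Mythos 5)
Your proposal is correct. Parts one and three follow essentially the same route as the paper: the exact geometric summation for constant steps, and for the nonconvex case the chain $1+\beta\eta_j\le e^{C\beta/j}$, the harmonic-tail estimate turning the product into $e^{2C\beta}(T/t)^{2C\beta}$, and the two bounds on $\sum_t t^{-(1+2C\beta)}$ (integral comparison giving $1+\tfrac{1}{2C\beta}$, harmonic sum giving $\log(eT)$) are exactly the paper's steps up to the immaterial choice of $\log t$ versus $\log(t+1)$. Where you genuinely diverge is the second part. The paper splits the sum at $t=\lceil\beta/\gamma\rceil$, uses $C\ge 2/\gamma$ on the tail so that $\prod_{j=t+1}^{T}(1-\eta_j\gamma/2)\le\prod_{j=t+1}^{T}(1-1/j)=t/T$, bounds the head by a harmonic sum, and assembles $C\log(e^2\lceil\beta/\gamma\rceil)$. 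Your telescoping identity $\sum_{t=1}^{T}\eta_t\prod_{j=t+1}^{T}(1-\tfrac{\eta_j\gamma}{2})=\tfrac{2}{\gamma}\bigl(1-\prod_{j=1}^{T}(1-\tfrac{\eta_j\gamma}{2})\bigr)\le\tfrac{2}{\gamma}$ is exact, holds for \emph{any} nonnegative step sizes with $\eta_j\le 2/\gamma$ (no two-regime structure needed), and is strictly sharper than the stated bound; you then correctly invoke $C\ge 2/\gamma$ and $\log(e^2\lceil\beta/\gamma\rceil)\ge 2$ to relax it into the claimed form. This buys a cleaner and more general argument revealing that the lemma's second bound is loose by a factor of order $\log(e^2\lceil\beta/\gamma\rceil)$, at the cost of the bound no longer arising "intrinsically" from the step-size schedule; your sketched two-regime fallback recovers the paper's derivation in case that structure is wanted downstream.
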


\paragraph{Proof.} \begin{itemize}[leftmargin=14pt]
    \item If $\eta_t = C \leq 2/(\beta+\gamma)$ then \begin{align*}
        \sum^{T}_{t=1} \eta_t  \prod^T_{j=t+1}\lp 1-  \eta_j \gamma\rpp &=  C \sum^{T}_{t=1}  \lp 1-  C \gamma \rpp^{T-t} =  C\lp 1-   C \gamma \rpp^{T} \sum^{T}_{t=1}  \lp 1-  C \gamma \rpp^{-t}\\
        &= C\frac{1-\lp 1-   C \gamma \rpp^{T}}{C\gamma} =  \frac{1-\lp 1-   C \gamma \rpp^{T}}{\gamma},
    \end{align*} 
    \item If $\eta_t = C/t \leq 2/(\beta+\gamma)$, for some $C\geq 2/\gamma $ for $t\geq 1+\ceil{\frac{\beta}{\gamma}} $ and $\eta_t = C'/t \leq 2/(\beta+\gamma)$ for some $C'<2/(\gamma +\beta)$ for $t\leq \ceil{\frac{\beta}{\gamma}} $ then \begin{align*}
        &\sum^{T}_{t=1} \eta_t  \prod^T_{j=t+1}\lp 1-  \frac{\eta_j \gamma}{2} \rpp \\& \leq \sum^{\ceil{\frac{\beta}{\gamma}}}_{t=1} \frac{C'}{t}  \prod^T_{j=t+1}\lp 1-  \frac{C' \gamma}{2j} \rpp + \sum^{T}_{t=1+\ceil{\frac{\beta}{\gamma}}} \frac{C}{t}  \prod^T_{j=t+1}\lp 1-  \frac{1}{j} \rpp \\
        & = \sum^{\ceil{\frac{\beta}{\gamma}}}_{t=1} \frac{C'}{t}  \prod^T_{j=t+1}\lp 1-  \frac{C' \gamma}{2j} \rpp + \sum^{T}_{t=1+\ceil{\frac{\beta}{\gamma}}} \frac{C}{t}  \frac{t}{T} \leq \sum^{\ceil{\frac{\beta}{\gamma}}}_{t=1} \frac{C'}{t}   + C\frac{\left[ T-\ceil{\frac{\beta}{\gamma}}\right]_+ }{T} \\
        & \leq C \lp 1+ \log(\ceil{\beta/\gamma}) + \left[ 1 - \ceil{\frac{\beta}{T\gamma}} \right]_+  \rpp \leq C \log \lp e^2 \ceil{\beta/\gamma}  \rpp. 
    \end{align*}
    \item If $\eta_t\leq C/t\leq 2/\beta$, then\begin{align*}
    \sum^T_{t=1} \eta_t \prod^T_{j=t+1} \lp 1 +  \beta \eta_j \rpp^2 & = \sum^T_{t=1} \frac{C}{t} \prod^T_{j=t+1} \lp 1 +  \beta \frac{C}{j} \rpp^2 \\
    &\leq \sum^T_{t=1} \frac{C}{t} \prod^T_{j=t+1}  \exp \lp 2\beta \frac{C}{j} \rpp \\
    & = \sum^T_{t=1} \frac{C}{t}   \exp \lp 2\beta  \sum^T_{j=t+1}\frac{C}{j} \rpp \\
    & \leq \sum^T_{t=1} \frac{C}{t}   \exp \lp 2 C \beta  \lp \log (T) +1 - \log(t+1)    \rpp \rpp \\
     & = C e^{2C\beta} T^{2C\beta }\sum^T_{t=1} \frac{1}{t}  \frac{1}{(t+1)^{2C\beta }}  \\
     & \leq C e^{2C\beta} T^{2C\beta }\sum^T_{t=1} \frac{1}{t}  \frac{1}{(t+1)^{2C\beta }}  \\
     & \leq C e^{2C\beta} T^{2C\beta }\sum^T_{t=1}   \frac{1}{t^{1+2C\beta }}\numberthis \label{eq:alt_bound_fork}  \\
     & = C e^{2C\beta} T^{2C\beta } \lp 1+ \sum^T_{t=2}   \frac{1}{t^{1+2C\beta }} \rpp \\
     & \leq  C e^{2C\beta} T^{2C\beta } \lp 1+ \int^T_1   \frac{1}{x^{1+2C\beta }} dx \rpp\\
      & =  C e^{2C\beta} T^{2C\beta } \lp 1+ \frac{1}{2C \beta} \lp 1 - T^{-2 C \beta } \rpp \rpp \\
      & = C e^{2C\beta} T^{2C\beta } \lp 1+ \frac{1}{2C \beta} \rpp - C \frac{e^{2C\beta}}{2\beta}\\
      &\leq  C e^{2C\beta} T^{2C\beta } \lp 1+ \frac{1}{2C \beta} \rpp, \numberthis
\end{align*}additionally $\sum^{T}_{t=1} 1/t \leq \log(eT)$, thus the term in the inequality \ref{eq:alt_bound_fork} may be upper bounded by $C e^{2C\beta} T^{2C\beta }\log(eT)$ for any $T\in\mbb{N}$, and we conclude that \begin{align}
    \sum^T_{t=1} \eta_t \prod^T_{j=t+1} \lp 1 +  \beta \eta_j \rpp^2 \leq   C e^{2C\beta} T^{2C\beta }\min\left\{  1+ \frac{1}{2C \beta} , \log(eT) \right\}.
    \end{align}
\end{itemize}
The last inequality completes the proof. \qedwhite

\noindent In the next section we prove the stability and generalization error bounds for nonconvex losses. 
\section{Nonconvex Loss: Proof of Theorem \ref{thm:stab_nonconvex} \& Theorem \ref{thm:gen_noncocnvex}}\label{Proof_nonconvex}
Let $z_1, z_2,\ldots, z_i,\ldots,z_n , z'_i$ be i.i.d. random variables, define $S\triangleq(z_1, z_2,\ldots, z_i,\ldots,z_n)$ and $S^{(i)}\triangleq(z_1, z_2,\ldots, z'_i,\ldots,z_n)$, $W_1 = W'_1$. The updates for any $t\geq 1$ are \begin{align}
  W_{t+1} &=  W_t- \frac{\eta_t}{n} \sum^n_{j=1} \nabla f(W_t ,z_{j}),\\
  W^{(i)}_{t+1} &=  W^{(i)}_{t}- \frac{\eta_t}{n} \sum^n_{j=1, j\neq i } \nabla f(W^{(i)}_{t} ,z_{j}) - \frac{\eta_t}{n} \nabla f(W^{(i)}_{t} ,z'_{i}).
\end{align} Then for any $t\geq 1$, we derive the stability recursion as \begin{align*}
    &\!\!\!\Vert W_{t+1} - W^{(i)}_{t+1} \Vert_2\\
    & \!\!\!\leq\Vert W_{t} - W^{(i)}_{t} \Vert_2 +\frac{\eta_t}{n} \bigg\Vert \sum^n_{j=1, j\neq i } \lp \nabla f(W_t ,z_{j})- \nabla f(W^{(i)}_{t} ,z_{j}) \rpp \bigg\Vert_2 \\&\qquad + \frac{\eta_t}{n}\Vert  \nabla f(W_t ,z_{i})  -  \nabla f(W^{(i)}_{t} ,z'_{i}) \Vert_2\\
    & \leq\Vert W_{t} - W^{(i)}_{t} \Vert_2 +\frac{\eta_t}{n} \!\! \sum^n_{j=1, j\neq i } \!\!\!\Vert \nabla f(W_t ,z_{j})- \nabla f(W^{(i)}_{t} ,z_{j}) \Vert_2 \\&\qquad + \frac{\eta_t}{n}\lp \Vert   \nabla f(W_t ,z_{i}) \Vert_2 + \Vert \nabla f(W^{(i)}_{t} ,z'_{i}) \Vert_2 \rpp\\
    &\!\!\!\leq \Vert W_{t} - W^{(i)}_{t} \Vert_2 + \frac{\eta_t (n-1)}{n} \beta \Vert W_{t} - W^{(i)}_{t} \Vert_2 + \frac{\eta_t}{n} \lp \Vert   \nabla f(W_t ,z_{i}) \Vert_2 + \Vert \nabla f(W^{(i)}_{t} ,z'_{i}) \Vert_2 \rpp \numberthis\label{eq:smoothnes_ineq_stab_nonconvex}\\
    & \!\!\!= \lp 1 + \frac{ n-1}{n} \beta \eta_t \rpp \Vert W_{t} - W^{(i)}_{t} \Vert_2  + \frac{\eta_t}{n} \lp \Vert  \nabla f(W_t ,z_{i})\Vert_2 + \Vert \nabla f(W^{(i)}_{t} ,z'_{i}) \Vert_2 \rpp, \numberthis
\end{align*} inequality \ref{eq:smoothnes_ineq_stab_nonconvex} comes from the smoothness of the loss. Then by solving the recursion we find
\begin{align*}
    &\Vert W_{T+1} - W^{(i)}_{T+1} \Vert_2 \\&\leq \frac{1}{n}\sum^T_{t=1}  \eta_t \lp \Vert  \nabla f(W_t ,z_{i})\Vert_2 + \Vert \nabla f(W^{(i)}_{t} ,z'_{i}) \Vert_2 \rpp \prod^T_{j=t+1} \lp 1 + \frac{ n-1}{n} \beta \eta_j \rpp\\
    &\leq \frac{1}{n}\sum^T_{t=1}  \eta_t \lp \Vert   \nabla f(W_t ,z_{i}) \Vert_2 + \Vert \nabla f(W^{(i)}_{t} ,z'_{i}) \Vert_2 \rpp \prod^T_{j=t+1} \lp 1 +  \beta \eta_j \rpp\\
    &\leq \frac{1}{n} \sqrt{\sum^T_{t=1}  \eta_t \lp \Vert   \nabla f(W_t ,z_{i}) \Vert_2 + \Vert \nabla f(W^{(i)}_{t} ,z'_{i})\Vert_2 \rpp^2 \sum^T_{t=1} \eta_t \prod^T_{j=t+1} \lp 1 +  \beta \eta_j \rpp^2}\\
    &\leq \frac{\sqrt{2}}{n} \sqrt{  \sum^T_{t=1} \eta_t \lp \Vert  \nabla f(W_t ,z_{i})\Vert^2_2 + \Vert  \nabla f(W^{(i)}_{t} ,z'_{i}) \Vert^2_2 \rpp \sum^T_{t=1} \eta_t \prod^T_{j=t+1} \lp 1 +  \beta \eta_j \rpp^2}.
\end{align*} The last display gives
\begin{align*}
    &\!\!\!\!\Vert W_{T+1} - W^{(i)}_{T+1} \Vert^2_2\leq \frac{2}{n^2} \sum^T_{t=1} \eta_t \lp \Vert \nabla f(W_t ,z_{i})\Vert^2_2 + \Vert \nabla f(W^{(i)}_{t} ,z'_{i}) \Vert^2_2 \rpp \sum^T_{t=1} \eta_t \prod^T_{j=t+1} \lp 1 +  \beta \eta_j \rpp^2,
\end{align*} and by taking the expectation we find\begin{align*}
     &\E[ \Vert W_{T+1} - W^{(i)}_{T+1} \Vert^2_2 ]    \\ & \leq \frac{2}{n^2} \sum^T_{t=1} \eta_t \lp \E[ \Vert \nabla f(W_t ,z_{i})\Vert^2_2 ] + \E[ \Vert \nabla f(W^{(i)}_{t} ,z'_{i}) \Vert^2_2 ] \rpp \sum^T_{t=1} \eta_t \prod^T_{j=t+1} \lp 1 +  \beta \eta_j \rpp^2 \\
     &\leq  \frac{4\epath }{n^2} \sum^T_{t=1} \eta_t \prod^T_{j=t+1} \lp 1 +  \beta \eta_j \rpp^2.\label{eq:sstability_GD1}\numberthis
     \end{align*}
     We evaluate the summation of the products in the inequality \ref{eq:sstability_GD1}. Lemma \ref{lemma:sum_prod} under the choice of decreasing learning rate $\eta_t\leq C/t\leq 2/\beta$ shows that  \begin{align}
    \sum^T_{t=1} \eta_t \prod^T_{j=t+1} \lp 1 +  \beta \eta_j \rpp^2 \leq   C e^{2C\beta} T^{2C\beta }\min\left\{  1+ \frac{1}{2C \beta} , \log(eT) \right\}.\label{eq:lemma:18_part2}
\end{align} Through the inequalities \ref{eq:sstability_GD1}, \ref{eq:lemma:18_part2} and 
 Theorem \ref{lemma:generic_algo_extended}, we derive the bound on the generalization error as \begin{align*}
    &|\eps_{\mathrm{gen}} |\\&\leq  2\sqrt{ 2\beta (\eterm+\epscap ) \eps_{\mathrm{stab}(A)}    }  + 2\beta \eps_{\mathrm{stab}(A)}\\
    &\leq  \frac{4}{n}\sqrt{ 2\beta (\eterm +\epscap) \epath \sum^T_{t=1} \eta_t \prod^T_{j=t+1} \lp 1 +  \beta \eta_j \rpp^2    }  + 8\beta \frac{\epath }{n^2} \sum^T_{t=1} \eta_t \prod^T_{j=t+1} \lp 1 +  \beta \eta_j \rpp^2 \\
    & \leq   \frac{4}{n}\sqrt{ 2C\beta (\eterm +\epscap) \epath}   e^{C\beta} T^{C\beta }\min\left\{  1+ \frac{1}{2C \beta} , \log(eT) \right\}^{\frac{1}{2}}     \\&\qquad + 8C\beta \frac{\epath }{n^2}  e^{2C\beta} T^{2C\beta }\min\left\{  1+ \frac{1}{2C \beta} , \log(eT) \right\} 
    \end{align*}
    Under the choice $\eta_t\leq C/t < 1/\beta$ for all $t$, we choose $C < 1/\beta$, further we define $\eps\triangleq \beta C < 1$, and $\bar{C}(\epsilon , T)\triangleq \min\left\{  \epsilon+ 1/2 ,\epsilon \log(eT) \right\}$ to get
    \begin{align*}
        |\eps_{\mathrm{gen}} |  &\leq   \frac{4\sqrt{2}}{n}\sqrt{  (\eterm +\epscap ) \epath}   {( e T )}^{\eps }\bar{C}^{\frac{1}{2}}(\epsilon , T)      + 8 \frac{\epath }{n^2}  {( e T)}^{2 \eps } \bar{C}(\epsilon , T) \\
        &\leq \frac{4\sqrt{3}}{n}\sqrt{  (\eterm +\epscap) \epath}   {( e T )}^{\eps }      + 12 \frac{\epath }{n^2}  {( e T)}^{2 \eps }.\numberthis \label{eq:gen_nonconvex_proof}
    \end{align*}
The last inequality provide the generalization error bound and completes the proof.\qedwhite
\newline

\noindent Next we derive upper bounds on expected path error $\epath$ and optimization error $\eterm$, to show an alternative expression of the generalization error inequality \ref{eq:gen_nonconvex_proof}. We continue by proving the proof of Corollary \ref{cor_nonconvex_looser}.
\subsection{Proof of Corollary \ref{cor_nonconvex_looser}.} The self-bounding property of the non-negative $\beta$-smooth loss function $f(\cdot;z)$~\cite[Lemma 3.1]{NIPS2010_76cf99d3} gives $\Vert \nabla f(W_t ,z_{i}) \Vert^2_2 \leq 4\beta    f(W_t ,z_{i}) $. By taking expectation, and through the Assumption \ref{ass:interpolation} and the \eqref{eq:conditional_exp_relabelling} we find  \begin{align}
   \E [ \Vert \nabla f(W_t ,z_{i}) \Vert^2_2 ]\leq 4\beta   \E[ f(W_t ,z_{i}) ] =  4\beta   \E[ R_S (W_t) ] .
\end{align} The definition of $\epath$ (Definition \ref{def:epath_eterm}), and the decreasing learning rate ($\eta_t = C/t < 1/\beta t $) give \begin{align*}
    \epath \triangleq \sum^T_{t=1} \eta_t \E [ \Vert \nabla f(W_t ,z_{i}) \Vert^2_2 ] 
    & \leq 4\beta\sum^T_{t=1}  \eta_t     \E[ R_S (W_t) ] \\
    & \leq 4\beta \E[ R_S (W_1) ]\sum^T_{t=1}  \eta_t \numberthis \label{eq:descent} \\
    & < 4 \E[ R_S (W_1) ]\sum^T_{t=1}  \frac{1}{t}  \\
    & \leq 4 \E[ R_S (W_1) ] \log(e T), \numberthis\label{eq:gen_loose_nonconvex}
\end{align*} and the inequality \ref{eq:descent} holds since the learning rate $\eta_t < 2/\beta$ guarantees descent at each iteration. Similarly, $ \eterm +\epscap \leq \E [ R_S(W_1 ) ]$. The last inequality together with the inequalities \ref{eq:gen_loose_nonconvex} and \ref{eq:gen_nonconvex_proof} give \begin{align*}
        |\eps_{\mathrm{gen}} | &\leq \frac{4\sqrt{3}}{n}\sqrt{ ( \eterm +\epscap ) \epath}   {( e T )}^{\eps }      + 12 \frac{\epath }{n^2}  {( e T)}^{2 \eps }\\
        &\leq \lp \frac{8\sqrt{3} }{n}\sqrt{     \log(e T)}   {( e T )}^{\eps }     +  \frac{  48}{n^2}\log(e T)   {( e T)}^{2 \eps } \rpp \E[ R_S (W_1) ].
    \end{align*} The last inequality provides the bound of the corollary.
\section{PL Objective}\label{Proof_PL}
Herein we provide the proofs of the results associated with the PL condition on the objective. We start by proving an upper bound on the average output stability. Then by combining Lemma \ref{lemma:proof_stabPL} and Theorem \ref{lemma:generic_algo_extended} we derive generalization error bounds for symmetric algorithms and smooth losses, as well as the generalization error bound of the full-batch GD under the PL condition. A similar proof technique of the next lemma also appears in prior work by Lei et al.~\cite[Proof of Lemma B.2]{lei2020sharper}. 
\begin{lemma}\label{lemma:proof_stabPL}
Let the loss function $f(\cdot ; z)$ be non-negative, nonconvex and $\beta$-smooth for all $z\in\mc{Z}$. Further, let the objective be $\mu$-PL, $\E [\Vert \nabla R_S (w) \Vert^2_2] \geq 2\mu \E [R_S (w) - R^*_S ]$ for all $w\in\mbb{R}^d$. Then for any algorithm it is true that \begin{align}
    \E [\Vert A(S^{(i)}) - A(S)   \Vert^2_2  ]\leq \frac{16}{\mu} \eterm + \frac{8\beta }{n^2 \mu^2} \lp \E \left[ R_{S} (\pis)\right]  +  \E [ R(\pis)] \rpp .
\end{align}
\end{lemma}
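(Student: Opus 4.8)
The only facts the statement allows us to use about $A$ are the expected optimization error $\eterm = \E[R_S(A(S)) - R^*_S]$ (Definition \ref{def:exp_def}) and symmetry, so the plan is to connect $A(S)$ and $A(S^{(i)})$ through the minimizer sets of $R_S$ and $R_{S^{(i)}}$, exploiting that the PL condition forces a quadratic-growth (error-bound) inequality; this mirrors the proof of Lemma B.2 in Lei et al.~\cite{lei2020sharper}. Let $\pis$ denote a minimizer of $R_S$ closest to $A(S)$ and $\pisi$ a minimizer of $R_{S^{(i)}}$ closest to $A(S^{(i)})$. Starting from the triangle inequality
\begin{align*}
\Vert A(S) - A(S^{(i)})\Vert_2 \leq \Vert A(S) - \pis\Vert_2 + \Vert \pis - \pisi\Vert_2 + \Vert \pisi - A(S^{(i)})\Vert_2 ,
\end{align*}
I would square, apply $(x+y+z)^2 \leq 3(x^2+y^2+z^2)$ (or a Young-weighted variant to sharpen constants), and bound the expectation of each term. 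The first and third are handled identically: since PL for $R_S$ implies $\mathrm{dist}(w, \arg\min R_S)^2 \leq \tfrac{2}{\mu}\big(R_S(w) - R^*_S\big)$~\cite{karimi2016linear}, taking $w = A(S)$ gives $\E\Vert A(S) - \pis\Vert_2^2 \leq \tfrac{2}{\mu}\eterm$, and because $S^{(i)}$ is again a sample of $n$ i.i.d.\ points, symmetry yields $\E\Vert \pisi - A(S^{(i)})\Vert_2^2 \leq \tfrac{2}{\mu}\eterm$ as well.

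The crux is the middle term $\E\Vert \pis - \pisi\Vert_2^2$, which quantifies how far apart the two minimizer sets are. The key observation is that $\pis$ is a stationary point of $R_S$, so $\nabla R_S(\pis)=0$, while $R_{S^{(i)}}$ and $R_S$ differ only in their $i$-th summand; therefore
\begin{align*}
\nabla R_{S^{(i)}}(\pis) = \nabla R_{S^{(i)}}(\pis) - \nabla R_S(\pis) = \tfrac{1}{n}\big(\nabla f(\pis; z'_i) - \nabla f(\pis; z_i)\big),
\end{align*}
an $\mc{O}(1/n)$ quantity. Bounding $\Vert\nabla R_{S^{(i)}}(\pis)\Vert_2^2 \leq \tfrac{2}{n^2}\big(\Vert\nabla f(\pis;z'_i)\Vert_2^2 + \Vert\nabla f(\pis;z_i)\Vert_2^2\big)$ and using the self-bounding inequality $\Vert\nabla f(w;z)\Vert_2^2 \leq 4\beta f(w;z)$ for non-negative $\beta$-smooth losses~\cite[Lemma 3.1]{NIPS2010_76cf99d3} converts gradients into function values; then PL for $R_{S^{(i)}}$ gives $R_{S^{(i)}}(\pis) - R^*_{S^{(i)}} \leq \tfrac{1}{2\mu}\Vert\nabla R_{S^{(i)}}(\pis)\Vert_2^2$, and a second application of quadratic growth gives $\Vert\pis - \pisi\Vert_2^2 \leq \tfrac{2}{\mu}\big(R_{S^{(i)}}(\pis) - R^*_{S^{(i)}}\big)$. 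To conclude I would take expectations and use the relabelling identity \eqref{eq:conditional_exp_relabelling}: since $\pis$ is a symmetric function of $S$, $\E[f(\pis;z_i)] = \E[R_S(\pis)]$, and since $z'_i$ is independent of $S$ (hence of $\pis$), $\E[f(\pis;z'_i)] = \E[R(\pis)]$. Collecting the three contributions and tracking the numerical constants then yields $\E\Vert A(S) - A(S^{(i)})\Vert_2^2 \leq \tfrac{16}{\mu}\eterm + \tfrac{8\beta}{n^2\mu^2}\big(\E[R_S(\pis)] + \E[R(\pis)]\big)$.

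The step I expect to be the main obstacle is making the middle term rigorous, in two respects. First, the inequality $\Vert\pis - \pisi\Vert_2 \leq \mathrm{dist}(\pis, \arg\min R_{S^{(i)}})$ is immediate only when $\pisi$ is taken to be the projection of $\pis$ (rather than of $A(S^{(i)})$) onto $\arg\min R_{S^{(i)}}$, which essentially forces one to work with a unique minimizer — the standard assumption in PL-based stability analyses — or otherwise to argue that the two projections coincide; without something of this kind the output distance genuinely cannot be controlled by the right-hand side, since an algorithm could land at opposite ends of a large, perturbed solution set while $\eterm$ stays small. Second, the PL inequality is invoked at the data-dependent points $\pis$ and $A(S)$, so it must be read as holding pointwise in $w$ for every realization of the data rather than only in expectation at fixed $w$. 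The remaining ingredients — the PL-to-quadratic-growth implication, the self-bounding inequality, and the exchangeability identity — are either quoted or already established in the excerpt, so the real work lies in the single-example gradient-perturbation estimate and the constant bookkeeping.
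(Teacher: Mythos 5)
Your proposal is correct and follows essentially the same route as the paper's proof of Lemma \ref{lemma:proof_stabPL}: a three-term decomposition through the projections $\pis,\pisi$, quadratic growth from the PL condition for the two end terms, and for the middle term the $\mathcal{O}(1/n)$ single-example gradient perturbation at a stationary point combined with the self-bounding inequality and the relabelling/exchangeability identity. The only differences are minor: the paper evaluates $\nabla R_S$ at $\pisi$ and then swaps $z_i\leftrightarrow z_i'$ to land on $\pis$, whereas you evaluate $\nabla R_{S^{(i)}}$ at $\pis$ directly (a mirror image that skips the final swap); and to recover the exact constants $16/\mu$ and $8\beta/(n^2\mu^2)$ you need the weighted decomposition $\Vert a-b\Vert_2^2\le 4\Vert a-\pisi\Vert_2^2+4\Vert b-\pis\Vert_2^2+2\Vert \pisi-\pis\Vert_2^2$ used in the paper rather than the uniform factor of $3$ (which would give $24\beta/(n^2\mu^2)$ on the middle term), while the projection-uniqueness and pointwise-PL caveats you rightly flag are present, unaddressed, in the paper's own argument as well.
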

\paragraph{Proof.} Define the projection $\pi_{S^{(i)}} \triangleq \pi(A(S^{(i)}))$ of the point $A(S^{(i)})$ to the set of the minimizers of $R_{S^{(i)}}(\cdot)$, and the similarly the projection $\pi_{S}\triangleq \pi(A(S))$ of the point $A(S)$ to the set of the minimizers of $R_{S}(\cdot)$. Then\begin{align*}
    &\E [\Vert A(S^{(i)}) - A(S)   \Vert^2_2  ]\\
    &\leq 4 \E [ \Vert  A(S^{(i)}) - \pisi  \Vert^2_2  ] + 4 \E [ \Vert  A(S) - \pis \Vert^2_2  ] + 2 \E [ \Vert \pisi - \pis  \Vert^2_2 ]\\
    &\leq  \frac{8}{\mu} \E [ R_{S^{(i)}} (A(S^{(i)})) - R^*_{S^{(i)}} ] +  \frac{8}{\mu} \E [ R_{S} (A(S)) - R^*_{S} ] + 2 \E [ \Vert \pisi - \pis  \Vert^2_2 ]\label{eq:PL0}\numberthis\\
    & = \frac{16}{\mu} \eterm +  2 \E [ \Vert \pisi - \pis  \Vert^2_2 ]\\
    & \leq \frac{16}{\mu} \eterm + \frac{4}{\mu} \lp \E[R_S (\pisi)] - \E[R_S (\pis)]   \rpp, \numberthis \label{eq:PL1}
\end{align*} the inequalities \ref{eq:PL0} and \ref{eq:PL1} come from the quadratic growth~\cite{karimi2016linear}. 
Recall that, the PL condition on the objective gives \begin{align}
    \frac{1}{2\mu} \E [\Vert \nabla R_S (\pisi)    \Vert^2_2  ]&\geq \E [R_S (\pisi) - R_S (\pis)].\label{eq:EPL}
\end{align} We combine the inequalities \ref{eq:PL1} and \ref{eq:EPL} to find \begin{align}
    \E [\Vert A(S^{(i)}) - A(S)   \Vert^2_2  ]\leq \frac{16}{\mu} \eterm + \frac{2}{\mu^2} \E [\Vert \nabla R_S (\pisi)    \Vert^2_2  ].\label{eq:PL5}
\end{align} Also, it is true that \begin{align}
    \Vert \nabla R_S (\pisi)    \Vert^2_2 &= \Vert \nabla R_{S^{(i)}} (\pisi) -\frac{1}{n} \nabla f (\pisi ; z'_i) + \frac{1}{n} \nabla f (\pisi ; z_i) \Vert^2_2 \nonumber \\
    & = \frac{2}{n^2} \Vert \nabla f (\pisi ; z'_i)  \Vert^2_2 + \frac{2}{n^2} \Vert \nabla f (\pisi ; z_i) \Vert^2_2 \label{eq:zero_grad}\\
    & \leq \frac{4\beta}{n^2}   f (\pisi ; z'_i)  + \frac{4\beta}{n^2}  f (\pisi ; z_i) ,\label{eq:self-bounding}
\end{align} \eqref{eq:zero_grad} holds because $\nabla R_{S^{(i)}} (\pisi) =0$, the inequality \ref{eq:self-bounding} holds for nonnegative losses~\cite[(Lemma 3.1]{NIPS2010_76cf99d3}. Through inequality\ref{eq:self-bounding} we find, \begin{align}
    \E [\Vert \nabla R_S (\pisi)    \Vert^2_2] &\leq  \frac{4\beta}{n^2}   \E [ f (\pisi ; z'_i)]  + \frac{4\beta}{n^2} \E [ f (\pisi ; z_i)]\\
    & = \frac{4\beta}{n^2}   \E [  f (\pis ; z_i)]  + \frac{4\beta}{n^2} \E [f(\pis ; z'_i) ],\label{eq:bounded_gradient_PL}
\end{align} and the last equality holds because $z_i, z'_i$ are exchangeable. We combine the inequalities \ref{eq:PL5} and \ref{eq:bounded_gradient_PL} to find
\begin{align}
    \!\!\!\frac{1}{n} \sum^n_{i=1}  \E [\Vert A(S^{(i)}) - A(S)   \Vert^2_2  ] &\leq \frac{16}{\mu} \eterm + \frac{8\beta }{n^2 \mu^2} \lp \frac{1}{n} \sum^n_{i=1} \E [  f (\pis ; z_i)] +  \frac{1}{n} \sum^n_{i=1} \E [f(\pis ; z'_i)  \rpp\\
    & = \frac{16}{\mu} \eterm + \frac{8\beta }{n^2 \mu^2} \lp \E \left[ R_{S} (\pis)\right]  +  \E [ R(\pis)] \rpp.
\end{align} Since $\E [\Vert A(S^{(i)}) - A(S)   \Vert^2_2  ] = \E [\Vert A(S^{(j)}) - A(S)   \Vert^2_2  ]$ for any $i,j\in\{1,\ldots , n \}$, we conclude that for any $i\in\{1,\ldots , n \}$
\begin{align}
    \E [\Vert A(S^{(i)}) - A(S)   \Vert^2_2  ]\leq \frac{16}{\mu} \eterm + \frac{8\beta }{n^2 \mu^2} \lp \E \left[ R_{S} (\pis)\right]  +  \E [ R(\pis)] \rpp .
\end{align} The last inequality provides the bound on the expected stability and completes the proof.\qedwhite
\begin{corollary}
Let $\pi_{S}\triangleq \pi(A(S))$ be the projection of the point $A(S)$ to the set of the minimizers of $R_{S}(\cdot)$. Further, define the constant $\tilde{c} \triangleq \E [ R_{S} (\pis)  + R(\pis)]$. For any symmetric algorithm, non-negative $\beta$-smooth loss function $f(\cdot ; z)$ for all $z\in\mc{Z}$, $\mu$-PL objective and $\E [R^*_S]=0$, it is true that\begin{align}
    |\eps_{\mathrm{gen}}|
    \leq \frac{8\beta\sqrt{ \tilde{c} }}{n\mu}   \sqrt{ \eterm}  + \frac{16\beta^2 }{n^2 \mu^2}  \tilde{c} +  \frac{44 \beta }{\mu} \eterm .
\end{align}
Further, define the constant  $c \triangleq 44 \max\{\E [ R_{S} (\pis)  + R(\pis)], \E[ R_S (W_1) -R^*_S ] \}$. Then the generalization error of the full-batch GD with step-size choice $\eta_t=1/\beta$ and $T$ total number of iterations is bounded as follows \begin{align}
   |\eps_{\mathrm{gen}}| 
   & \leq \frac{c \beta}{\mu} \frac{\lp 1 - \frac{\mu}{\beta} \rpp^{T/2}}{n}+\frac{c \beta^2 }{n^2 \mu^2} + \frac{c \beta}{\mu} \lp 1 - \frac{\mu}{\beta} \rpp^T  .\label{eq:gen_PL_Main_proof}
\end{align}
\end{corollary}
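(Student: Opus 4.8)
The idea is to chain Theorem \ref{lemma:generic_algo_extended} with the PL stability estimate of Lemma \ref{lemma:proof_stabPL}, and then, for the full-batch GD statement, substitute the classical linear convergence rate of GD under a PL condition. I would carry this out in two stages.

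First, for the generic inequality. Since $\E[R^*_S]=0$ forces $\epscap=\E[R_S(W^*_S)]=0$, Theorem \ref{lemma:generic_algo_extended} reduces to $|\eps_{\mathrm{gen}}| \le 2\sqrt{2\beta\,\eterm\,\eps_{\mathrm{stab}(A)}} + 2\beta\,\eps_{\mathrm{stab}(A)}$. I would then plug in the bound $\eps_{\mathrm{stab}(A)}=\E[\Vert A(S)-A(S^{(i)})\Vert_2^2] \le \tfrac{16}{\mu}\eterm + \tfrac{8\beta}{n^2\mu^2}\tilde{c}$ from Lemma \ref{lemma:proof_stabPL}. The linear piece immediately gives $2\beta\,\eps_{\mathrm{stab}(A)}\le \tfrac{32\beta}{\mu}\eterm+\tfrac{16\beta^2}{n^2\mu^2}\tilde{c}$, which is the middle term of the claim plus part of its last term. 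For the square-root piece I would use subadditivity $\sqrt{a+b}\le\sqrt{a}+\sqrt{b}$ to split it as $2\sqrt{2\beta\eterm\cdot\tfrac{16}{\mu}\eterm}+2\sqrt{2\beta\eterm\cdot\tfrac{8\beta}{n^2\mu^2}\tilde{c}}=2\sqrt{\tfrac{32\beta}{\mu}}\,\eterm+\tfrac{8\beta\sqrt{\tilde{c}}}{n\mu}\sqrt{\eterm}$, whose second summand is exactly the first term of the claimed bound. It then remains to absorb $2\sqrt{32\beta/\mu}\,\eterm$ into $\tfrac{44\beta}{\mu}\eterm$: a $\beta$-smooth function obeying the PL inequality has $\mu\le\beta$ (combine $\tfrac1{2\beta}\Vert\nabla R_S(w)\Vert_2^2\le R_S(w)-R^*_S$, which follows from smoothness, with the PL inequality), hence $\beta/\mu\ge 1$ and $2\sqrt{32\beta/\mu}=8\sqrt2\,\sqrt{\beta/\mu}\le 8\sqrt2\,\beta/\mu$; since $32+8\sqrt2<44$, the first inequality follows.

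Second, for full-batch GD with $\eta_t=1/\beta$: the standard one-step contraction $R_S(W_{t+1})-R^*_S\le(1-\mu/\beta)(R_S(W_t)-R^*_S)$ for $\beta$-smooth $\mu$-PL objectives, iterated $T$ times and then taken in expectation, gives $\eterm\le(1-\mu/\beta)^T\,\E[R_S(W_1)-R^*_S]$. I would substitute this into the inequality just proved. In the first term, $\sqrt{\eterm}\le(1-\mu/\beta)^{T/2}\sqrt{\E[R_S(W_1)-R^*_S]}$, and $\sqrt{\tilde{c}}\,\sqrt{\E[R_S(W_1)-R^*_S]}\le\max\{\tilde{c},\E[R_S(W_1)-R^*_S]\}=c/44$, so this term is at most $\tfrac{8}{44}\cdot\tfrac{c\beta}{\mu}\cdot\tfrac{(1-\mu/\beta)^{T/2}}{n}\le\tfrac{c\beta}{\mu}\tfrac{(1-\mu/\beta)^{T/2}}{n}$. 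The second term obeys $\tfrac{16\beta^2}{n^2\mu^2}\tilde{c}\le\tfrac{16}{44}\tfrac{c\beta^2}{n^2\mu^2}\le\tfrac{c\beta^2}{n^2\mu^2}$, and the last term obeys $\tfrac{44\beta}{\mu}\eterm\le\tfrac{44\beta}{\mu}(1-\mu/\beta)^T\E[R_S(W_1)-R^*_S]=\tfrac{c\beta}{\mu}(1-\mu/\beta)^T$. Summing the three contributions yields \eqref{eq:gen_PL_Main_proof}.

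The argument is essentially a substitution chain, so I do not expect a serious obstacle; the only delicate points are the constant bookkeeping --- in particular invoking $\mu\le\beta$ to bound $32+8\sqrt2$ by $44$, and tracking the factors $8/44,16/44<1$ that let everything be absorbed into $c=44\max\{\tilde{c},\E[R_S(W_1)-R^*_S]\}$ --- and checking that the cited linear-convergence rate applies at the terminal iterate $A(S)=W_{T+1}$ with the fixed step size $\eta_t=1/\beta$ and holds in expectation (which is immediate, since the per-step contraction is pointwise in $S$).
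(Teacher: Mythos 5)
Your proposal is correct and follows essentially the same route as the paper: chain Theorem \ref{lemma:generic_algo_extended} with Lemma \ref{lemma:proof_stabPL}, split the square-root term by subadditivity, use $\mu\le\beta$ to absorb $32+8\sqrt{2}$ into $44$, and then substitute the linear rate $\eterm\le(1-\mu/\beta)^T\,\E[R_S(W_1)-R^*_S]$ with the bounds $\sqrt{\tilde{c}\,\E[R_S(W_1)-R^*_S]}\le\max\{\tilde{c},\E[R_S(W_1)-R^*_S]\}=c/44$. The constant bookkeeping in your plan matches the paper's derivation.
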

\paragraph{Proof.}
We define the constant $\tilde{c}\triangleq \E [R_{S} (\pis)  +   R(\pis)]$ apply Theorem \ref{lemma:generic_algo_extended} and Lemma \ref{lemma:proof_stabPL} to find \begin{align*}
   \!\!\!\! &|\eps_{\mathrm{gen}}| \\&  \leq  2\sqrt{ 2\beta (\eterm +\epscap ) \eps_{\mathrm{stab}(A)}    }  + 2\beta \eps_{\mathrm{stab}(A)}\\
   &\leq \lp \frac{8}{\sqrt{\mu}}\sqrt{\eterm  }  + \frac{4\sqrt{2\beta \tilde{c} }}{n\mu} \rpp   \sqrt{2\beta (\eterm +\epscap ) } +  \frac{32 \beta }{\mu} \eterm + \frac{16\beta^2 }{n^2 \mu^2}   \tilde{c} \\
   & \leq   \frac{8\sqrt{2\beta \eterm }}{\sqrt{\mu}}\sqrt{\eterm + \epscap  }  + \frac{8\beta\sqrt{ \tilde{c} }}{n\mu}   \sqrt{ \eterm + \epscap} +  \frac{32 \beta }{\mu} \eterm + \frac{16\beta^2 }{n^2 \mu^2}   \tilde{c}\\
   & \leq  \frac{8\beta\sqrt{  \tilde{c} }}{n\mu}   \sqrt{ \eterm + \epscap } +\frac{8\sqrt{2\beta \eterm \epscap }}{\sqrt{\mu}} + \frac{16\beta^2 }{n^2 \mu^2}   \tilde{c} +  \frac{44 \beta }{\mu} \eterm.\label{eq:PL_Gen_proof} \numberthis
\end{align*} 
The last inequality completes the proof.\qedwhite
\section{Convex Loss: Proof of Theorem \ref{thm:stability_convex} and Theorem \ref{thm:gen_convex}.}\label{proof_convex} We start by proving the non-expansive property of the stability iterates for the case of $\beta$-smooth convex loss. Then we continue with the proof of the stability generalization error. 
\begin{lemma}\label{lemma:nonexpansive_convex}
Let the gradient of the loss be $\beta$-Lipschitz for all $z\in\mc{Z}$. If the loss function is convex and $\eta_t < 2/\beta$, then for any $t\leq T+1$ the updates $W_t , W^{(i)}_t$ satisfy the next inequality
\begin{align*}
    \bigg\Vert W_{t} - W^{(i)}_{t}-\frac{\eta_t}{n}  \sum^n_{j=1, j\neq i } \lp \nabla f(W_t ,z_{j})- \nabla f(W^{(i)}_{t} ,z_{j}) \rpp \bigg\Vert^2_2\leq \Vert W_{t} - W^{(i)}_{t} \Vert^2_2 . \numberthis\label{eq:convex_nonexpansive}
\end{align*}
\end{lemma}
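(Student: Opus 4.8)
The plan is to show that the map $T(w)\triangleq w-\frac{\eta_t}{n}\sum_{j\neq i}\nabla f(w,z_j)$ is non-expansive when $\eta_t<2/\beta$, and then apply it to the two points $W_t$ and $W_t^{(i)}$. First I would expand the squared norm on the left-hand side of \eqref{eq:convex_nonexpansive}:
\[
\Bigl\Vert (W_t-W_t^{(i)})-\tfrac{\eta_t}{n}\textstyle\sum_{j\neq i}\bigl(\nabla f(W_t,z_j)-\nabla f(W_t^{(i)},z_j)\bigr)\Bigr\Vert_2^2
= \Vert W_t-W_t^{(i)}\Vert_2^2 - 2\tfrac{\eta_t}{n}A + \tfrac{\eta_t^2}{n^2}B,
\]
where $A\triangleq\langle W_t-W_t^{(i)},\sum_{j\neq i}(\nabla f(W_t,z_j)-\nabla f(W_t^{(i)},z_j))\rangle$ and $B\triangleq\Vert\sum_{j\neq i}(\nabla f(W_t,z_j)-\nabla f(W_t^{(i)},z_j))\Vert_2^2$. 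Hence it suffices to show $2\tfrac{\eta_t}{n}A\geq\tfrac{\eta_t^2}{n^2}B$, i.e. $\tfrac{2n}{\eta_t}A\geq B$, for $\eta_t<2/\beta$; since $\tfrac{2n}{\eta_t}>n\beta$, it is enough to prove $n\beta A\geq B$.

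The key tool is the co-coercivity of the gradient of a convex $\beta$-smooth function: for each $z$, $\langle w-u,\nabla f(w,z)-\nabla f(u,z)\rangle\geq\frac{1}{\beta}\Vert\nabla f(w,z)-\nabla f(u,z)\Vert_2^2$. Summing this over $j\neq i$ (there are $n-1$ such terms, all at the same pair $w=W_t$, $u=W_t^{(i)}$) gives $A\geq\frac{1}{\beta}\sum_{j\neq i}\Vert\nabla f(W_t,z_j)-\nabla f(W_t^{(i)},z_j)\Vert_2^2$. On the other hand, by convexity of the squared norm (or Cauchy–Schwarz / Jensen), $B=\Vert\sum_{j\neq i}(\cdots)\Vert_2^2\leq (n-1)\sum_{j\neq i}\Vert\nabla f(W_t,z_j)-\nabla f(W_t^{(i)},z_j)\Vert_2^2\leq n\sum_{j\neq i}\Vert\cdots\Vert_2^2$. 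Combining, $B\leq n\beta A$, which is exactly the required inequality; plugging back yields $-2\tfrac{\eta_t}{n}A+\tfrac{\eta_t^2}{n^2}B\leq 0$ and therefore \eqref{eq:convex_nonexpansive}.

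I do not anticipate a serious obstacle here — the argument is a standard co-coercivity computation. The only point requiring mild care is keeping the factor $n-1$ versus $n$ straight and checking that the step-size bound $\eta_t<2/\beta$ (rather than the tighter $\eta_t\leq 1/2\beta$ used elsewhere) is all that is needed; the slack between $n\beta$ and $2n/\eta_t$ is comfortably absorbed. One could alternatively invoke that averaged operators / gradient steps of convex $\beta$-smooth functions with step $\le 2/\beta$ are non-expansive, but writing out the co-coercivity sum directly is cleaner and self-contained.
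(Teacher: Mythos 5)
Your proof is correct and follows essentially the same route as the paper's: expand the squared norm and use co-coercivity of the gradient to show the inner-product term dominates the quadratic term under $\eta_t<2/\beta$. The only cosmetic difference is that you apply co-coercivity to each $f(\cdot,z_j)$ separately and then Cauchy--Schwarz to bound $\Vert\sum_{j\neq i}(\cdots)\Vert_2^2$ by $(n-1)\sum_{j\neq i}\Vert\cdots\Vert_2^2$, whereas the paper applies co-coercivity once to the aggregate $\sum_{j\neq i}f(\cdot,z_j)$, whose gradient is $\beta(n-1)$-Lipschitz; both yield the same bound $B\leq\beta(n-1)A$ and hence the claimed non-expansiveness.
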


\paragraph{Proof.}By the definition of $\beta$-Lipschitz gradients and triangle inequality, it is true that 
\begin{align}
   \Vert \nabla f(W_t ,z_j)- \nabla f(W^{(i)}_{t} ,z_j)\Vert_2 \leq \beta \Vert W_{t} - W^{(i)}_{t} \Vert_2 \implies \\
   \Vert \sum_{j\in \mc{J}} \nabla f(W_t ,z_j)- \sum_{j\in \mc{J}} \nabla f(W^{(i)}_{t} ,z_j)\Vert_2 \leq \beta |\mc{J}| \Vert W_{t} - W^{(i)}_{t} \Vert_2 . 
\end{align} Since the function $ h(W)\triangleq \sum_{j\in \mc{J}} \nabla f(W ,z_j)$ is convex and the gradient of $h(w)$ is $\beta  |\mc{J}|$-Lipschitz, it follows that (co-coersivity of the gradient)
\begin{align}
   \!\! \! &\sum_{j\in\mc{J}}\inp{\nabla f(W_t ,z_j)- \nabla f(W^{(i)}_{t} ,z_j)}{W_t -  W^{(i)}_{t}}\\& \geq \frac{1}{\beta |\mc{J}|} \Vert \sum_{j\in\mc{J}}\nabla f(W_t ,z_j)- \sum_{j\in\mc{J}}\nabla f(W^{(i)}_{t} ,z_j) \Vert^2_2.\! \label{eq:cocoersive_convex}
\end{align} Then prove the inequality \ref{eq:convex_nonexpansive} as follows
\begin{align*}
    &\bigg\Vert W_{t} - W^{(i)}_{t}-\frac{\eta_t}{n}  \sum^n_{j=1, j\neq i } \lp \nabla f(W_t ,z_{j})- \nabla f(W^{(i)}_{t} ,z_{j}) \rpp\bigg\Vert^2_2\\
    &= \Vert W_{t} - W^{(i)}_{t} \Vert^2_2 - 2\frac{\eta_t}{n}  \sum^n_{j=1, j\neq i } \inp{\nabla f(W_t ,z_{j})- \nabla f(W^{(i)}_{t} ,z_{j})}{W_t -  W^{(i)}_{t}}  \label{eq:expand_square} \numberthis\\
    &\qquad + \frac{\eta^2_t}{n^2}\Vert  \sum^n_{j=1, j\neq i } \lp \nabla f(W_t ,z_{j})- \nabla f(W^{(i)}_{t} ,z_{j}) \rpp  \Vert^2_2 \\
    & = \Vert W_{t} - W^{(i)}_{t} \Vert^2_2 - 2\frac{\eta_t}{n}  \inp{\sum^n_{j=1, j\neq i }\nabla f(W_t ,z_{j})- \sum^n_{j=1, j\neq i }\nabla f(W^{(i)}_{t} ,z_{j})}{W_t -  W^{(i)}_{t}} \\
    &\qquad + \frac{\eta^2_t}{n^2}  \Vert  \sum^n_{j=1, j\neq i } \nabla f(W_t ,z_{j})-  \sum^n_{j=1, j\neq i }\nabla f(W^{(i)}_{t} ,z_{j})  \Vert^2_2  \\
    & \leq \Vert W_{t} - W^{(i)}_{t} \Vert^2_2 - 2\frac{\eta_t}{\beta (n-1) n}   \Vert  \sum^n_{j=1, j\neq i } \nabla f(W_t ,z_{j})-  \sum^n_{j=1, j\neq i }\nabla f(W^{(i)}_{t} ,z_{j})  \Vert^2_2  \\
    &\qquad + \frac{\eta^2_t}{n^2} \Vert  \sum^n_{j=1, j\neq i } \nabla f(W_t ,z_{j})-  \sum^n_{j=1, j\neq i }\nabla f(W^{(i)}_{t} ,z_{j})  \Vert^2_2 \numberthis\label{eq:apply_nonexpansive}\\
    & = \Vert W_{t} - W^{(i)}_{t} \Vert^2_2 + \frac{\eta_t}{n}\lp \frac{\eta_t}{ n} - \frac{2}{\beta (n-1)} \rpp  \bigg\Vert  \sum^n_{j=1, j\neq i } \lp \nabla f(W_t ,z_{j})- \nabla f(W^{(i)}_{t} ,z_{j}) \rpp \bigg\Vert^2_2  \\
    &\leq \Vert W_{t} - W^{(i)}_{t} \Vert^2_2 , \numberthis \label{eq:nonexpansive_last}
\end{align*} \eqref{eq:expand_square} holds from the expansion of the squared norm, \ref{eq:apply_nonexpansive} comes from the inequality \ref{eq:cocoersive_convex}. The inequality \ref{eq:nonexpansive_last} holds under the choice $\eta_t < 2/\beta$ and completes the proof.\qedwhite
\begin{lemma}[Accumulated Path Error - Convex Loss]\label{lemma:path_convex}
Let the loss function $f(\cdot;z)$ be convex and $\beta$-smooth and $\eta_t \leq 1/2\beta$. Then the expected path-error of the full-batch GD after $T$ iterations is bounded as\begin{align}
    \epath &\leq  4\beta \E [\Vert W_1 - W^*_S \Vert^2_2 ] + 8\beta \E [ R_S (W^*_S) ] \sum^{T}_{t=1} \eta_t 
\end{align}
\end{lemma}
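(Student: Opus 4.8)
The plan is to reduce the expected path error to a weighted sum of empirical risks along the trajectory, and then control that sum by a textbook smooth--convex gradient-descent argument.

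First I would invoke the self-bounding property of a non-negative $\beta$-smooth loss, $\Vert \nabla f(W_t, z_i)\Vert_2^2 \le 4\beta f(W_t, z_i)$ (the same \cite[Lemma 3.1]{NIPS2010_76cf99d3} already used in the proof of Theorem~\ref{lemma:generic_algo_extended}), take expectations, and use the exchangeability identity \eqref{eq:conditional_exp_relabelling} --- which applies verbatim to $W_t$ since the map $S\mapsto W_t$ is symmetric --- to get $\E[\Vert \nabla f(W_t, z_i)\Vert_2^2] \le 4\beta\, \E[R_S(W_t)]$ for every $t$. Summing with the weights $\eta_t$ gives $\epath \le 4\beta \sum_{t=1}^T \eta_t\, \E[R_S(W_t)]$, so it suffices to establish $\sum_{t=1}^T \eta_t\, \E[R_S(W_t)] \le \E[\Vert W_1 - W^*_S\Vert_2^2] + \E[R_S(W^*_S)]\sum_{t=1}^T \eta_t$.

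Next I would run the standard one-step analysis on the (convex, $\beta$-smooth) objective $R_S$ for the update $W_{t+1} = W_t - \eta_t \nabla R_S(W_t)$. Expanding $\Vert W_{t+1} - W^*_S\Vert_2^2$, convexity of $R_S$ bounds the cross term by $-2\eta_t\lp R_S(W_t) - R^*_S\rpp$, while for the quadratic term I would use that, since $W^*_S$ is a global minimizer, $\beta$-smoothness alone gives $\Vert \nabla R_S(W_t)\Vert_2^2 \le 2\beta\lp R_S(W_t) - R^*_S\rpp$ (from $R_S(W^*_S) \le R_S(W_t - \beta^{-1}\nabla R_S(W_t))$ and the descent lemma). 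With $\eta_t \le 1/2\beta$ the quadratic term $\eta_t^2\Vert \nabla R_S(W_t)\Vert_2^2$ is at most $\eta_t\lp R_S(W_t) - R^*_S\rpp$, so $\Vert W_{t+1} - W^*_S\Vert_2^2 \le \Vert W_t - W^*_S\Vert_2^2 - \eta_t\lp R_S(W_t) - R^*_S\rpp$. Telescoping over $t = 1, \dots, T$ and dropping the non-negative term $\Vert W_{T+1} - W^*_S\Vert_2^2$ yields $\sum_{t=1}^T \eta_t\lp R_S(W_t) - R^*_S\rpp \le \Vert W_1 - W^*_S\Vert_2^2$, hence $\sum_{t=1}^T \eta_t R_S(W_t) \le \Vert W_1 - W^*_S\Vert_2^2 + R^*_S \sum_{t=1}^T \eta_t$. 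Taking expectations, using $\E[R^*_S] = \E[R_S(W^*_S)] = \epscap$, and multiplying through by $4\beta$ gives $\epath \le 4\beta\, \E[\Vert W_1 - W^*_S\Vert_2^2] + 4\beta\, \epscap \sum_{t=1}^T \eta_t$, which is stronger than (and therefore implies) the stated bound.

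I expect the only real subtlety to be the handling of the $\eta_t^2\Vert \nabla R_S(W_t)\Vert_2^2$ term: bounding it via the crude self-bounding estimate $4\beta\eta_t^2 R_S(W_t)$ would leave a residual $+2\eta_t R^*_S$ drift in the recursion that does not telescope into $\Vert W_1 - W^*_S\Vert_2^2$, so it is important to use the sharper interpolation-free inequality $\Vert \nabla R_S(W_t)\Vert_2^2 \le 2\beta\lp R_S(W_t) - R^*_S\rpp$ together with $\eta_t \le 1/2\beta$, so that the term is fully absorbed by $-2\eta_t\lp R_S(W_t) - R^*_S\rpp$. Everything else --- the self-bounding reduction and the telescoping --- is routine, and the non-expansiveness Lemma~\ref{lemma:nonexpansive_convex} plays no role here, since this statement concerns a single trajectory rather than two coupled ones.
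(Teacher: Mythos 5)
Your proof is correct and follows essentially the same route as the paper: the self-bounding reduction $\E[\Vert\nabla f(W_t,z_i)\Vert_2^2]\le 4\beta\,\E[R_S(W_t)]$ via exchangeability, then the standard one-step expansion of $\Vert W_{t+1}-W^*_S\Vert_2^2$ with convexity and telescoping. The only difference is that you bound the quadratic term by $\Vert\nabla R_S(W_t)\Vert_2^2\le 2\beta(R_S(W_t)-R^*_S)$ while the paper uses $\Vert\nabla R_S(W_t)\Vert_2^2\le 2\beta R_S(W_t)$ (which also telescopes cleanly, contrary to your worry, and yields the stated $8\beta$ constant); your sharper variant just improves the second term to $4\beta\,\epscap\sum_t\eta_t$.
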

\paragraph{Proof.}
The self-bounding property of the non-negative $\beta$-smooth loss function $f(\cdot;z)$~\cite[Lemma 3.1]{NIPS2010_76cf99d3} gives $\Vert \nabla f(W_t ,z_{i}) \Vert^2_2 \leq 4\beta    f(W_t ,z_{i}) $. By taking expectation, and through the \eqref{eq:conditional_exp_relabelling} we find \begin{align}
   \E [ \Vert \nabla f(W_t ,z_{i}) \Vert^2_2 ]\leq 4\beta   \E[ f(W_t ,z_{i}) ] =  4\beta   \E[ R_S (W_t) ] .\label{eq:self_t_convex}
\end{align} 

Similarly to the approach by Yuein Lei Yimming YinYunwen Lei and Yiming Ying~\cite[Appendix A, Lemma 2]{pmlr-v119-lei20c}, we use the convexity and the assumption $\eta_t\leq 1/2\beta$ to find
\begin{align*}
    \Vert W_{t+1} - W^*_S \Vert^2_2 &= \Vert W_{t}-\eta_t \nabla R_S (W_t) - W^*_S \Vert^2_2 \\
    & = \Vert W_{t} - W^*_S \Vert^2_2 +\eta^2_t \Vert \nabla R_S (W_t) \Vert^2_2 + 2\eta_t \inp{W^*_S-W_t}{\nabla R_S (W_t) } \\
    & \leq \Vert W_{t} - W^*_S \Vert^2_2 +\eta^2_t \Vert \nabla R_S (W_t) \Vert^2_2 + 2\eta_t \lp R_S (W^*_S)  - R_S (W_t) \rpp\\
    &\leq \Vert W_{t} - W^*_S \Vert^2_2 +2\beta\eta^2_t R_S (W_t) + 2\eta_t \lp R_S (W^*_S)  - R_S (W_t) \rpp\\
    & \leq \Vert W_{t} - W^*_S \Vert^2_2 + 2\eta_t R_S (W^*_S)  - \eta_t R_S (W_t).
\end{align*} The last gives \begin{align}
    \sum^{T}_{t=1} \eta_t R_S (W_t) &\leq \sum^{T}_{t=1} \Vert W_{t} - W^*_S \Vert^2_2 -\sum^{T}_{t=1}\Vert W_{t+1} - W^*_S \Vert^2_2 + 2\sum^{T}_{t=1} \eta_t R_S (W^*_S) \nonumber\\
    &\leq \Vert W_{1} - W^*_S \Vert^2_2  + 2\sum^{T}_{t=1} \eta_t R_S (W^*_S).\label{eq:Leis_path_bound}
\end{align} The definition of $\epath$ (Definition \ref{def:epath_eterm}), the inequalities \ref{eq:self_t_convex}, \ref{eq:Leis_path_bound} and the choice of the learning rate ($\eta_t \leq 1/2\beta $) give \begin{align}
    \epath \triangleq \sum^T_{t=1} \eta_t \E [ \Vert \nabla f(W_t ,z_{i}) \Vert^2_2 ] &\leq 4\beta\sum^T_{t=1}  \eta_t     \E[ R_S (W_t) ] \nonumber \\
    &\leq 4\beta \E [\Vert W_1 - W^*_S \Vert^2_2 ] + 8\beta \sum^{T}_{t=1} \eta_t \E [ R_S (W^*_S) ] .
\end{align} The last inequality provides the bound on the $\epath$.\qedwhite
\newline
The standard choice of $\eta_t \leq 1/\beta$ 
gives the next known bound on the optimization error.
\begin{lemma}[Optimization Error - Convex Loss~\cite{nesterov1998introductory}]\label{lemma:opt_convex}
 If $f(\cdot;z)$ is a convex and $\beta$-smooth function and $\eta_t\leq 1/\beta$, then
\begin{align}
  \eterm =  \E [R_S (A(S))-R_S (W^*_S)]
   \leq \frac{\E [\Vert W_1 - W^*_S \Vert^2_2 ]}{\sum^T_{t=1}\eta_t \lp 1 - \frac{\beta \eta_t}{2} \rpp}.
\end{align}
\end{lemma}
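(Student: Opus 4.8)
The plan is to prove the classical optimization-error bound for full-batch GD on a convex $\beta$-smooth objective by the standard descent-plus-telescoping argument, run on the average iterate via Jensen, which is exactly what the stated form of the bound (with $\sum_t \eta_t(1-\beta\eta_t/2)$ in the denominator) suggests. First I would recall that since each $f(\cdot;z)$ is convex and $\beta$-smooth, so is the empirical objective $R_S(\cdot)$, and that the GD update is $W_{t+1}=W_t-\eta_t\nabla R_S(W_t)$. The two ingredients are: (i) the smoothness descent inequality
\[
R_S(W_{t+1})\le R_S(W_t)-\eta_t\Bigl(1-\tfrac{\beta\eta_t}{2}\Bigr)\Vert\nabla R_S(W_t)\Vert_2^2,
\]
obtained from $R_S(W_{t+1})\le R_S(W_t)+\langle\nabla R_S(W_t),W_{t+1}-W_t\rangle+\tfrac{\beta}{2}\Vert W_{t+1}-W_t\Vert_2^2$ together with $W_{t+1}-W_t=-\eta_t\nabla R_S(W_t)$; and (ii) the distance-contraction identity
\[
\Vert W_{t+1}-W_S^*\Vert_2^2=\Vert W_t-W_S^*\Vert_2^2-2\eta_t\langle\nabla R_S(W_t),W_t-W_S^*\rangle+\eta_t^2\Vert\nabla R_S(W_t)\Vert_2^2.
\]

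Next I would combine these. By convexity, $\langle\nabla R_S(W_t),W_t-W_S^*\rangle\ge R_S(W_t)-R_S(W_S^*)$, so (ii) gives
\[
\Vert W_{t+1}-W_S^*\Vert_2^2\le\Vert W_t-W_S^*\Vert_2^2-2\eta_t\bigl(R_S(W_t)-R_S^*\bigr)+\eta_t^2\Vert\nabla R_S(W_t)\Vert_2^2.
\]
Using (i) to bound $\eta_t^2\Vert\nabla R_S(W_t)\Vert_2^2\le\frac{2\eta_t}{2-\beta\eta_t}\bigl(R_S(W_t)-R_S(W_{t+1})\bigr)$ — or, more cleanly, rearranging (i) as $\eta_t(1-\tfrac{\beta\eta_t}{2})\Vert\nabla R_S(W_t)\Vert_2^2\le R_S(W_t)-R_S(W_{t+1})$ and noting $\eta_t\le1/\beta$ makes $1-\tfrac{\beta\eta_t}{2}\ge\tfrac12$ so that $\eta_t^2\Vert\nabla R_S(W_t)\Vert_2^2\le 2\eta_t\bigl(R_S(W_t)-R_S(W_{t+1})\bigr)$ — substituting, telescoping over $t=1,\dots,T$, and discarding the nonnegative terminal distance term yields
\[
2\sum_{t=1}^T\eta_t\bigl(1-\tfrac{\beta\eta_t}{2}\bigr)\bigl(R_S(W_t)-R_S^*\bigr)\le\Vert W_1-W_S^*\Vert_2^2 .
\]
Actually the cleanest route keeps the weight $\eta_t(1-\beta\eta_t/2)$ attached: plug the rearranged (i) directly into the bound on $\eta_t^2\Vert\nabla R_S(W_t)\Vert_2^2$ only where a telescoping term is produced, leaving a sum $\sum_t 2\eta_t(1-\tfrac{\beta\eta_t}{2})(R_S(W_t)-R_S^*)\le\Vert W_1-W_S^*\Vert_2^2$.

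Then I would finish via Jensen/monotonicity: since the descent inequality (i) shows $R_S(W_t)$ is nonincreasing in $t$, we have $R_S(W_{T+1})-R_S^*\le R_S(W_t)-R_S^*$ for all $t\le T$, hence
\[
\bigl(R_S(A(S))-R_S^*\bigr)\sum_{t=1}^T\eta_t\bigl(1-\tfrac{\beta\eta_t}{2}\bigr)\le\sum_{t=1}^T\eta_t\bigl(1-\tfrac{\beta\eta_t}{2}\bigr)\bigl(R_S(W_t)-R_S^*\bigr)\le\tfrac12\Vert W_1-W_S^*\Vert_2^2,
\]
where $A(S)=W_{T+1}$. Dividing and taking expectation over $S$ gives
\[
\eterm=\E\bigl[R_S(A(S))-R_S(W_S^*)\bigr]\le\frac{\E\bigl[\Vert W_1-W_S^*\Vert_2^2\bigr]}{\sum_{t=1}^T\eta_t\bigl(1-\tfrac{\beta\eta_t}{2}\bigr)},
\]
possibly with a harmless extra factor that I would absorb by being careful with the constant in the telescoping step (the stated bound has no factor $1/2$, so I would track whether to use $R_S(W_t)-R_S^*$ versus the averaged iterate to get exactly the displayed constant). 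The only mild subtlety — the ``main obstacle'', though it is minor here — is bookkeeping the coefficient $\eta_t(1-\beta\eta_t/2)$ correctly when substituting the descent inequality into the recursion so that it survives intact in the denominator rather than being loosened to $\eta_t/2$; everything else is the textbook GD analysis (this is essentially Nesterov's bound, as the lemma's citation indicates), and it requires no probabilistic input beyond a final expectation over the sample $S$.
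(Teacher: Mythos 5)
Your argument is correct, and in fact the paper offers no proof of this lemma at all --- it is imported verbatim from Nesterov's book --- so there is nothing internal to compare against; the standard descent-plus-telescoping route you describe is exactly the right one. Two small points of bookkeeping are worth settling. First, the variant you call the ``cleanest route,'' which tries to keep the weight $\eta_t(1-\beta\eta_t/2)$ attached inside the telescoping sum, does not actually telescope for non-constant step sizes (the residual term $\sum_t \eta_t\,(R_S(W_t)-R_S(W_{t+1}))$ is not a telescoping sum unless $\eta_t$ is constant), so you should discard that variant. Second, the loosened version you also wrote down is fully sufficient: using $\eta_t\le 1/\beta$ to get $\eta_t^2\Vert\nabla R_S(W_t)\Vert_2^2\le 2\eta_t\bigl(R_S(W_t)-R_S(W_{t+1})\bigr)$ and substituting into the distance recursion gives $2\eta_t\bigl(R_S(W_{t+1})-R_S^*\bigr)\le\Vert W_t-W_S^*\Vert_2^2-\Vert W_{t+1}-W_S^*\Vert_2^2$, which telescopes cleanly and, after invoking monotone descent, yields $R_S(A(S))-R_S^*\le\Vert W_1-W_S^*\Vert_2^2/\bigl(2\sum_t\eta_t\bigr)$. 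Since $\eta_t(1-\beta\eta_t/2)\le\eta_t\le 2\eta_t$ termwise, this is at least as strong as the stated bound, and the denominator is deterministic so the expectation over $S$ passes through without issue. If you want the displayed constant exactly rather than by domination, the argument behind Nesterov's statement is slightly different: one shows $\Vert W_t-W_S^*\Vert_2$ is nonincreasing via co-coercivity, bounds $\Vert\nabla R_S(W_t)\Vert_2\ge\bigl(R_S(W_t)-R_S^*\bigr)/\Vert W_1-W_S^*\Vert_2$ by convexity, and runs a recursion on the reciprocals $1/\bigl(R_S(W_t)-R_S^*\bigr)$, which produces the denominator $\sum_t\eta_t(1-\beta\eta_t/2)$ directly. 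Either way the lemma is established.
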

\subsection{Proof of Theorem \ref{thm:stability_convex} and Theorem \ref{thm:gen_convex}}
Let $z_1, z_2,\ldots, z_i,\ldots,z_n , z'_i$ be i.i.d. random variables, define $S\triangleq(z_1, z_2,\ldots, z_i,\ldots,z_n)$ and $S^{(i)} \triangleq(z_1, z_2,\ldots, z'_i,\ldots,z_n)$, $W_1 = W'_1$. The updates for any $t\geq 1$ are \begin{align}
  W_{t+1} &=  W_t- \frac{\eta_t}{n} \sum^n_{j=1} \nabla f(W_t ,z_{j}),\\
  W^{(i)}_{t+1} &=  W^{(i)}_{t}- \frac{\eta_t}{n} \sum^n_{j=1, j\neq i } \nabla f(W^{(i)}_{t} ,z_{j}) - \frac{\eta_t}{n} \nabla f(W^{(i)}_{t} ,z'_{i}).
\end{align} 
Then for any $t\geq 1$ \begin{align*}
    &\Vert W_{t+1} - W^{(i)}_{t+1} \Vert_2
    \\& \leq \bigg\Vert W_{t} - W^{(i)}_{t}-\frac{\eta_t}{n}  \sum^n_{j=1, j\neq i } \lp \nabla f(W_t ,z_{j})- \nabla f(W^{(i)}_{t} ,z_{j}) \rpp \bigg\Vert_2  \\&\qquad + \frac{\eta_t}{n}\Vert   \nabla f(W_t ,z_{i}) -  \nabla f(W^{(i)}_{t} ,z'_{i}) \Vert_2\\
    & \leq \sqrt{\bigg\Vert W_{t} - W^{(i)}_{t}-\frac{\eta_t}{n}  \sum^n_{j=1, j\neq i } \lp \nabla f(W_t ,z_{j})- \nabla f(W^{(i)}_{t} ,z_{j})\rpp \bigg\Vert^2_2} \\&\qquad + \frac{\eta_t}{n}\lp \Vert   \nabla f(W_t ,z_{i}) \Vert_2 + \Vert \nabla f(W^{(i)}_{t} ,z'_{i}) \Vert_2 \rpp\\
    & \leq \Vert W_{t} - W^{(i)}_{t} \Vert_2 + \frac{\eta_t}{n}\lp \Vert   \nabla f(W_t ,z_{i}) \Vert_2 + \Vert \nabla f(W^{(i)}_{t} ,z'_{i}) \Vert_2 \rpp. \numberthis \label{eq:recur_step_convex}
\end{align*} 
The inequality \ref{eq:recur_step_convex} comes from Lemma \ref{lemma:nonexpansive_convex}. Then by solving the recursion, we find \begin{align*}
    \Vert W_{T+1} - W^{(i)}_{T+1} \Vert_2 \leq \frac{1}{n} \sum^{T}_{t=1} \eta_t \lp \Vert   \nabla f(W_t ,z_{i}) \Vert_2 + \Vert \nabla f(W^{(i)}_{t} ,z'_{i}) \Vert_2 \rpp
\end{align*} thus \begin{align*}
   \!\!\!\! \Vert W_{T+1} - W^{(i)}_{T+1} \Vert^2_2 &\leq \frac{1}{n^2} \lp \sum^{T}_{t=1} \eta_t \lp \Vert   \nabla f(W_t ,z_{i}) \Vert_2 + \Vert \nabla f(W^{(i)}_{t} ,z'_{i}) \Vert_2 \rpp \rpp^2 \\
   &\leq \frac{2}{n^2} \sum^{T}_{t=1} \eta_t \lp \Vert \nabla f(W_t ,z_{i}) \Vert^2_2 + \Vert \nabla f(W^{(i)}_{t} ,z'_{i}) \Vert^2_2 \rpp  \sum^{T}_{t=1} \eta_t .\numberthis \label{eq:stability_bound_convex}
\end{align*} Inequality \ref{eq:stability_bound_convex} gives that for any $i\in\{1,\ldots,n\}$
\begin{align*}
   \E[ \Vert W_{T+1} - W^{(i)}_{T+1} \Vert^2_2] 
   &\leq\frac{2}{n^2} \sum^{T}_{t=1} \eta_t \lp \E [\Vert \nabla f(W_t ,z_{i}) \Vert^2_2 ] + \E [\Vert \nabla f(W^{(i)}_{t} ,z'_{i}) \Vert^2_2 ] \rpp  \sum^{T}_{t=1} \eta_t \\
   &= \frac{4}{n^2} \sum^{T}_{t=1} \eta_t  \E [\Vert \nabla f(W_t ,z_{i}) \Vert^2_2 ]    \sum^{T}_{t=1} \eta_t
   = \frac{4 \epath }{n^2}     \sum^{T}_{t=1} \eta_t .\numberthis\label{eq:expected_stability_bound_convex}
\end{align*} 
Recall that $W_{T+1}\equiv A(S)$ and $W^{(i)}_{T+1}\equiv A(S^{(i)})$. Theorem \ref{lemma:generic_algo_extended} and the inequality \ref{eq:expected_stability_bound_convex} give\begin{align*}
    |\eps_{\mathrm{gen}}| &\leq  2\sqrt{ 2\beta \lp\eterm+ \E [R_S(W^*_S)]\rpp \eps_{\mathrm{stab}(A)}    }  + 2\beta \eps_{\mathrm{stab}(A)} \\
    & \leq  2\sqrt{ 2\beta \lp\eterm +  \E [R_S(W^*_S)]\rpp  \frac{4 \epath }{n^2}     \sum^{T}_{t=1} \eta_t   }  + 2\beta \frac{4 \epath }{n^2}     \sum^{T}_{t=1} \eta_t \\
    & = \frac{4\sqrt{\lp \eterm + \E [R_S(W^*_S)] \rpp   \epath}}{n}\sqrt{ 2\beta       \sum^{T}_{t=1} \eta_t   }  + 8\beta \frac{ \epath }{n^2}     \sum^{T}_{t=1} \eta_t .\label{eq:gen_convex_proof} \numberthis
\end{align*}
Under the choice of constant learning rate $\eta_t= 1/2\beta$, Lemma \ref{lemma:path_convex} together with the inequality \ref{eq:expected_stability_bound_convex} give $\epath \leq  4\beta \E [\Vert W_1 - W^*_S \Vert^2_2 ] + 8\beta \E [ R_S (W^*_S) ] \sum^{T}_{t=1} \eta_t$, and $\eterm \leq 3\beta\E [\Vert W_1 - W^*_S \Vert^2_2 ]/T$. Thus \begin{align}
  \epsilon_{\mathrm{stab}(A)}&\leq   \frac{32}{n^2}\lp \frac{\beta}{2} \E [\Vert W_1 - W^*_S \Vert^2_2 ] + \beta \E [ R_S (W^*_S) ] \sum^{T}_{t=1} \eta_t \rpp \sum^{T}_{t=1} \eta_t\\&  =\frac{32}{n^2}\lp \frac{\beta}{2} \E [\Vert W_1 - W^*_S \Vert^2_2 ] + \frac{1}{2} \E [ R_S (W^*_S) ]T \rpp \frac{T}{2\beta}
  \\ & = \frac{8T}{n^2}\lp  \E [\Vert W_1 - W^*_S \Vert^2_2 ] +\E [ R_S (W^*_S) ]\frac{T}{\beta} \rpp.
\end{align} The inequality \ref{eq:gen_convex_proof} and Lemma \ref{lemma:opt_convex} give
\begin{align*}
     |\eps_{\mathrm{gen}}| &\leq  2\sqrt{ 2\beta \lp\eterm+ \E [R_S(W^*_S)]\rpp \eps_{\mathrm{stab}(A)}    }  + 2\beta \eps_{\mathrm{stab}(A)}\\
     &\leq  2\sqrt{ 2\beta \lp\eterm+ \E [R_S(W^*_S)]\rpp \frac{8T}{n^2}\lp  \E [\Vert W_1 - W^*_S \Vert^2_2 ] +\E [ R_S (W^*_S) ]\frac{T}{\beta} \rpp    }  \\&\qquad + 2\beta \frac{8T}{n^2}\lp  \E [\Vert W_1 - W^*_S \Vert^2_2 ] +\E [ R_S (W^*_S) ]\frac{T}{\beta} \rpp\\
     &\leq  \frac{8\sqrt{T}}{n}\sqrt{  \lp\eterm+ \E [R_S(W^*_S)]\rpp \lp  \beta\E [\Vert W_1 - W^*_S \Vert^2_2 ] +T\E [ R_S (W^*_S) ] \rpp    }  \\&\qquad +  \frac{16T}{n^2}\lp  \beta \E [\Vert W_1 - W^*_S \Vert^2_2 ] +T\E [ R_S (W^*_S) ] \rpp\\
     &\leq  \frac{8\sqrt{T}}{n}\sqrt{  \lp  \frac{3\beta\E [\Vert W_1 - W^*_S \Vert^2_2 ]}{T}   + \E [R_S(W^*_S)]\rpp \lp  \beta\E [\Vert W_1 - W^*_S \Vert^2_2 ] +T\E [ R_S (W^*_S) ] \rpp    }  \\&\qquad +  \frac{16T}{n^2}\lp  \beta \E [\Vert W_1 - W^*_S \Vert^2_2 ] +T\E [ R_S (W^*_S) ] \rpp \\
     & =  \frac{8}{n}\sqrt{  \lp  3\beta\E [\Vert W_1 - W^*_S \Vert^2_2 ]   + T\E [R_S(W^*_S)]\rpp \lp  \beta\E [\Vert W_1 - W^*_S \Vert^2_2 ] +T\E [ R_S (W^*_S) ] \rpp    }  \\&\qquad +  \frac{16T}{n^2}\lp  \beta \E [\Vert W_1 - W^*_S \Vert^2_2 ] +T\E [ R_S (W^*_S) ] \rpp\\
     & \leq  \frac{8}{n}  \lp  3\beta\E [\Vert W_1 - W^*_S \Vert^2_2 ]   + T\E [R_S(W^*_S)]\rpp +  \frac{16T}{n^2}\lp  \beta \E [\Vert W_1 - W^*_S \Vert^2_2 ] +T\E [ R_S (W^*_S) ] \rpp\\
     & \leq  8\lp \frac{1}{n} + \frac{2T}{n^2} \rpp \lp  3\beta\E [\Vert W_1 - W^*_S \Vert^2_2 ]   + T\E [R_S(W^*_S)]\rpp.
\end{align*}
%
%
The last inequality completes the proof. \qedwhite

\section{Strongly-Convex Objective: Proof of Theorem \ref{thm:stability_strongly_convex} and Theorem \ref{thm:stongly_convex_generalization}}\label{proof_strongly}
Similarly to the convex case, first we provide the contractive property of the stability recursion in the strongly convex loss case. Then we prove the stability and generalization error bounds.  
\begin{lemma}\label{lemma:contraction_strongly}
Let the objective function be $\gamma$-strongly convex ($\gamma >0$) and the leave-one-out objective function be $\gloo$-strongly convex for some $\gloo \geq 0$. If the loss function is convex $\beta$-smooth for all $z\in\mc{Z}$ and $\eta_t \leq 2/(\beta + \gamma)$, then for any $t\leq T+1$ the updates $W_t , W^{(i)}_t$ satisfy the inequality
\begin{align*}
    &\bigg\Vert W_{t} - W^{(i)}_{t}-\eta_t \lp \nabla R_{S^{-i}}(W_t) - \nabla R_{S^{-i}}(W^{(i)}_{t}) \rpp  \bigg\Vert^2_2 \leq \lp 1-  \eta_t \gloo  \rpp\Vert W_{t} - W^{(i)}_{t} \Vert^2_2 .
\end{align*}
\end{lemma}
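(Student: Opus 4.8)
The plan is to mimic the standard "two-term split plus co-coercivity" argument from the convex case (Lemma \ref{lemma:nonexpansive_convex}), but now extracting an extra strong-convexity gain from the leave-one-out objective. Write $R_{S^{-i}}(w)\triangleq \frac{1}{n}\sum_{j\neq i} f(w;z_j)$; by hypothesis this is $\gloo$-strongly convex, and since each $f(\cdot;z)$ is convex and $\beta$-smooth, $R_{S^{-i}}$ has $\beta\frac{n-1}{n}\leq\beta$-Lipschitz gradient. I would denote $\Delta_t\triangleq W_t-W^{(i)}_t$ and $g\triangleq \nabla R_{S^{-i}}(W_t)-\nabla R_{S^{-i}}(W^{(i)}_t)$, and expand
\[
\bigl\Vert \Delta_t-\eta_t g\bigr\Vert_2^2=\Vert\Delta_t\Vert_2^2-2\eta_t\inp{g}{\Delta_t}+\eta_t^2\Vert g\Vert_2^2.
\]
The whole game is to bound $-2\eta_t\inp{g}{\Delta_t}+\eta_t^2\Vert g\Vert_2^2$ by $-\eta_t\gloo\Vert\Delta_t\Vert_2^2$.

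The key tool is the strengthened co-coercivity / interpolation inequality for a function that is simultaneously $\gloo$-strongly convex and $L$-smooth with $L=\beta(n-1)/n$: for such $R_{S^{-i}}$,
\[
\inp{g}{\Delta_t}\;\ge\;\frac{\gloo L}{\gloo+L}\,\Vert\Delta_t\Vert_2^2+\frac{1}{\gloo+L}\,\Vert g\Vert_2^2 .
\]
(This is the classical result used, e.g., for analyzing GD on strongly convex smooth functions; it reduces to plain co-coercivity when $\gloo=0$.) Plugging this in,
\[
\Vert\Delta_t-\eta_t g\Vert_2^2\le\Bigl(1-\tfrac{2\eta_t\gloo L}{\gloo+L}\Bigr)\Vert\Delta_t\Vert_2^2+\Bigl(\eta_t^2-\tfrac{2\eta_t}{\gloo+L}\Bigr)\Vert g\Vert_2^2 .
\]
Since $\eta_t\le 2/(\beta+\gamma)\le 2/(\gloo+L)$ (using $\gloo\le\gamma$ and $L\le\beta$), the coefficient of $\Vert g\Vert_2^2$ is $\le0$, so we may drop that term, leaving the contraction factor $1-\frac{2\eta_t\gloo L}{\gloo+L}$. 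It remains to check $1-\frac{2\eta_t\gloo L}{\gloo+L}\le 1-\eta_t\gloo$, i.e. $\frac{2L}{\gloo+L}\ge1$, which holds because $L\ge\gloo$. That gives exactly the claimed bound $\bigl(1-\eta_t\gloo\bigr)\Vert\Delta_t\Vert_2^2$.

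The main obstacle is getting the constants in the combined strong-convexity-plus-smoothness inequality exactly right, and in particular verifying the step-size admissibility $\eta_t\le 2/(\gloo+L)$ from the stated $\eta_t\le 2/(\beta+\gamma)$ — this relies on the two facts $\gloo\le\gamma$ and $L=\beta(n-1)/n\le\beta$, both of which are noted earlier in the paper, but one must be careful that the inequality goes the right way (we need $\gloo+L\le\gamma+\beta$, which is immediate). A secondary subtlety is that the inequality above is usually stated for $\gloo>0$; for $\gloo=0$ one falls back to the plain co-coercivity bound $\inp{g}{\Delta_t}\ge\frac1L\Vert g\Vert_2^2$ already exploited in Lemma \ref{lemma:nonexpansive_convex}, and the claimed bound degenerates to non-expansiveness, so the two regimes must be reconciled (or one simply invokes the unified inequality, which is valid for all $\gloo\ge0$). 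Everything else is routine algebra.
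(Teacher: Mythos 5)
Your proposal is correct and follows essentially the same route as the paper's proof: expand the square, apply the combined strong-convexity/smoothness co-coercivity inequality to the leave-one-out gradient difference, drop the nonpositive $\Vert g\Vert_2^2$ term using $\eta_t\leq 2/(\beta+\gamma)$, and simplify the contraction factor via $L\geq\gloo$. The only cosmetic difference is that the paper uses the looser smoothness constant $\beta$ for $R_{S^{-i}}$ rather than $\beta(n-1)/n$, which changes nothing in the argument.
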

\paragraph{Proof.}
The function $R_{S^{-i}}(\cdot)$ is also $\beta$-smooth for all $z\in\mc{Z}$ and the strong convexity gives\begin{align*}
    & \inp{\nabla R_{S^{-i}}(W_t) -\nabla R_{S^{-i}}(W^{(i)}_{t})}{W_t -  W^{(i)}_{t}}\\& \geq\frac{\beta \gloo }{\beta+\gloo}  \Vert W_t - W^{(i)}_{t}  \Vert^2_2 + \frac{1}{(\beta+\gloo) } \Vert \nabla R_{S^{-i}}(W_t) -\nabla R_{S^{-i}}(W^{(i)}_{t}) \Vert^2_2 \numberthis \label{eq:cocoersive_strong_convex_loo}
\end{align*} We expand the squared norm as follows 
\begin{align*}
    &\bigg\Vert W_{t} - W^{(i)}_{t}-\eta_t \lp \nabla R_{S^{-i}}(W_t) - \nabla R_{S^{-i}}(W^{(i)}_{t}) \rpp \bigg\Vert^2_2\\
    &= \Vert W_{t} - W^{(i)}_{t} \Vert^2_2 - 2\eta_t   \inp{\nabla R_{S^{-i}}(W_t)- \nabla R_{S^{-i}}(W^{(i)}_{t})}{W_t -  W^{(i)}_{t}}  \\
    &\qquad + \eta^2_t\Vert   \nabla R_{S^{-i}}(W_{t}) - \nabla R_{S^{-i}}(W^{(i)}_{t})  \Vert^2_2 \\
   &\leq  \Vert W_{t} - W^{(i)}_{t} \Vert^2_2  + \eta^2_t\Vert   \nabla R_{S^{-i}}(W_{t}) - \nabla R_{S^{-i}}(W^{(i)}_{t})  \Vert^2_2  \\
    &\qquad - 2\eta_t \lp \frac{\beta \gloo }{\beta+\gloo}  \Vert W_t - W^{(i)}_{t}  \Vert^2_2 + \frac{1}{(\beta+\gloo) } \Vert \nabla R_{S^{-i}}(W_t) -\nabla R_{S^{-i}}(W^{(i)}_{t}) \Vert^2_2 \rpp \numberthis \label{eq:apply_cocersive_strong_loo}
    \\
    &= \lp 1 - 2\eta_t \frac{\beta \gloo }{\beta+\gloo} \rpp \Vert W_{t} - W^{(i)}_{t} \Vert^2_2 \\& \qquad+ \eta_t\lp \eta_t - \frac{2}{\beta+\gloo} \rpp  \Vert \nabla R_{S^{-i}}(W_t) -\nabla R_{S^{-i}}(W^{(i)}_{t}) \Vert^2_2   \\
    &\leq \lp 1 - 2\eta_t \frac{\beta \gloo }{\beta+\gloo} \rpp \Vert W_{t} - W^{(i)}_{t} \Vert^2_2 . \label{eq:contraction1_loo} \numberthis
\end{align*} 
We apply the inequality \ref{eq:cocoersive_strong_convex_loo} to derive \ref{eq:apply_cocersive_strong_loo}. The inequality \ref{eq:contraction1_loo} holds since $\eta_t\leq 2/(\beta+\gamma)$ and $\beta \geq \gamma > \gloo$. Also \begin{align}
    2\eta_t \frac{\beta \gloo }{\beta+\gloo} &\geq 2\eta_t \frac{\beta \gamma_{\text{loo}}}{2\beta} =  \eta_t \gamma_{\text{loo}} . \label{eq:simplifies_ratio_loo}
\end{align} Through the inequalities \ref{eq:contraction1_loo} and \ref{eq:simplifies_ratio_loo} to derive the bound of the lemma.\qedwhite
\begin{lemma}[Accumulated Path Error - Strongly Convex Loss]\label{lemma:path_st_convex}
Let the objective function $R_S (\cdot)$ be $\gamma$-strongly convex 
and $\beta$-smooth. Define $\Gamma ( \gamma ,T ) \triangleq (1 - \exp ( \frac{-4T\gamma}{ \beta+\gamma} ) / (\exp ( \frac{-4\gamma}{ \beta+\gamma} ) -1) $. If $\eta_t = 2/(\beta+\gamma)$, then the expected path-error of the full-batch GD after $T$ iterations are bounded as\begin{align}
    \epath &\leq \frac{4\beta^2 }{\beta +\gamma} \Gamma ( \gamma ,T )     \E [\Vert W_1 - W^*_S \Vert^2_2 ] +\frac{8\beta T}{\beta +\gamma}\E [ R_S (W^*_S)],  
\end{align}
\end{lemma}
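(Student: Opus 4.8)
\textbf{Proof proposal for Lemma \ref{lemma:path_st_convex} (Accumulated Path Error --- Strongly Convex Loss).}

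The plan is to mirror the structure of the convex-case proof of Lemma \ref{lemma:path_convex}, replacing the elementary $\Vert W_{t+1}-W^*_S\Vert^2_2$ telescoping estimate with a \emph{contractive} one that exploits $\gamma$-strong convexity of $R_S(\cdot)$. First I would invoke the self-bounding property of the non-negative $\beta$-smooth loss~\cite[Lemma 3.1]{NIPS2010_76cf99d3}, exactly as in \eqref{eq:self_t_convex}, to get $\E[\Vert \nabla f(W_t,z_i)\Vert^2_2]\leq 4\beta\,\E[R_S(W_t)]$, so that by Definition \ref{def:epath_eterm} it suffices to bound $\sum^T_{t=1}\eta_t\,\E[R_S(W_t)]$ with $\eta_t = 2/(\beta+\gamma)$ fixed.

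The core step is a per-iteration recursion for the GD iterate under $\gamma$-strong convexity. Starting from $W_{t+1}=W_t-\eta_t\nabla R_S(W_t)$ and expanding $\Vert W_{t+1}-W^*_S\Vert^2_2$, I would use co-coercivity of the $\beta$-smooth, $\gamma$-strongly convex $R_S(\cdot)$ (the same inequality as \eqref{eq:cocoersive_strong_convex_loo} but with $\gloo$ replaced by $\gamma$) together with $\nabla R_S(W^*_S)=0$; with the choice $\eta_t = 2/(\beta+\gamma)$ this yields a contraction factor, namely $\Vert W_{t+1}-W^*_S\Vert^2_2 \le \bigl(1 - \tfrac{4\beta\gamma}{(\beta+\gamma)^2}\bigr)\Vert W_t-W^*_S\Vert^2_2$, and crucially a leftover negative term proportional to $\Vert \nabla R_S(W_t)\Vert^2_2$. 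Keeping that negative gradient term (rather than discarding it, as one does when one only wants linear convergence) and relating $\Vert\nabla R_S(W_t)\Vert^2_2$ to $R_S(W_t)-R_S(W^*_S)$ by smoothness ($R_S(W_t)-R_S^*\le \tfrac{1}{2\gamma}\Vert\nabla R_S(W_t)\Vert^2_2$ is the wrong direction; instead I would use $\Vert\nabla R_S(W_t)\Vert_2^2 \ge 2\gamma(R_S(W_t)-R_S^*)$ from strong convexity, or rearrange the descent lemma to isolate $\eta_t(R_S(W_t)-R_S^*)$), I would obtain, after summing over $t$ and telescoping the $\Vert W_t-W^*_S\Vert^2_2$ differences against the geometric weights, a bound of the form $\sum^T_{t=1}\eta_t\,\E[R_S(W_t)] \le c_1\,\Gamma(\gamma,T)\,\E[\Vert W_1-W^*_S\Vert^2_2] + c_2\,T\,\E[R_S(W^*_S)]$. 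Here $\Gamma(\gamma,T)=\bigl(1-\exp(\tfrac{-4T\gamma}{\beta+\gamma})\bigr)/\bigl(\exp(\tfrac{-4\gamma}{\beta+\gamma})-1\bigr)$ arises by upper bounding the geometric sum $\sum_t (1-\tfrac{4\beta\gamma}{(\beta+\gamma)^2})^{t}$ via $1-x\le e^{-x}$ applied with $x$ comparable to $\tfrac{4\gamma}{\beta+\gamma}$ (using $\beta\le(\beta+\gamma)$), and the additive $T\,\E[R_S(W^*_S)]$ term collects the $2\eta_t R_S(W^*_S)$ contributions as in \eqref{eq:Leis_path_bound}. Multiplying through by $4\beta$ and substituting $\eta_t=2/(\beta+\gamma)$ into the constants produces exactly $\tfrac{4\beta^2}{\beta+\gamma}\Gamma(\gamma,T)\E[\Vert W_1-W^*_S\Vert^2_2] + \tfrac{8\beta T}{\beta+\gamma}\E[R_S(W^*_S)]$.

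The main obstacle I anticipate is bookkeeping the \emph{exact} constants so that the geometric-sum bound lands on the stated $\Gamma(\gamma,T)$ with denominator $\exp(\tfrac{-4\gamma}{\beta+\gamma})-1$ rather than some looser proxy: one must be careful whether the contraction rate is $1-\tfrac{4\beta\gamma}{(\beta+\gamma)^2}$ or the simplified $1-\tfrac{2\gamma}{\beta+\gamma}$ (as used in Lemma \ref{lemma:contraction_strongly} via the bound $2\eta_t\tfrac{\beta\gamma}{\beta+\gamma}\ge\eta_t\gamma$), and to track which rearrangement of the smooth-strongly-convex descent inequality cleanly isolates $\eta_t(R_S(W_t)-R_S^*)$ on one side. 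Apart from that, every step is a routine adaptation of the convex-case argument, and the $\gloo\to 0$ / $\gamma\to 0$ degeneration should recover Lemma \ref{lemma:path_convex} up to constants, which serves as a sanity check.
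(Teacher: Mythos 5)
Your proposal is correct and takes essentially the same route as the paper's proof: the self-bounding property reduces $\epath$ to $4\beta\sum_{t}\eta_t\,\E[R_S(W_t)]$, the split $\E[R_S(W_t)]=\E[R_S(W_t)-R_S(W^*_S)]+\E[R_S(W^*_S)]$ yields the additive $\tfrac{8\beta T}{\beta+\gamma}\E[R_S(W^*_S)]$ term, and a geometric sum of per-iterate suboptimality bounds produces $\Gamma(\gamma,T)$. The only difference is that the paper does not re-derive the contraction via co-coercivity and telescoping: it directly applies the known linear-convergence estimate $\E[R_S(W_t)-R_S(W^*_S)]\leq \tfrac{\beta}{2}\exp\lp\tfrac{-4t\gamma}{\beta+\gamma}\rpp\E[\Vert W_1-W^*_S\Vert^2_2]$ (Lemma \ref{lemma:terminal_error_strongly}) at each iterate $t$ and sums the resulting geometric series, which bypasses the bookkeeping you flag as the main obstacle.
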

\paragraph{Proof.}
The self-bounding property of the non-negative $\beta$-smooth loss function $f(\cdot;z)$~\cite[Lemma 3.1]{NIPS2010_76cf99d3} gives $\Vert \nabla f(W_t ,z_{i}) \Vert^2_2 \leq 4\beta    f(W_t ,z_{i}) $. By taking expectation, and through the Assumption \ref{ass:interpolation} and \eqref{eq:conditional_exp_relabelling}, we find \begin{align}
   \E [ \Vert \nabla f(W_t ,z_{i}) \Vert^2_2 ]\leq 4\beta   \E[ f(W_t ,z_{i}) ] =  4\beta   \E[ R_S (W_t) ] = 4\beta   \E[ R_S (W_t) -R_S (W^*_S)+R_S (W^*_S)].\label{eq:self_t_st_convex}
\end{align} Further, Lemma \ref{lemma:terminal_error_strongly} and the choice of constant learning rate $\eta = 2/(\beta+\gamma)$ give 
    \begin{align}
   \E [R_S (W_t )-R_S (W^*_S)]
   \leq \frac{\beta}{2}\exp \lp \frac{-4t}{\frac{\beta}{\gamma}+1} \rpp \E [\Vert W_1 - W^*_S \Vert^2_2 ] .\label{eq:instant_path_st_convex}
\end{align} The definition of $\epath$ (Definition \ref{def:epath_eterm}), the inequalities \ref{eq:self_t_st_convex} and \ref{eq:instant_path_st_convex} and the constant learning rate ($\eta_t =2/(\beta +\gamma) $) give \begin{align}
    &\epath \triangleq \sum^T_{t=1} \eta_t \E [ \Vert \nabla f(W_t ,z_{i}) \Vert^2_2 ] \\&\leq 4\beta\sum^T_{t=1}  \eta_t     \E[ R_S (W_t) -R_S (W^*_S) + R_S (W^*_S)] \nonumber \\
    &\leq 4\beta\sum^T_{t=1}  \frac{2}{\beta +\gamma}     \frac{\beta}{2}\exp \lp \frac{-4t}{\frac{\beta}{\gamma}+1} \rpp \E [\Vert W_1 - W^*_S \Vert^2_2 ] +\frac{8\beta T}{\beta +\gamma}\E [ R_S (W^*_S)]\nonumber\\
    &\leq  \frac{4\beta^2}{\beta +\gamma}     \E [\Vert W_1 - W^*_S \Vert^2_2 ] \sum^T_{t=1}  \exp \lp \frac{-4t}{\frac{\beta}{\gamma}+1} \rpp +\frac{8\beta T}{\beta +\gamma}\E [ R_S (W^*_S)] \nonumber \\
    & = \frac{4\beta^2}{\beta +\gamma}     \E [\Vert W_1 - W^*_S \Vert^2_2 ]  \underbrace{\exp \lp \frac{-4}{\frac{\beta}{\gamma}+1} \rpp \frac{1-\exp \lp \frac{-4 T }{\frac{\beta}{\gamma}+1} \rpp}{1- \exp \lp \frac{-4}{\frac{\beta}{\gamma}+1} \rpp}}_{ \Gamma \lp \gamma ,T \rpp } +\frac{8\beta T}{\beta +\gamma}\E [ R_S (W^*_S)] \nonumber\\
    &= \frac{4\beta^2 }{\beta +\gamma} \Gamma ( \gamma ,T )     \E [\Vert W_1 - W^*_S \Vert^2_2 ] +\frac{8\beta T}{\beta +\gamma}\E [ R_S (W^*_S)]\numberthis \label{eq:gamma_fun_proof}.
\end{align} The last inequality provides the bound on the $\epath$. Further, we can show that \begin{align}
    \Gamma ( \gamma ,T ) \leq \min \left\{ \frac{1}{e^{\frac{4\gamma}{\beta+\gamma}}-1} , T  \right\}\label{eq:gamma_function_min_bound}
\end{align} to simplify the expression in the inequality \ref{eq:gamma_fun_proof}.
\qedwhite
\begin{lemma}[\hspace{-0.5bp}\textbf{\cite[Theorem 2.1.14]{nesterov1998introductory}}]\label{lemma:terminal_error_strongly}
 If $f(\cdot;z)$ is a $\gamma$-strongly convex and $\beta$-smooth function and $\eta_t =  2/(\beta+\gamma)$, then
\begin{align}
   \eterm \leq \frac{\beta}{2}\exp \lp \frac{-4T}{\frac{\beta}{\gamma}+1} \rpp \E [\Vert W_1 - W^*_S \Vert^2_2 ].\label{eq:sc_opt_1}
\end{align} Alternatively, if $\eta_t = c/t$, then \begin{align}
   \eterm  \leq \frac{\beta}{2} T^{-\frac{2c\beta\gamma}{\beta+\gamma}}  \E [\Vert W_1 - W^*_S \Vert^2_2 ].\label{eq:sc_opt_2}
   \end{align}
\end{lemma}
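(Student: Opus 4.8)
The plan is to recognize this as the classical linear/polynomial convergence estimate for gradient descent on a strongly convex smooth objective, applied to the empirical risk $R_S$, and then to convert the resulting distance-to-minimizer bound into a bound on $\eterm$ via smoothness. First I would note that full-batch GD is exactly gradient descent on $R_S$, since $\nabla R_S(w)=\frac1n\sum_{i=1}^n\nabla f(w;z_i)$; moreover $R_S$ is $\beta$-smooth as an average of $\beta$-smooth losses and is $\gamma$-strongly convex, so it admits a unique minimizer $W^*_S$ with $\nabla R_S(W^*_S)=0$.

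Next I would establish the one-step contraction. Expanding $\Vert W_{t+1}-W^*_S\Vert_2^2=\Vert W_t-W^*_S-\eta_t\nabla R_S(W_t)\Vert_2^2$ and invoking the strengthened co-coercivity estimate for $\gamma$-strongly convex, $\beta$-smooth functions (the same inequality applied to $R_{S^{-i}}$ in \eqref{eq:cocoersive_strong_convex_loo}),
\begin{align*}
\inp{\nabla R_S(W_t)}{W_t-W^*_S}\geq\frac{\beta\gamma}{\beta+\gamma}\Vert W_t-W^*_S\Vert_2^2+\frac{1}{\beta+\gamma}\Vert\nabla R_S(W_t)\Vert_2^2 ,
\end{align*}
one obtains
\begin{align*}
\Vert W_{t+1}-W^*_S\Vert_2^2\leq\lp1-\frac{2\eta_t\beta\gamma}{\beta+\gamma}\rpp\Vert W_t-W^*_S\Vert_2^2+\eta_t\lp\eta_t-\frac{2}{\beta+\gamma}\rpp\Vert\nabla R_S(W_t)\Vert_2^2 ,
\end{align*}
and the last term is nonpositive since $\eta_t\leq2/(\beta+\gamma)$. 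For the constant choice $\eta_t=2/(\beta+\gamma)$ the contraction factor is $1-\tfrac{4\beta\gamma}{(\beta+\gamma)^2}=\bigl(\tfrac{\beta-\gamma}{\beta+\gamma}\bigr)^2$; unrolling over $T$ steps and using $\bigl(\tfrac{\beta-\gamma}{\beta+\gamma}\bigr)^{2T}=\bigl(1-\tfrac{2\gamma}{\beta+\gamma}\bigr)^{2T}\leq\exp\bigl(-\tfrac{4T\gamma}{\beta+\gamma}\bigr)=\exp\bigl(-\tfrac{4T}{\beta/\gamma+1}\bigr)$ yields the first distance bound. For $\eta_t=c/t$ (which satisfies $c/t\leq2/(\beta+\gamma)$), the per-step factors $1-\tfrac{2c\beta\gamma}{t(\beta+\gamma)}$ lie in $(0,1]$ because $\tfrac{2c\beta\gamma}{\beta+\gamma}\leq\tfrac{4\beta\gamma}{(\beta+\gamma)^2}\leq1$ by AM--GM, so with $a\triangleq\tfrac{2c\beta\gamma}{\beta+\gamma}$ one gets $\prod_{t=1}^T\lp1-\tfrac{a}{t}\rpp\leq\exp\bigl(-a\sum_{t=1}^T t^{-1}\bigr)\leq\exp\bigl(-a\log(T+1)\bigr)\leq T^{-a}$, which gives the second distance bound.

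Finally I would pass to $\eterm$: by $\beta$-smoothness of $R_S$ and $\nabla R_S(W^*_S)=0$ one has $R_S(W_{T+1})-R_S(W^*_S)\leq\tfrac{\beta}{2}\Vert W_{T+1}-W^*_S\Vert_2^2$, so substituting the two distance bounds, recalling $A(S)=W_{T+1}$ and $\eterm=\E[R_S(A(S))-R_S(W^*_S)]$, and taking expectation over $S$ produces \eqref{eq:sc_opt_1} and \eqref{eq:sc_opt_2} respectively; this is precisely Nesterov's Theorem~2.1.14~\cite{nesterov1998introductory} specialized to $R_S$. I do not anticipate a conceptual obstacle here; the only care points are checking that the step-size conditions make the $\Vert\nabla R_S(W_t)\Vert_2^2$ terms drop out and the elementary harmonic-sum estimate $\prod_t(1-a/t)\leq T^{-a}$ underlying the $\eta_t=c/t$ case, so the main task is to carry out this bookkeeping correctly.
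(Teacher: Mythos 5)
Your derivation is correct and is exactly the content the paper delegates to the citation: the paper gives no proof of this lemma, simply invoking Nesterov's Theorem~2.1.14, and your argument (one-step contraction via the strong-convexity/smoothness co-coercivity inequality, unrolling, then $R_S(W_{T+1})-R_S^*\leq\tfrac{\beta}{2}\Vert W_{T+1}-W^*_S\Vert_2^2$) is the standard proof of that theorem applied to $R_S$, using the same co-coercivity inequality the paper itself employs in its Lemma on the contraction of the stability recursion. The only point worth flagging is that the second bound requires the implicit constraint $c\leq 2/(\beta+\gamma)$ so that every step size satisfies $\eta_t\leq 2/(\beta+\gamma)$ and each factor $1-\tfrac{2c\beta\gamma}{t(\beta+\gamma)}$ is nonnegative; you note this, and the lemma statement leaves it tacit.
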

\subsection{Proof of Theorem \ref{thm:stability_strongly_convex} and Theorem \ref{thm:stongly_convex_generalization}}
Let $z_1, z_2,\ldots, z_i,\ldots,z_n , z'_i$ be i.i.d. random variables, define $S\triangleq(z_1, z_2,\ldots, z_i,\ldots,z_n)$ and $S^{(i)}\triangleq(z_1, z_2,\ldots, z'_i,\ldots,z_n)$, $W_1 = W'_1$. The updates for any $t\geq 1$ are \begin{align}
  W_{t+1} &=  W_t- \frac{\eta_t}{n} \sum^n_{j=1} \nabla f(W_t ,z_{j}),\\
  W^{(i)}_{t+1} &=  W^{(i)}_{t}- \frac{\eta_t}{n} \sum^n_{j=1, j\neq i } \nabla f(W^{(i)}_{t} ,z_{j}) - \frac{\eta_t}{n} \nabla f(W^{(i)}_{t} ,z'_{i}).
\end{align} 
\noindent Then similarly to the inequality \ref{eq:recur_step_convex} we get \begin{align*}
    &\Vert W_{t+1} - W^{(i)}_{t+1} \Vert_2\\
    & \leq \bigg\Vert W_{t} - W^{(i)}_{t}-\frac{\eta_t}{n}  \sum^n_{j=1, j\neq i } \lp \nabla f(W_t ,z_{j})- \nabla f(W^{(i)}_{t} ,z_{j})  \rpp\bigg\Vert_2  \\&\qquad+ \frac{\eta_t}{n}\Vert  \nabla f(W_t ,z_{i})  -  \nabla f(W^{(i)}_{t} ,z'_{i}) \Vert_2\\
    & \leq \sqrt{\bigg\Vert W_{t} - W^{(i)}_{t}-\eta_t \lp R_{S^{-i}}(W_t) -R_{S^{-i}}(W^{(i)}_{t}) \rpp \bigg\Vert^2_2 } 
    \\&\qquad+ \frac{\eta_t}{n}\lp \Vert   \nabla f(W_t ,z_{i}) \Vert_2 + \Vert \nabla f(W^{(i)}_{t} ,z'_{i}) \Vert_2 \rpp\\
    & \leq \lp 1- \eta_t \gloo \rpp^{\frac{1}{2}} \Vert W_{t} - W^{(i)}_{t} \Vert_2 + \frac{\eta_t}{n}\lp \Vert  \nabla f(W_t ,z_{i}) \Vert_2 + \Vert \nabla f(W^{(i)}_{t} ,z'_{i}) \Vert_2 \rpp \numberthis \label{eq:recursion_stronly}
\end{align*} and we apply Lemma \ref{lemma:contraction_strongly} to derive the bound in \ref{eq:recursion_stronly}. Then by solving the recursion we find \begin{align*}
    &\!\!\!\!\!\!\Vert W_{T+1} - W^{(i)}_{T+1} \Vert_2 \\&\!\!\!\!\!\! \leq \frac{1}{n} \sum^{T}_{t=1} \eta_t \lp \Vert   \nabla f(W_t ,z_{i}) \Vert_2 + \Vert \nabla f(W^{(i)}_{t} ,z'_{i})\Vert_2 \rpp \prod^T_{j=t+1}\lp 1-  \eta_j \gloo \rpp^{\frac{1}{2}}\\
    &\!\!\!\!\!\!\leq  \frac{1}{n} \sqrt{\sum^{T}_{t=1} \eta_t \lp \Vert   \nabla f(W_t ,z_{i})\Vert_2 + \Vert \nabla f(W^{(i)}_{t} ,z'_{i}) \Vert_2 \rpp^2  \sum^{T}_{t=1} \eta_t  \prod^T_{j=t+1}\lp 1-  \eta_j \gloo \rpp }\\
    &\!\!\!\!\!\!\leq  \frac{1}{n} \sqrt{2\sum^{T}_{t=1} \eta_t \lp \Vert  \nabla f(W_t ,z_{i}) \Vert^2_2 + \Vert \nabla f(W^{(i)}_{t} ,z'_{i}) \Vert^2_2 \rpp  \sum^{T}_{t=1} \eta_t  \prod^T_{j=t+1}\lp 1-  \eta_j \gloo \rpp }.\!
\end{align*} The last inequality provides the stability bound \begin{align}
    &\!\!\!\!\!\Vert W_{T+1} - W^{(i)}_{T+1} \Vert^2_2 \nonumber \\&\!\!\!\!\!\leq \frac{2}{n^2} \sum^{T}_{t=1} \eta_t \lp \Vert   \nabla f(W_t ,z_{i}) \Vert^2_2 + \Vert \nabla f(W^{(i)}_{t} ,z'_{i}) \Vert^2_2 \rpp  \sum^{T}_{t=1} \eta_t \prod^T_{j=t+1}\lp 1-  \eta_j \gloo \rpp . \label{eq:stabi_proff_strongly}
\end{align} Inequality \ref{eq:stabi_proff_strongly} gives that for any $i\in\{1,\ldots,n\}$
\begin{align*}
   &\E[ \Vert W_{T+1} - W^{(i)}_{T+1} \Vert^2_2]\\ 
   &\leq\frac{2}{n^2} \sum^{T}_{t=1} \eta_t \lp \E [\Vert \nabla f(W_t ,z_{i}) \Vert^2_2 ] + \E [\Vert \nabla f(W^{(i)}_{t} ,z'_{i}) \Vert^2_2 ] \rpp  \sum^{T}_{t=1} \eta_t \prod^T_{j=t+1}\lp 1- \eta_j \gloo \rpp \\
   &= \frac{4}{n^2} \sum^{T}_{t=1} \eta_t  \E [\Vert \nabla f(W_t ,z_{i}) \Vert^2_2 ]    \sum^{T}_{t=1} \eta_t \prod^T_{j=t+1}\lp 1-  \eta_j \gloo \rpp
   \\&= \frac{4 \epath }{n^2}     \sum^{T}_{t=1} \eta_t \prod^T_{j=t+1}\lp 1- \eta_j \gloo\rpp .\numberthis\label{eq:expected_stability_bound_st_convex}
\end{align*} 
Recall that $W_{T+1}\equiv A(S)$ and $W^{(i)}_{T+1}\equiv A(S^{(i)})$. Due to space limitation, we  define $\Omega (\eta_t , \gloo )\triangleq \sum^{T}_{t=1} \eta_t \prod^T_{j=t+1}\lp 1- \eta_t \gamma^2_{\text{loo}}/\beta \rpp $ Theorem \ref{lemma:generic_algo_extended} and the inequality \ref{eq:expected_stability_bound_st_convex} give\begin{align*}
    |\eps_{\mathrm{gen}}| &\leq  2\sqrt{ 2\beta (\eterm + \E [ R_S (W^*_S)] ) \eps_{\mathrm{stab}(A)}    }  + 2\beta \eps_{\mathrm{stab}(A)} \label{eq:gen_st_convex_proof} \numberthis\\
    & \leq  2\sqrt{ 2\beta (\eterm + \E [ R_S (W^*_S)] )  \frac{4 \epath }{n^2}      \Omega (\eta_t , \gloo )   }  + 2\beta \frac{4 \epath }{n^2}      \Omega (\eta_t , \gloo ) \\
    & = \frac{4\sqrt{(\eterm + \E [ R_S (W^*_S)] )   \epath}}{n}\sqrt{ 2\beta        \Omega (\eta_t , \gloo )   }  + 8\beta \frac{ \epath }{n^2}      \Omega (\eta_t , \gloo ) .
\end{align*} Under the choice of $\eta_t = C =\frac{2}{\beta+\gamma}< \frac{2}{\beta}$, the inequality \ref{eq:expected_stability_bound_st_convex}, Lemmata \ref{lemma:sum_prod} and \ref{lemma:path_st_convex} and give
\begin{align*}
   \E[ \Vert A(S) - A(S^{(i)}) \Vert^2_2] &\leq \frac{4 \epath }{n^2}     \sum^{T}_{t=1} \eta_t \prod^T_{j=t+1}\lp 1-  \eta_t \gamma_{\text{loo}} \rpp\\
   &=\frac{4 \epath }{n^2}  \underbrace{\frac{1-\lp 1-  \frac{  2 \gloo}{\beta+\gamma} \rpp^{T}}{\gloo}}_{\Lambda (\gloo , T)}\\
   &\leq \frac{4 \epath }{ n^2}\Lambda (\gloo , T)\leq \frac{4 \epath }{ n^2} \min\left\{ \frac{1}{\gloo} , \frac{2T}{\beta} \right\} \numberthis\label{average_stability_strongly}
\end{align*} and the last inequality holds since $\Lambda (\gloo , T)\leq 1/\gloo$ for any $T$ and the monotonicity of $\Lambda (\gloo , T)$ gives $\Lambda (\gloo , T)\leq 2T/\beta$ for any pair $\gloo \leq \gamma$. Though the inequality \ref{eq:gen_st_convex_proof}, we find the generalization error bound
\begin{align*}
    |\eps_{\mathrm{gen}}|&\leq \frac{4\sqrt{(\eterm + \E [ R_S (W^*_S)] )   \epath}}{n}\sqrt{ 2\beta        \Omega (\eta_t , \gloo )   }  + 8\beta \frac{ \epath }{n^2}      \Omega (\eta_t , \gloo ) \\
    &\leq \frac{4\sqrt{(\eterm + \E [ R_S (W^*_S)] )   \epath}}{n}\sqrt{ 2\beta        \Lambda (\gloo , T)   }  + 8\beta \frac{ \epath }{n^2}\Lambda (\gloo , T) \\
    &\leq \frac{4}{n}\sqrt{\lp \frac{\beta}{2}\exp \lp \frac{-4T}{\frac{\beta}{\gamma}+1} \rpp \E [\Vert W_1 - W^*_S \Vert^2_2 ] + \E [ R_S (W^*_S)]\rpp} \\&\quad  \times \sqrt{\lp \frac{4\beta^2 }{\beta +\gamma} \Gamma ( \gamma ,T )     \E [\Vert W_1 - W^*_S \Vert^2_2 ] +\frac{8\beta T}{\beta +\gamma}\E [ R_S (W^*_S)]  \rpp}\sqrt{ 2\beta        \Lambda (\gloo , T)   }  \\& \quad + 8\beta \frac{\frac{4\beta^2 }{\beta +\gamma} \Gamma ( \gamma ,T )     \E [\Vert W_1 - W^*_S \Vert^2_2 ] +\frac{8\beta T}{\beta +\gamma}\E [ R_S (W^*_S)]  }{n^2}\Lambda (\gloo , T) \\
    &\leq   \frac{4}{n}\sqrt{    \lp \frac{4\beta }{\beta +\gamma} \Gamma ( \gamma ,T )     +\frac{8\beta T}{\beta +\gamma}  \rpp  \max\left\{\beta \E [\Vert W_1 - W^*_S \Vert^2_2 ] , \E [ R_S (W^*_S)] \right\} }\\&\quad\times\sqrt{ 2\beta        \Lambda (\gloo , T)   } +\Bigg( \frac{4}{n}\sqrt{ \frac{1}{2}\exp \lp \frac{-4T\gamma}{\beta+\gamma}   \rpp   \lp \frac{4\beta }{\beta +\gamma} \Gamma ( \gamma ,T )     +\frac{8\beta T}{\beta +\gamma}  \rpp}\sqrt{ 2\beta        \Lambda (\gloo , T)   }  \\& \quad + 8\frac{\frac{4\beta }{\beta +\gamma} \Gamma ( \gamma ,T )     ] +\frac{8\beta T}{\beta +\gamma}  }{n^2}\beta \Lambda (\gloo , T)\Bigg) \max \{\beta \E [\Vert W_1 - W^*_S \Vert^2_2 ] , \E [ R_S (W^*_S)] \} \\
    &=   \frac{4}{n}\sqrt{    \lp \frac{4\beta }{\beta +\gamma} \Gamma ( \gamma ,T )     +\frac{8\beta T}{\beta +\gamma}  \rpp  \max\left\{\beta \E [\Vert W_1 - W^*_S \Vert^2_2 ] , \E [ R_S (W^*_S)] \right\} } \\& \quad\times\sqrt{ 2\beta        \Lambda (\gloo , T)   }  +\Bigg( \frac{4}{n}\sqrt{ \exp \lp \frac{-4T}{\frac{\beta}{\gamma}+1}   \rpp   \lp \frac{4\beta }{\beta +\gamma} \Gamma ( \gamma ,T )     +\frac{8\beta T}{\beta +\gamma}  \rpp}\sqrt{ \beta        \Lambda (\gloo , T)   }  \\& \quad + 8\frac{\frac{4\beta }{\beta +\gamma} \Gamma ( \gamma ,T )      +\frac{8\beta T}{\beta +\gamma}  }{n^2}\beta \Lambda (\gloo , T)\Bigg) \max \{\beta \E [\Vert W_1 - W^*_S \Vert^2_2 ] , \E [ R_S (W^*_S)] \}. \numberthis \label{eq:gen_str_convex_proof_prat1}
\end{align*} We proceed by applying the upper bounds of $\Gamma ( \gamma ,T ), \Lambda (\gloo , T)$ as appear in the inequalities \ref{eq:gamma_function_min_bound} and \ref{average_stability_strongly} respectively and \eqref{eq:gen_str_convex_proof_prat1} gives \begin{align*}
    &|\eps_{\mathrm{gen}}|\\&\leq  \frac{4}{n}\sqrt{    \lp \frac{4\beta }{\beta +\gamma} \min \left\{ \frac{1}{e^{\frac{4\gamma}{\beta+\gamma}}-1} , T  \right\}     +\frac{8\beta T}{\beta +\gamma}  \rpp } \\&\quad \times\sqrt{\max\left\{\beta \E [\Vert W_1 - W^*_S \Vert^2_2 ] , \E [ R_S (W^*_S)] \right\} 2\min\left\{ \frac{\beta }{\gloo} , 2T \right\}  }\\&\quad + \Bigg( \frac{4}{n}\exp \lp \frac{-2T\gamma}{\beta+\gamma}   \rpp\sqrt{    \lp \frac{4\beta }{\beta +\gamma} \min \left\{ \frac{1}{e^{\frac{4\gamma}{\beta+\gamma}}-1} , T  \right\}     +\frac{8\beta T}{\beta +\gamma}  \rpp       2 \min\left\{ \frac{\beta }{\gloo} , 2T \right\}   }  \\& \quad + 8\frac{\frac{4\beta }{\beta +\gamma} \min \left\{ \frac{1}{e^{\frac{4\gamma}{\beta+\gamma}}-1} , T  \right\}      +\frac{8\beta T}{\beta +\gamma}  }{n^2}\min\left\{ \frac{\beta }{\gloo} , 2T \right\}\Bigg) \max \{\beta \E [\Vert W_1 - W^*_S \Vert^2_2 ] , \E [ R_S (W^*_S)] \}\\
    &\leq    \frac{4}{n}\sqrt{    \lp \frac{4\beta T }{\beta +\gamma}     +\frac{8\beta T}{\beta +\gamma}  \rpp  \max\left\{\beta \E [\Vert W_1 - W^*_S \Vert^2_2 ] , \E [ R_S (W^*_S)] \right\} 2\min\left\{ \frac{\beta }{\gloo} , 2T \right\}  }\\&\quad + \Bigg( \frac{4}{n}\exp \lp \frac{-2T\gamma}{\beta+\gamma}   \rpp\sqrt{    \lp \frac{4\beta T }{\beta +\gamma}     +\frac{8\beta T}{\beta +\gamma}  \rpp       2 \min\left\{ \frac{\beta }{\gloo} , 2T \right\}   }  \\& \quad + 8\frac{\frac{4\beta T }{\beta +\gamma}       +\frac{8\beta T}{\beta +\gamma}  }{n^2}\min\left\{ \frac{\beta }{\gloo} , 2T \right\}\Bigg) \max \{\beta \E [\Vert W_1 - W^*_S \Vert^2_2 ] , \E [ R_S (W^*_S)] \} \\
    & =   \frac{8\sqrt{3}}{n}\sqrt{    \frac{\beta T }{\beta +\gamma}  \max\left\{\beta \E [\Vert W_1 - W^*_S \Vert^2_2 ] , \E [ R_S (W^*_S)] \right\}2 \min\left\{ \frac{\beta }{\gloo} , 2T \right\}  } \\& +\Bigg( \frac{8\sqrt{3}}{n}\exp \lp \frac{-2T\gamma}{\beta+\gamma}   \rpp\sqrt{     \frac{\beta T}{\beta +\gamma}        2  \min\left\{ \frac{\beta }{\gloo} , 2T \right\}   }  \\& \quad + 96\frac{\frac{\beta T }{\beta +\gamma}         }{n^2}\min\left\{ \frac{\beta }{\gloo} , 2T \right\}\Bigg) \max \{\beta \E [\Vert W_1 - W^*_S \Vert^2_2 ] , \E [ R_S (W^*_S)] \}. \numberthis \label{eq:gen_str_convex_proof_prat2}
\end{align*} 
To simplify the last display, we define the terms $\text{m}(\gloo , T ) \triangleq \beta T         \min\left\{\beta /\gloo , 2T \right\}/(\beta +\gamma)  $ and $\text{M}(W_1 )\triangleq  \max\left\{\beta \E [\Vert W_1 - W^*_S \Vert^2_2 ] , \E [ R_S (W^*_S)] \right\}$  \begin{align*}
    &|\eps_{\mathrm{gen}}|\\
    & \leq   \frac{8\sqrt{6}}{n}\sqrt{ \text{m}(\gloo , T ) M(W_1)   } \\&\quad +  \Bigg( \frac{8\sqrt{6}}{n}\exp \lp \frac{-2T\gamma}{\beta+\gamma}   \rpp\sqrt{    \text{m}(\gloo , T ) }  + \frac{96         }{n^2} \text{m}(\gloo , T ) \Bigg) \text{M}(W_1)\\
    & =\frac{8\sqrt{6}}{n} \Bigg(   \sqrt{  M(W_1)    } +\Bigg( \exp \lp \frac{-2T\gamma}{\beta+\gamma}   \rpp  + \frac{4\sqrt{3}}{n} \sqrt{    \text{m}(\gloo , T ) } \Bigg)\text{M}(W_1) \Bigg) \sqrt{    \text{m}(\gloo , T ) }
\end{align*}
Choose $T=\log (n)(\beta +\gamma)/ 2\gamma$ and define $\text{m}_{n , \gloo}\triangleq \frac{\beta}{2 \gamma } \min\left\{ \frac{\beta }{\gloo} , \frac{\beta+\gamma}{\gamma} \log n \right\}$, then the inequality \ref{eq:gen_str_convex_proof_prat2} gives \begin{align*}
    &|\eps_{\mathrm{gen}}|\\&\leq   \frac{8\sqrt{6\log n}}{n}\sqrt{    \frac{\beta  }{2\gamma}   \max\left\{\beta \E [\Vert W_1 - W^*_S \Vert^2_2 ] , \E [ R_S (W^*_S)] \right\} \min\left\{ \frac{\beta }{\gloo} , \frac{\beta+\gamma}{\gamma} \log n \right\}  } \\&\quad+ \Bigg( \frac{8\sqrt{6}}{n^2}\sqrt{     \frac{\beta \log n }{2\gamma}          \min\left\{ \frac{\beta }{\gloo} , \frac{\beta+\gamma}{\gamma}\log(n) \right\}   }  \\& \quad + \frac{48 \beta}{\gamma}\frac{\log n         }{n^2}\min\left\{ \frac{\beta }{\gloo} , \frac{\beta+\gamma}{\gamma}\log(n) \right\}\Bigg) \max \{\beta \E [\Vert W_1 - W^*_S \Vert^2_2 ] , \E [ R_S (W^*_S)] \}\\
    & = \frac{8\sqrt{6\log n}}{n}\Bigg(  \sqrt{  \text{m}_{n , \gloo}   \text{M}(W_1)  } + \Bigg( \frac{1}{n}\sqrt{  \text{m}_{n , \gloo}   } + \frac{4\sqrt{3}         }{n}\text{m}_{n , \gloo}\Bigg) \text{M}(W_1 ) \Bigg) \\
    & =  \frac{8\sqrt{6\log n}}{n}\Bigg(  \sqrt{   \text{M}(W_1)  } + \frac{ 1 + 4\sqrt{3\text{m}_{n , \gloo}   }}{n} \text{M}(W_1 ) \Bigg)\sqrt{  \text{m}_{n , \gloo}   } \numberthis \label{eq:gen_str_convex_proof_prat3} .
\end{align*} The last inequality completes the proof. \qedwhite

\end{document}